\newcommand{\prob}{\mathbb P}
\newlength{\minipagewidth}
\newcommand{\Real}[1]{ { {\mathbb R}^{#1} } }
\DeclareMathOperator*{\argmin}{argmin}
\newcommand{\field}[1]{\mathbb{#1}}
\newcommand{\R}{\field{R}}
\newcommand{\E}{\field{E}}
\renewcommand{\epsilon}{\varepsilon}
\newcommand{\eps}{\epsilon}
\newcommand{\calZ}{\mathcal{Z}}
\newcommand{\calH}{\mathcal{H}}
\newcommand{\calP}{\mathcal{P}}
\newcommand{\calL}{\mathcal{L}}
\newcommand{\sinc}{{\mathrm{sinc}}}
\newcommand{\One}[1]{\mathds{1}\!\left({#1} \right)}
\newtheorem{lemma}{Lemma}
\newtheorem{theorem}{Theorem}
\newtheorem{remark}{Remark}
\newtheorem{corollary}{Corollary}
\newtheorem{example}{Example}
\newtheorem{assumption}{Assumption}
\newtheorem{definition}{Definition}
\newtheorem{proposition}{Proposition}
\title{\bf Risk Analysis and Design \\ Against Adversarial Actions}
\author{
	Marco C. Campi, Algo Car\`e, Luis G. Crespo, \\ 
	\renewcommand\footnotemark{}
	Simone Garatti, and Federico A. Ramponi\thanks{
		Marco C. Campi, Algo Car\'e, and Federico Ramponi are with the Dipartimento di Ingegneria dell'Informazione -- Universit\`a di Brescia, via Branze 38, 25123 Brescia, Italia. E-mail: {\sf \{marco.campi, algo.care, federico.ramponi\}@unibs.it}; Luis G. Crespo is with the NASA Langley Research Center, MS 308, Hampton, VA, 23681-2199, USA. E-mail: {\sf luis.g.crespo@nasa.gov}; Simone Garatti is with the Dipartimento di Elettronica, Informazione e Bioingegneria -- Politecnico di Milano, piazza Leonardo da Vinci 32, 20133 Milano, Italia. E-mail: {\sf simone.garatti@polimi.it}.}
}
\begin{document}

\date{}
\maketitle

\begin{abstract}
	Learning models capable of providing reliable predictions in the face of adversarial actions has become a central focus of the machine learning community in recent years. This challenge arises from observing that data encountered at deployment time often deviate from the conditions under which the model was trained. In this paper, we address deployment-time adversarial actions and propose a versatile, well-principled framework to evaluate the model's robustness against attacks of diverse types and intensities. While we initially focus on Support Vector Regression (SVR), the proposed approach extends naturally to the broad domain of learning via relaxed optimization techniques. Our results enable an assessment of the model vulnerability without requiring additional test data and operate in a distribution-free setup. These results not only provide a tool to enhance trust in the model's applicability but also aid in selecting among competing alternatives. Later in the paper, we show that our findings also offer useful insights for establishing new results within the out-of-distribution framework. 
\end{abstract}

\noindent
\textbf{Keywords:} Adversarial Learning, Statistical Risk, Learning through Optimization, Support Vector Methods, Statistical Learning Theory.

\section{Introduction}

Recent research demonstrates that machine learning models can be vulnerable to \emph{adversarial examples}. For instance, \cite{Szegedy2014} and \cite{Biggio2013} show that even state-of-the-art neural networks trained on ``clean'' examples can be prone to misinterpret inputs subjected to even slight perturbations. Although misleading adversarial examples can vary with the architecture of the model and the set on which the model has been trained, it has been also recognized that diverse models with different architectures and training sets often misclassify the same adversarial examples, \cite{Szegedy2014}. A comprehensive review of adversarial attacks is provided in \cite{oprea2024adversarial}, which classifies the attacks into two categories: \emph{deployment-time} and \emph{training-time}. Training-time attacks refer to perturbations of the examples used for training, while deployment-time attacks involve perturbations at the time the model is used. For broad overviews, see the recent surveys \cite{Bai2021} and \cite{bajaj2024state}, with the second more specifically targeting image classification.  \\ 

To mitigate deployment-time vulnerabilities, it has been proposed to include artificially generated examples that mimic adversarial actions in the training set, \cite{Goodfellow2015,madry2018towards}, an approach termed ``adversarial training''. Adversarial training has been linked to robust optimization in \cite{SHAHAM2018}, and \cite{Maggioni2024,singla2020survey,anderson2022overview} further discuss robust optimization as a technique for data-driven model robustification. On the other hand, critical evaluations suggest that adversarial training may promote more severe overfitting, leading to increased gaps between training and test accuracy, \cite{Bai2021,Schmidt}. Other works, such as \cite{tsipras2018robustness} and \cite{zhang2019theoretically}, argue that adversarial training can also worsen non-adversarial classification accuracy; similar implications have been investigated for linear regression in \cite{Javanmard2020,ribeiro2024regularization,ribeiro2023overparameterized}. These critiques highlight the need of well-principled methodologies to assess the \emph{risk} (the probability of making mistakes) associated with alternative training strategies, to increase trust and guide selection among them. Although tools like Rademacher complexity, \cite{yin2019rademacher}, and VC dimension, \cite{cullina2018pac}, have been explored for this purpose, theoretical advancements remain limited, leaving an open field for further investigation. \\ 

In this paper, we consider deployment-time attacks and study the ensuing risk using a methodology that is highly structured mathematically. Initially, we focus on Support Vector Regression (SVR), a widely-used regression technique, and then show that our theoretical achievements generalize to the broad framework of \emph{learning via relaxed optimization} techniques. While relaxed optimization is foundational in several Support Vector methods, it also covers vast domains in \emph{decision-making}. Our main contributions are as follows: 
\begin{itemize}
	\item[\bf (i)] we introduce a new, rigorous methodology for evaluating the risk associated with adversarial attacks based on the notion of \emph{complexity}. The user is also allowed to test multiple choices for the adversarial actions, enabling robustness checks against adversarial attacks of varying strengths and types; \\
	\vspace*{-4mm}
	\item[\bf (ii)] the user may robustify the design by perturbing the training examples in a neighborhood of their nominal value. For practical implementation, it is suggested in this paper that the number of perturbed examples be finite for any given example in the training set. Importantly, the proposed risk estimation methodology remains rigorously valid for any envisaged adversarial action, even though robustification only considers a finite perturbation set. This decoupling of algorithmic implementation from risk assessment is a key feature for the applicability of our method. 
\end{itemize}
These results are enabled by a theoretical framework that delves deeply into far-reaching connections between the concepts of risk and complexity, as precisely explained in the paper. \\

As a final contribution:
\begin{itemize}
	\item[\bf (iii)] we show that our adversarial results open new avenues for the study of \emph{out-of-distribution} risk, where training and deployment data are generated according to two distinct distributions. As an example of application, one can think of data generated by a simulator to design a device for use in an uncertain environment. In this context, our results take a significant departure from previous findings, offering a novel and fruitful approach to the problem.
\end{itemize} 

The paper is organized as follows. Support Vector Regression is considered in Section \ref{sec:SVR}, with applications examples (both simulated and with real data) in Section \ref{sec:examples}. Section \ref{sec:opt-relax} deals with learning via relaxed optimization, while the study of the risk for out-of-distribution observations is addressed in Section \ref{sec:ambiguous}. All proofs are postponed until Section \ref{sec:derivations}. 

\section{Adversarial Support Vector Regression}
\label{sec:SVR} 
In this section, we consider predictors built using Support Vector Regression (SVR), and present a theory for the evaluation of the probability with which they make mistakes in the presence of adversarial actions. \\

As detailed in Section \ref{section SVR}, SVR constructs a ``band predictor'': corresponding to each value of an observed input variable $u \in \mathbb{R}^d$, the band predictor returns an evaluation interval for the corresponding output variable $y \in \mathbb{R}$. More specifically, the version of SVR we consider here is the \emph{adjustable-size} SVR introduced in \cite{scholkopf1998shrinking}, and the reader is referred to this reference for a more comprehensive presentation. Paper \cite{Campi21ml} studies the reliability of SVR in a standard setup without adversarial actions. \\

Throughout, ${\cal D} = \{(u_i,y_i)\}_{i=1}^N$ is the {\em training set} used to learn the predictor. Data points $(u_i,y_i)$ are independent draws from a common probability distribution $\prob$ over $\mathbb{R}^d\times\mathbb{R}$ (i.e., they form an i.i.d. -- independent and identically distributed -- sample). In line with  \cite{Campi21ml}, $\mathbb{P}$ is unknown to the user, who has only access to the training set to learn the predictor. As in \cite{Campi21ml}, the only assumption that is made on $\mathbb{P}$ is that, given $u$,  the values of $y$ do not accumulate, as formally defined in the following assumption. 

\begin{assumption}
	\label{non-acc}
	With probability $1$, the regular conditional distribution of $y$ given $u$ admits a density. \hfill$\star$
\end{assumption}

To keep the presentation simple and better focus on the conceptual aspects, we will refer to linear regression in the following. However, we mention that all the results readily extend to the case in which the data are ``lifted'' into a feature space, as is commonly done in machine learning problems using the Reproducing Kernel Hilbert Space (RKHS) technique. Further details on this extension are provided in Remarks \ref{rmk:kernel_trick} and \ref{rmk:kernel_trick_2}. 
\subsection{SVR in the non-adversarial case}
\label{section SVR}
To position our results, we feel advisable to first recall how SVR works in a non-adversarial setup. A SVR predictor is defined by three parameters $w \in \mathbb{R}^d$, $b \in \mathbb{R}$, $\gamma \in \mathbb{R}^+$ (non-negative reals), which we collectively denote as $\theta := (w,b,\gamma)$. A value of $\theta$ defines a \emph{band predictor} $\mathcal{P}(\theta)$ by the following rule 
$$
\mathcal{P}(\theta)=\{(u,y): |y-w^\top u-b|\leq \gamma\}.
$$
Thus, $\mathcal{P}(\theta)$ includes the values of $y$ that deviate from function $w^\top u+b$ no more than $\gamma$. In SVR with adjustable size, the parameter $\theta$ is trained on $\cal D$ by solving the following optimization program ($\tau$ and $\rho$ are two positive hyper-parameters whose value is set by the user):  
\begin{align}
	\label{svr_nominal}
	\min_{w \in \mathbb{R}^d, b \in \mathbb{R}, \gamma \geq 0 \atop \xi_i \geq 0, i=1,\ldots,N} & \quad  \gamma + \tau \| w \|^2 + \rho \sum_{i=1}^N \xi_i \\
	\textrm{\rm subject to:} & \quad |y_i  -  w^\top u_i  - b | - \gamma \leq \xi_i, \ \ i = 1, \ldots ,N. \nonumber
\end{align}
In \eqref{svr_nominal}, the variables $\xi_i$ are used to relax the requirement that all data points lie within the prediction band. Leaving a data point outside corresponds to a penalty in the cost function equal to the vertical distance of the data point from the prediction band (as computed by formula $|y_i  -  w^\top u_i  - b | - \gamma$) multiplied by a user-chosen coefficient $\rho$. It is well known (see, e.g., the discussion presented after Assumption 4 in \cite{Campi21ml}) that \eqref{svr_nominal} certainly admits a solution; when multiple solutions exist, in \cite{Campi21ml} it is suggested to break the tie by selecting the smallest $\gamma^\ast$ and, then, the $b^\ast$ with smallest absolute value ($w^\ast$ is instead always unique). The same tie-break rule is also adopted in this paper when dealing with an adversarial setup.  \\

Denoting by $\theta^\ast$ the solution to \eqref{svr_nominal}, the SVR-trained predictor ${\cal P}(\theta^\ast)$ has been analyzed in \cite{Campi21ml} in relation to the concepts of misprediction and risk given in the following definition. 
\begin{definition}[Misprediction and Risk]
	\label{def:MispredictionAndRisk}
	A predictor $\mathcal{P}(\theta)$ mispredicts $(u,y)$ if 
	$$|y-w^\top u-b|> \gamma$$
	(or, in more compact form, if $(u,y)\notin \mathcal{P}(\theta)$). \\ 
	The {\em risk} of a predictor $\mathcal{P}(\theta)$, denoted by $\textnormal{Risk}(\theta)$, is defined as its probability of misprediction, i.e.,
	$$\textnormal{Risk}(\theta):=\prob \{(u,y)\,:\,|y- w^\top u -b|> \gamma\}$$ 
	(or, in more compact form, $\textnormal{Risk}(\theta):=\prob \{(u,y)\notin \mathcal{P}(\theta)\}$).
	\hfill$\star$
\end{definition}

In \cite{Campi21ml}, a method has been proposed to estimate $\textnormal{Risk}(\theta^\ast)$ using a statistic of the training set ${\cal D}$ called ``complexity''. Evidence is provided that this estimation is accurate, while it does not require any prior knowledge of $\mathbb{P}$. These results show that the data may stand a dual role: (i) training the predictor; while also (ii) providing an accurate evaluation of the ensuing risk. As discussed in \cite{Campi21ml}, such results not only furnish a rigorous ground for an assessment of reliability, they also provide a solid framework for comparing multiple choices of the hyper-parameters and make a selection of their value. 

\begin{remark}[lifting into a feature space] \label{rmk:kernel_trick}  A simple but powerful extension of \eqref{svr_nominal} can be obtained thanks to a lifting into a feature space. To this end, one considers a feature map $\varphi(\cdot)$ that sends the raw measurements $u_i \in \mathbb{R}^d$ into a feature space $\Phi$ with the structure of a Hilbert space. In this context, the training of a SVR is carried out like in \eqref{svr_nominal}, with the only difference that now $w \in \Phi$, and $w^\top u_i$ is replaced by the inner product $\langle w, \varphi(u_i)\rangle$. Interestingly, all operations involved in finding the solution do not ever require to explicitly evaluate $\varphi(u_i)$. Indeed, as shown, e.g., in \cite{Campi21ml}, the optimal solution $w^\ast$ is always obtained as a linear combination of the $\varphi(u_i)$'s, so that in \eqref{svr_nominal} optimization can be confined to considering solutions of the type $w = \sum_k \alpha_k \varphi(u_k)$, $k=1,\ldots,N$. Then, one obtains $\| w \|^2 = \langle \sum_k \alpha_k \varphi(u_k), \sum_ {k'} \alpha_{k'} \varphi(u_{k'}) \rangle = \sum_k \sum_{k'} \alpha_k \alpha_{k'} \langle \varphi(u_k), \varphi(u_{k'}) \rangle$, while the constraints can be rewritten as $|y_i  -  \sum_k \alpha_k \langle \varphi(u_k), \varphi(u_i) \rangle   - b | - \gamma \leq \xi_i$ where all quantities $\langle \varphi(u_k), \varphi(u_{k'}) \rangle$ as well as $\langle \varphi(u_k), \varphi(u_i) \rangle$ can be re-written for short as $K(u_k,u_{k'})$ and $K(u_k,u_i)$. This function $K(\cdot,\cdot)$ is called the ``kernel'', and in actual facts only the kernel needs to be known to carry out the calculations. Additionally, one does not even need to explicitly assign a feature map $\varphi(\cdot)$ and an inner product $\langle \cdot, \cdot \rangle$ from which $K(\cdot,\cdot)$ is obtained by composition. In fact, one can start off by directly assigning a positive definite $K(\cdot,\cdot)$ because theoretical results in Reproducing Kernel Hilbert Spaces ensure that this always implicitly corresponds to allocate a suitable couple $\langle \cdot, \cdot \rangle$ and $\varphi(\cdot)$ for which it holds that $K(\cdot,\cdot) = \langle \varphi(\cdot), \varphi(\cdot) \rangle$. The reader is referred to \cite{Scho98} for more details. 
	\hfill$\star$ 
\end{remark}

\subsection{SVR in an adversarial setup}

We next consider an adversarial setup where a point $(u,y)$ comes with an {\em adversarial region} $A_{(u,y)}\subseteq \mathbb{R}^d\times\mathbb{R}$, and it is desirable that all the points in the region $A_{(u,y)}$ belong to the SVR prediction band. To streamline the presentation, we will focus on the case in which $A_{(u,y)}$ has a fixed shape determined by a set $A\subseteq \mathbb{R}^d\times\mathbb{R}$, shifted according to $(u,y)$:
$$ 
A_{(u,y)}:=\{(u+d_{u},y+d_{y}) \mbox{ with } (d_{u},d_{y})\in A \},
$$ 
or, in more compact form, $A_{(u,y)}=(u,y)+A$. Our results can be easily extended to other choices of $A_{(u,y)}$ that allow for the shape and the size of the adversarial region to depend on the point $(u,y)$; additional discussion is provided in Remark \ref{rmk:generic_adv_set} in Section \ref{sec:opt-relax-theory}. \\ 

\begin{remark}[about the structure of regions $A_{(u,y)}$] In prediction, as well as in classification, adversarial regions often involve perturbing only the input values: the $y$ value, whether continuous or discrete, is estimated from corrupted inputs $u$. This situation is widely found in the literature. For instance, in image recognition one aims to classify cases within specific categories based on images that may have been altered. Critical applications are found in the classification of facial biometric systems, \cite{Sharif2016,Ryu2021}, and in healthcare, where instrumental images are used for diagnosis purposes, \cite{han2020deep,puttagunta2023adversarial}. Our framework, here and in subsequent sections, allows for general adversarial regions that include the case of perturbed inputs, as well as perturbed outputs and other situations of interest. For example, later in Section \ref{sec:opt-relax-apps}, we briefly refer to Support Vector Data Description (SVDD), a technique used to categorize cases of interest. To give a concrete example, suppose that traffic warning signs are photographed in a room from various angles and distances, and SVDD is used to create a class of images associated to the category ``traffic warning sign''. This category can then be loaded into an unmanned car with the purpose of recognizing a warning sign when the car approaches one, and this operation has to be effective even if the warning sign in the street has been perturbed, for example by a sticker attached to it. The flexibility of our framework, as introduced in Section \ref{sec:opt-relax-theory}, also covers this situation.  \hfill $\star$ 
\end{remark}

The following definitions take center stage in our study. 

\begin{definition}[Adversarial misprediction and Adversarial risk]
	\label{def:AdvMispredictionAndAdvRisk}
	A predictor $\mathcal{P}(\theta)$ adversarially mispredicts $(u,y)$ if 
	$$\mbox{there exists a }(\tilde{u},\tilde{y})\in A_{(u,y)}\mbox{ such that }|\tilde{y}-w^\top\tilde{u}-b|> \gamma $$
	(or, in more compact form, if $A_{(u,y)}\nsubseteq \mathcal{P}(\theta)$). \\
	The {\em adversarial risk} of a predictor $\mathcal{P}(\theta)$, denoted $\textnormal{Risk}_A(\theta)$, is the probability of adversarial misprediction, i.e.,
	$$
	\textnormal{Risk}_A(\theta):=\prob \{ (u,y):  \exists (\tilde{u},\tilde{y})\in A_{(u,y)} \mbox{ such that } |\tilde{y}- w^\top\tilde{u}-b|> \gamma  \}
	$$
	(or, in more compact form,   $\textnormal{Risk}_A(\theta):=\prob \{ A_{(u,y)}\nsubseteq \mathcal{P}(\theta)\}$).\hfill$\star$
\end{definition}

Definition \ref{def:AdvMispredictionAndAdvRisk} coalesces to Definition \ref{def:MispredictionAndRisk} when $A=\{0\}$ so that $A_{(u,y)}=(u,y)+\{0\}=\{(u,y)\}$. Thus, the symbol  $\textnormal{Risk}(\theta)$ can be used as a shorthand for $\textnormal{Risk}_{\{0\}}(\theta)$. \\ 

We shall provide results to accurately upper and lower bound the adversarial risk without any additional information behind the use of the training set.  Before making this statement rigorous in the form of a theorem, we generalize the algorithm \eqref{svr_nominal} so as to robustify SVR predictors against adversarial actions. Our results will make reference to this generalization, which contains \eqref{svr_nominal} as a particular case. \\

In principle, a predictor that is more robust against adversarial actions could be obtained by replacing each $i$-th constraint in \eqref{svr_nominal}, i.e., 
$$
|y_i  - w^\top u_i   - b | - \gamma \leq \xi_i,
$$ 
with its adversarial counterpart
$$
|\tilde{y}  - w^\top\tilde{u}   - b | - \gamma \leq \xi_i, \quad \forall(\tilde{u},\tilde{y})\in A_{(u_i,y_i)}.
$$
However, $A$ contains typically infinitely many points, and this formulation would yield a semi-infinite optimization problem, which is known to be much harder to solve than \eqref{svr_nominal}. Thus, we consider the computationally tractable approach of replacing  $A_{(u,y)}$ with a finite subset $\widehat{A}_{(u,y)}=(u,y)+\widehat{A}$, where  $\widehat{A}=\{(d^{(j)}_{u},d^{(j)}_{y})\}_{j=1}^M$ is a finite approximation of $A$ formed by $M$ points taken from $A$.\footnote{For a relaxation of the requirement that $\widehat{A}$ is contained in $A$, a relaxation that is useful in various contexts later explained in the paper, see Section \ref{sec:outersample}.} Figure \ref{fig:adv_sets_approx} depicts examples of possible choices of $\widehat{A}$.
\begin{figure}
	\centering
	\includegraphics[width=0.8\columnwidth]{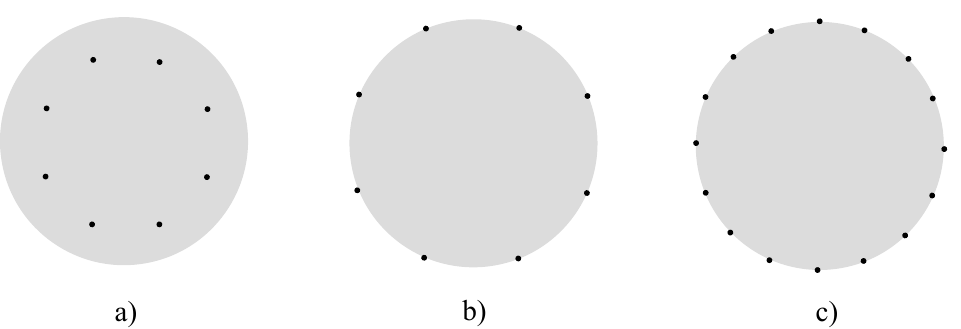}
	\caption{Some choices of $\widehat{A}$ for a ball-shaped adversarial region $A$ (grey area): a) $\widehat{A}$ is in the interior of $A$, which returns less conservative solutions; b) $\widehat{A}$ is tuned to $A$; c) $\widehat{A}$ is a more dense finite set tuned to $A$. 
	}
	\label{fig:adv_sets_approx}
\end{figure}
In introducing this simplification, we are supported by theoretical results that, in spite of the heuristic nature of using $\widehat{A}_{(u,y)}$ in place of $A_{(u,y)}$, provide rigorous evaluations of the risk for the original adversarial region $A_{(u,y)}$. In what follows, we will also use the notation $({\tilde{u}}^{(j)},\tilde{y}^{(j)})$ to indicate the elements of $\widehat{A}_{(u,y)}$, i.e., ${\tilde{u}}^{(j)}:=u+d_u^{(j)}$ and ${\tilde{y}}^{(j)}:=y+d_y^{(j)}$,  $j=1,2,\ldots,M$.  \\

For a given $\cal D$ and a choice of $\widehat{A}$, the adversarially-oriented optimization program is written as follows 
\begin{align}
	\label{svr}
	\min_{w \in \mathbb{R}^d, b \in \mathbb{R}, \gamma \geq 0 \atop \xi_i \geq 0, i=1,\ldots,N} & \quad  \gamma + \tau \| w \|^2 + \rho \sum_{i=1}^N \xi_i \\
	\textrm{\rm subject to:} & \quad |\tilde{y}_i^{(j)} - w^\top \tilde{u}_i^{(j)}  - b | - \gamma \leq \xi_i, \ \,\,j=1,\ldots, M; \;  i = 1, \ldots N. \nonumber
\end{align}
Program \eqref{svr} has a finite number of constraints like the original optimization program \eqref{svr_nominal}, and it can therefore be easily solved with standard numerical solvers. In what follows, we denote by $\theta^\ast_{\widehat{A}} =(w^\ast_{\widehat{A}}, b^\ast_{\widehat{A}}, \gamma^\ast_{\widehat{A}}$) the parameter obtained from \eqref{svr} after possibly breaking ties as indicated above for program \eqref{svr_nominal}. $\mathcal{P}(\theta^\ast_{\widehat{A}})$ is the corresponding predictor. Note also that \eqref{svr_nominal} is recovered from \eqref{svr} when $\widehat{A}=\{ 0\}$; thus, the symbol $\theta^\ast$  can be used as a shorthand for $\theta^\ast_{\{0\}}$.  \\
	
	As compared with \cite{Campi21ml}, the adversarial setup of this section presents two extensions: 
	\begin{itemize}
		\item[\bf (i)] the risk is evaluated with respect to the adversarial region defined through $A$. The shape of this region is dictated by the problem at hand and the user may also want to test out various choices of $A$ to see how robust the design is against adversarial actions of various strengths and types;  
		\item[\bf (ii)] the user may robustify the design by selecting a suitable $\widehat{A}$. Only the choice of $\widehat{A}$ has an impact at an algorithmic level and, normally, $\widehat{A}$ is tuned to a set $A$ that, in the user's mind, captures, and suitably describes, possible adversarial actions. Still, we remark that our results hold true for any choice of $\widehat{A}$ and $A$ (with $\widehat{A} \subseteq A$), so accommodating situations in which, e.g., the user envisages adversarial actions of a certain type and, yet, he is willing to theoretically test the robustness of the design against actions of higher magnitude. One simple example of this situation occurs when the design is done without any adversarial concern (i.e., $\widehat{A} = \{0\}$) and still one wants to test how robust the design is against potential adversarial actions. 
	\end{itemize}
	
	The next section offers a rigorous evaluation, with bounds from above and from below, of the quantity $\textnormal{Risk}_A(\theta^\ast_{\widehat{A}})$, which is the adversarial risk of $\mathcal{P}(\theta^\ast_{\widehat{A}})$. This result represents a notable achievement, also in consideration of the fact that the concept of adversarial risk refers to the whole adversarial regions $A_{(u,y)}$, while training ${\cal P}(\theta^\ast_{\widehat{A}})$ involves considering only the approximated regions $\widehat{A}_{(u_i,y_i)}$. Key to this achievement is the determination of a suitable statistic of the training set, which we call ``adversarial complexity'', from which the adversarial risk can be accurately estimated. 
	
	\begin{remark}[follow-up on Remark \ref{rmk:kernel_trick} about lifting the data into a feature space] \label{rmk:kernel_trick_2}
		
		Similarly to \eqref{svr_nominal}, the adversarially-oriented program \eqref{svr} can be generalized by introducing a lifting $\varphi(\cdot)$, leading to program
		\begin{align} \label{adv-svr-lifted}
			\min_{w \in \calH, b \in \mathbb{R}, \gamma \geq 0 \atop \xi_i \geq 0, i=1,\ldots,N} & \quad  \gamma + \tau \| w \|^2 + \rho \sum_{i=1}^N \xi_i \\
			\textrm{\rm subject to:} & \quad |\tilde{y}_i^{(j)} - \langle w , \varphi(\tilde{u}_i^{(j)}) \rangle  - b | - \gamma \leq \xi_i, \ \,\,j=1,\ldots, M; \;  i = 1, \ldots N, \nonumber
		\end{align}
		which gives the predictor $\mathcal{P}(\theta^\ast_{\widehat{A}}) = \left\{ (u,y): |y-\langle w^\ast_{\widehat{A}} , \varphi(u) \rangle- b^\ast_{\widehat{A}} | \leq \gamma^\ast_{\widehat{A}} \right\}$, where $(w^\ast_{\widehat{A}}, \gamma^\ast_{\widehat{A}}, b^\ast_{\widehat{A}}, \xi_{i,\widehat{A}}^\ast)$ is the solution to \eqref{adv-svr-lifted}. \\
		As in Remark \ref{rmk:kernel_trick}, it is easy to show that the optimal $w^\ast_{\widehat{A}}$ is always obtained as a linear combination of the $\varphi(\tilde{u}_i^{(j)})$'s, so one can search for solutions of the type $w = \sum_{k,h} \alpha_{k,h} \varphi(\tilde{u}_k^{(h)})$, $h=1,\ldots,M$, $k=1,\ldots,N$.
		Introducing the kernel $K(\cdot,\cdot) = \langle \varphi(\cdot), \varphi(\cdot) \rangle$, this leads to the following finite-dimensional rewriting of program \eqref{adv-svr-lifted}
		\begin{align}
			\label{svr2}
			\min_{\alpha_{k,h} \in \mathbb{R},  h = 1,\ldots,M, k=1,\ldots,N \atop  b \in \mathbb{R},\, \gamma \geq 0,\,\xi_i \geq 0, i=1,\ldots,N} & \quad  \gamma + \tau \sum_{k,h} \sum_{k',h'} \alpha_{k,h} \alpha_{k',h'} K(\tilde{u}_k^{(h)},\tilde{u}_{k'}^{(h')}) + \rho \sum_{i=1}^{N} \xi_i \\
			\textrm{\rm subject to:} & \quad |\tilde{y}_i^{(j)} - \sum_{k,h} \alpha_{k,h} K(\tilde{u}_k^{(h)},\tilde{u}_i^{(j)}) - b | - \gamma \leq \xi_i, \ \,\,j=1,\ldots, M; \;  i = 1, \ldots N,\nonumber
		\end{align}
		while the predictor $\mathcal{P}(\theta^\ast_{\widehat{A}})$ can be computed from the solution $(\alpha^\ast_{k,h,\widehat{A}}, \gamma^\ast_{\widehat{A}}, b^\ast_{\widehat{A}}, \xi_{i,\widehat{A}}^\ast)$ of \eqref{svr2} as
		$$
		\mathcal{P}(\theta^\ast_{\widehat{A}})=\left\{ (u,y): |y-\sum_{k,h} \alpha^\ast_{k,h,\widehat{A}} K(\tilde{u}_k^{(h)}, u)- b^\ast_{\widehat{A}} | \leq \gamma^\ast_{\widehat{A}} \right\}.
		$$
		All the results in the following Sections \ref{sec:bounding_adv_risk} and \ref{sec:outersample} are, for the sake of simplicity, presented in the specific setup of \eqref{svr}. However, these results also apply \emph{mutatis mutandis} to predictors $\mathcal{P}(\theta^\ast_{\widehat{A}})$ obtained from \eqref{adv-svr-lifted}. The proofs of the results in Sections \ref{sec:bounding_adv_risk} and \ref{sec:outersample} are given in Section \ref{sec:derivations}, while the applicability of these results to \eqref{adv-svr-lifted} follows from Theorems \ref{th:general1} and \ref{th:general2} in Section \ref{sec:opt-relax}, a section presenting the fairly broad setup of \emph{learning through optimization}, which covers \eqref{adv-svr-lifted} as a specific instance.	\hfill$\star$ 
	\end{remark}
	
	\subsection{Bounding the adversarial risk} \label{sec:bounding_adv_risk}
	
	The adversarial complexity, as defined below, is a quantity that can be computed from the training set $\cal D$, using $A$ and $\widehat{A}$. In statistical language, it is termed a \emph{statistic of the training set $\cal D$}. In light of the main Theorem \ref{main_th_standard} stated below, this quantity plays a key role in the evaluation of the risk. 
	
	\begin{definition}[Adversarial complexity]
		\label{def:adv_complex}
		The adversarial complexity of ${\mathcal P}(\theta^\ast_{\widehat{A}})$, denoted $s_{{A},\widehat{A}}^\ast$, is the number of data points $(u_i,y_i)$ that satisfy at least one of the following two conditions  
		\begin{itemize}
			\item[(i)] $|\tilde{y}_i- {w^\ast_{\widehat{A}}}^\top \tilde{u}_i-b^\ast_{\widehat{A}}|>\gamma^\ast_{\widehat{A}}$ for at least one $(\tilde{u}_i,\tilde{y}_i)\in A_{(u_i,y_i)}$;\footnote{For a discussion on a practical evaluation of (i), see Section \ref{sec:set_con}.}	
			\item[(ii)] $|\tilde{y}_i^{(j)}- {w^\ast_{\widehat{A}}}^\top \tilde{u}_i^{(j)}-b^\ast_{\widehat{A}}|=\gamma^\ast_{\widehat{A}}$ for at least one $(\tilde{u}_i^{(j)},\tilde{y}_i^{(j)}) \in\widehat{A}_{(u_i,y_i)}$.
		\end{itemize}\hfill$\star$
	\end{definition}
	Data points satisfying (i) correspond to mispredictions in the training set. Therefore, through (i) one evaluates the {\em empirical adversarial risk}. On the other hand, the empirical adversarial risk alone does not serve as an effective means to assess $\textnormal{Risk}_{A}(\theta^\ast_{\widehat{A}})$ because the trained predictor can \emph{overfit} the training set. It is therefore reasonable that the empirical adversarial risk needs to be complemented with a quantity measuring the level of adjustment of the predictor to the training set; such a quantity is provided by (ii), which considers the points that ``touch'' the border of the prediction band. \\ 
	
	In preparation of the main theorem, we need to define two functions, $\overline{\eps}(k)$ and $\underline{\eps}(k)$, from $k \in \{0,1,\ldots,N\}$ to $[0,1]$, that will be used to bound the adversarial risk based on $s^\ast_{A,\widehat{A}}$: $\overline{\eps}(s^\ast_{A,\widehat{A}})$ and $\underline{\eps}(s^\ast_{A,\widehat{A}})$ will be, respectively, the upper and lower bound on the adversarial risk. Interestingly, these functions are the same as those used in \cite{Campi21ml} (even though, in \cite{Campi21ml}, they are not computed with the adversarial complexity as their argument). It is a fact that the theory of this paper covers, and aptly generalizes, the non-adversarial theory of \cite{Campi21ml}, which can be recovered by the choices $A = \{0\}$ and $\widehat{A} = \{0\}$. Defining $\overline{\epsilon}(k)$ and  $\underline{\epsilon}(k)$ requires that the user chooses a parameter $\beta\in(0,1)$; as we shall see, the value $1-\beta$ plays the role of a confidence and $\beta$ is often set to a very low value (e.g., $10^{-6}$).
	
	\begin{definition}[Risk-bounding functions $\overline{\eps}(k)$ and $\underline{\eps}(k)$] Given a value in $(0,1)$ of $\beta$ (confidence parameter), for any $k=0,1,\ldots,N-1$ consider the polynomial equation in the $t$ 	variable
		\begin{equation}
			\label{pol_eq-for-eps(k)-relax}
			{N \choose k}t^{N-k} - \frac{\beta}{2N} \sum_{i=k}^{N-1} {i \choose
				k}t^{i-k} - \frac{\beta}{6N} \sum_{i=N+1}^{4N} {i \choose k}t^{i-k}  = 0,
		\end{equation}
		and, for $k = N$, consider the polynomial equation in the $t$ variable
		\begin{equation}
			\label{pol_eq-for-eps(N)-relax}
			1 - \frac{\beta}{6N} \sum_{i=N+1}^{4N} {i \choose N}t^{i-N}  = 0.
		\end{equation}
		In Section 6.1 of \cite{Garatti22} it is shown that, for any $k=0,1,\ldots,N-1$, equation 	\eqref{pol_eq-for-eps(k)-relax} has exactly two solutions in $[0,+\infty)$, which we denote with $\underline{t}(k)$ and	$\overline{t}(k)$ ($\underline{t}(k) \leq \overline{t}(k)$); instead, equation \eqref{pol_eq-for-eps(N)-relax} has only one solution in $[0,+\infty)$, which we denote with $\overline{t}(N)$, while we define	$\underline{t}(N) = 0$.\\ 
		Functions $\overline{\eps}(k)$ and $\underline{\eps}(k)$ are defined as follows: $\overline{\eps}(k) := 1 - \underline{t}(k)$ and $\underline{\eps}(k) := \max \{0,1 - \overline{t}(k)\}$, $k=0,1,\ldots,N$. \hfill$\star$
	\end{definition}
	The zeros of \eqref{pol_eq-for-eps(k)-relax} and \eqref{pol_eq-for-eps(N)-relax} can be efficiently computed using the numerical procedure given in the Appendix B.2 of \cite{CompressionGeneralizationandLearning2023}. For evaluations of $\overline{\eps}(k)$ and $\underline{\eps}(k)$ for specific values of $N$ and $\beta$, the reader is also referred to \cite{Campi21ml}. Moreover, the following explicit formulas, whose derivation can be found in \cite{CompressionGeneralizationandLearning2023}, help gain insight on how functions $\overline{\eps}(k)$ and $\underline{\eps}(k)$ behave for increasing values of the sample size $N$: for any $N$ and all $k \in \{0,1,\ldots,N\}$, it holds that
	\begin{align*}
		\overline{\eps}(k) & \leq \frac{k}{N}+2\frac{\sqrt{k+1}}{N}\left(\sqrt{\ln({k+1})}+4\right)+ 2\frac{\sqrt{k+1}\sqrt{\ln{\frac{1}{\beta}}}}{N}+\frac{\ln{\frac{1}{\beta}}}{N},  \\
		\underline{\eps}(k) & \geq \frac{k}{N}-3\frac{\sqrt{k+1}}{N}\left(\sqrt{\ln({k+1})}+2\right)-3\frac{\sqrt{k+1}\sqrt{\ln{\frac{1}{\beta}}}}{N}.
	\end{align*}
	These formulas show that the upper bound and the lower bound merge on the line $k/N$ as $N$ tends to infinity at a rate that goes to zero as $1/N$ for any fixed $k$ and as $\sqrt{\ln(N)}/\sqrt{N}$ uniformly over $k \in \{0,1,\ldots,N\}$. \\ 
	
	We are now ready to state the main result of this section: for any possible choice of $A$ and of $\widehat{A} \subseteq A$, the {\em adversarial risk} of the predictor  ${\cal P}(\theta^\ast_{\widehat{A}})$ belongs to the interval $[\underline{\eps}(s_{A,\widehat{A}}^\ast), \overline{\eps}(s_{A,\widehat{A}}^\ast)]$ with high confidence $1-\beta$. The confidence indicates an upper bound on the probability of observing a ``poor'' training set, one which leads to an inaccurate assessment of the actual risk. 
	The result holds true for any $\prob$ (\emph{distribution-free} result). 
	\begin{theorem}
		\label{main_th_standard}
		Under Assumption \ref{non-acc}, it holds that 
		\begin{equation} 
			\label{eq:concentration-VS-relax}
			\prob^N \{\,{\cal D} \,:\, \underline{\eps}(s_{A,\widehat{A}}^\ast)\leq \textnormal{Risk}_A(\theta^\ast_{\widehat{A}})\leq \overline{\eps}(s_{A,\widehat{A}}^\ast)\,\,\}\,\geq\,1-\beta,
		\end{equation}
		where ${\cal P}(\theta^\ast_{\widehat{A}})$ is the SVR predictor obtained from \eqref{svr} and $s_{A,\widehat{A}}^\ast$ is its adversarial complexity according to Definition \ref{def:adv_complex}. \hfill$\star$
	\end{theorem}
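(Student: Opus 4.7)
My plan is to recast Theorem \ref{main_th_standard} as a specific instance of the distribution-free concentration result for \emph{learning via relaxed optimization} developed in \cite{Garatti22}, which is the same general tool that already underpins the non-adversarial analysis of \cite{Campi21ml}. The essential tension to resolve is that program \eqref{svr} is trained using only the finite approximation $\widehat{A}$, while $\textnormal{Risk}_A(\theta^\ast_{\widehat{A}})$ is evaluated on the whole region $A$; the two pieces of Definition \ref{def:adv_complex} are designed precisely to reconcile these two sides, so the proof ought to consist in exhibiting this reconciliation cleanly.

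The first step is to set up an abstract scenario problem where each sample $(u_i,y_i)$ contributes the "full" constraint $A_{(u_i,y_i)} \subseteq \mathcal{P}(\theta)$ and the cost is the objective of \eqref{svr}, augmented by a relaxation mechanism. Program \eqref{svr} can then be viewed as the relaxed version of this abstract problem, obtained by substituting each full constraint with the $M$ discretized constraints based on $\widehat{A}_{(u_i,y_i)}$ and paying $\rho \xi_i$ for their violation. The tie-breaking rule borrowed from \cite{Campi21ml} guarantees that the map from $\cal D$ to $\theta^\ast_{\widehat{A}}$ is uniquely defined. The second step is to match $s^\ast_{A,\widehat{A}}$ with the "complexity" used by the general theory: in that framework, a training sample counts if it is either \emph{violated} by the returned solution (its full constraint fails, i.e., condition (i) in Definition \ref{def:adv_complex}) or \emph{active} (a discretized constraint is tight, i.e., condition (ii)). With these identifications, a fresh i.i.d.\ sample $(u,y)$ is "violated" exactly when $A_{(u,y)} \nsubseteq \mathcal{P}(\theta^\ast_{\widehat{A}})$, which matches the definition of $\textnormal{Risk}_A(\theta^\ast_{\widehat{A}})$, and the concentration inequality of \cite{Garatti22} delivers \eqref{eq:concentration-VS-relax} with $k = s^\ast_{A,\widehat{A}}$ and the bounding functions induced by \eqref{pol_eq-for-eps(k)-relax}--\eqref{pol_eq-for-eps(N)-relax}.

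The role of Assumption \ref{non-acc} is to rule out degeneracies: with probability one no sample realizes an exact tie between the boundary of the optimal band and some point of $A_{(u_i,y_i)}\setminus\widehat{A}_{(u_i,y_i)}$, so conditions (i) and (ii) do capture, without overlap or omission, the "support/violated" samples in the abstract sense. Thanks to the tie-break rule, a similar genericity argument shows that the solution is uniquely identified by its support samples under resampling, which is the standard compression property required by the general theorem.

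The most delicate point will be verifying this compression/preference property in the adversarial setup: one must show that removing any sample not counted by $s^\ast_{A,\widehat{A}}$ leaves the optimal solution invariant, and that the "returned" solution is a well-behaved function of the support samples even when one side of the complexity refers to $A$ and the other to $\widehat{A}$. To handle this cleanly I would not try to perform the check directly on \eqref{svr}, but rather embed the problem into the broader framework of \emph{learning through relaxed optimization} alluded to in Remark \ref{rmk:kernel_trick_2} (and formally introduced in Section \ref{sec:opt-relax} via Theorems \ref{th:general1} and \ref{th:general2}), where the compression property is stated in abstract terms and can be checked once and for all; Theorem \ref{main_th_standard} would then follow by instantiating those general theorems to SVR with the constraint/evaluation splitting described above.
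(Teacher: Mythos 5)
Your high-level strategy coincides with the paper's: both reduce Theorem \ref{main_th_standard} to the concentration result (Theorem 2) of \cite{Garatti22}, identify the adversarial complexity with the number of support/violated samples, and use Assumption \ref{non-acc} to rule out the degenerate event that a point of $\widehat{A}_{(u_i,y_i)}$ lands exactly on the band boundary without being of support. You also correctly isolate where the difficulty lies (the compression/non-degeneracy check for a relaxed program whose violation notion refers to $A$ while the algorithm sees only $\widehat{A}$).

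However, there is a genuine gap at precisely that point, and your proposed escape is circular. The crux of the paper's proof is that the decision handed to the framework of \cite{Garatti22} cannot be $\theta^\ast_{\widehat{A}}$ alone: because of the relaxation variables $\xi_i$, adding a sample that the current solution violates need not change $\theta^\ast_{\widehat{A}}$ at all (the new $\xi_i$ simply absorbs the violation), so ``responsiveness to contradiction'' fails, and likewise a violated sample need not be a support element in the compression sense. The paper fixes this by defining the decision as the pair $z^\ast_N = (\theta^\ast_{\widehat{A}}, \calL^\ast)$, where $\calL^\ast$ is the multiset of training points $\delta_i$ with $A_{\delta_i}\cup\widehat{A}_{\delta_i}\not\subseteq\calP(\theta^\ast_{\widehat{A}})$; only with this augmentation do the consistency and non-degeneracy assumptions of \cite{Garatti22} hold, and only then does the support count provably equal $s^\ast_{A,\widehat{A}}$ with probability one. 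Your proposal never introduces this (or any equivalent) device. Deferring the check to Theorems \ref{th:general1} and \ref{th:general2} does not help: as Section \ref{sec:derivations-extension} makes explicit, those general theorems are proved by the very same argument, so the compression property is not ``checked once and for all'' elsewhere --- it is the thing to be established here. To complete your proof you would need to supply the augmented-decision construction (or some other mechanism making violated samples count as support elements) and then verify permutation invariance, stability under confirmation, responsiveness to contradiction, and the identification of the support set with conditions (i)--(ii) of Definition \ref{def:adv_complex}, as done in Lemmas \ref{main-prop:consistency} and \ref{main-prop:nondegeneracy}.
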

	\begin{proof}
		See Section \ref{sec:derivations}.
	\end{proof}
	Equation \eqref{eq:concentration-VS-relax} contains $\prob$ twice, once as $\prob^N$ and also implicitly through the definition of $\textnormal{Risk}_A(\theta)$, see Definition \ref{def:AdvMispredictionAndAdvRisk}. However, to apply the theorem this probability need not be known: using \eqref{svr}, one computes $\theta^\ast_{\widehat{A}}$, which gives ${\cal P}(\theta^\ast_{\widehat{A}})$. This depends on the choice of $\widehat{A}$. Then, the complexity $s_{A,\widehat{A}}^\ast$ is evaluated from $\cal D$, $\widehat{A}$ and $A$, and the value of $s_{A,\widehat{A}}^\ast$ is plugged into functions $\overline{\eps}(k)$ and $\underline{\eps}(k)$ to obtain upper and lower bounds for $\textnormal{Risk}_A(\theta^\ast_{\widehat{A}})$. These bounds are guaranteed by the theorem to hold with confidence at least $1-\beta$ regardless of the probability $\prob$ by which the data points are drawn. 
	
	\begin{remark}[deployment-time and training-time risk] The setup introduced in this and later sections, where a training set $\cal D$ from $\prob$ is used to build an SVR model using \eqref{svr}, aligns with the concept of \emph{deployment-time} attack discussed in the introduction. While deployment-time attacks are the primary focus of this paper, it is worth noting that the results derived in this context may have a say also for certain \emph{training-time} attacks. Consider for instance a scenario in which training examples are corrupted according to a deterministic rule that maps any $(u,y)$ to a $(u',y')$ within a distance at most $h$ from $(u,y)$. Through this transformation, the probability $\prob$ is also mapped to a new probability $\prob'$. If we now interpret $(u',y')$ and $\prob'$ as the original $(u,y)$ and $\prob$, then the adversarial risk associated to a ball $A$ of radius $h$ can be used to upper bound the probability of misprediction in this training-time attack setup. \hfill $\star$
	\end{remark}
	
	\subsection{The case $\mathbf{\widehat{A} \not \subseteq A}$}
	\label{sec:outersample}
	So far, it has been assumed that $\widehat{A}$ is contained in $A$. In this section, we relax this assumption and allow $\widehat{A}$ to include elements outside $A$. This also covers the case when one makes an adversarially-oriented design and then wants to assess its risk when no adversarial actions take place (so that $A = \{0\}$). When  $\widehat{A} \not \subseteq A$, our theory is able to provide rigorous upper bounds to the risk, however no lower bounds can be established for reasons that will become clear from the proof of the result. To cover the present situation, we need to introduce a generalized definition of adversarial complexity. 
	
	\begin{definition}[Adversarial complexity -- general definition]
		\label{def:adv_complex_outer}
		The adversarial complexity of ${\mathcal P}(\theta^\ast_{\widehat{A}})$, denoted $s_{{A},\widehat{A}}^\ast$, is the number of data points $(u_i,y_i)$ that satisfy at least one of the following three conditions  
		\begin{itemize}
			\item[(i)] $|\tilde{y}_i- {w^\ast_{\widehat{A}}}^\top \tilde{u}_i-b^\ast_{\widehat{A}}|>\gamma^\ast_{\widehat{A}}$ for at least one $(\tilde{u}_i,\tilde{y}_i)\in A_{(u_i,y_i)}$
			\item[(ii)] $|\tilde{y}_i^{(j)}- {w^\ast_{\widehat{A}}}^\top \tilde{u}_i^{(j)}-b^\ast_{\widehat{A}}|=\gamma^\ast_{\widehat{A}}$ for at least one $(\tilde{u}_i^{(j)},\tilde{y}_i^{(j)}) \in\widehat{A}_{(u_i,y_i)}$ 
			\item[(iii)]  $|\tilde{y}_i^{(j)}- {w^\ast_{\widehat{A}}}^\top \tilde{u}_i^{(j)}-b^\ast_{\widehat{A}}|>\gamma^\ast_{\widehat{A}}$ for at least one $(\tilde{u}_i^{(j)},\tilde{y}_i^{(j)}) \in\widehat{A}_{(u_i,y_i)}$.
		\end{itemize}\hfill$\star$
	\end{definition}
	Notice that this definition coincides with Definition \ref{def:adv_complex} when  $\widehat{A}\subseteq A$ because in this case (iii) implies (i).

	\begin{theorem}
		\label{main_th_outer}
		Without the requirement that $\widehat{A}$ is a subset of $A$, under Assumption \ref{non-acc}, it holds that 
		\begin{equation} 
			\label{eq:concentration-VS-outer}
			\prob^N \{\,{\cal D} \,:\,  \textnormal{Risk}_A(\theta^\ast_{\widehat{A}})\leq \overline{\eps}(s_{A,\widehat{A}}^\ast)\,\,\}\,\geq\,1-\beta,
		\end{equation}
		where ${\cal P}(\theta^\ast_{\widehat{A}})$ is the SVR predictor obtained from \eqref{svr} and $s_{A,\widehat{A}}^\ast$ is its adversarial complexity according to the general Definition \ref{def:adv_complex_outer}. \hfill$\star$
	\end{theorem}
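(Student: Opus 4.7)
The plan is to reduce Theorem~\ref{main_th_outer} to Theorem~\ref{main_th_standard} by introducing an ``envelope'' adversarial set
$$A' := A \cup \widehat{A},$$
which, by construction, contains $\widehat{A}$ and hence satisfies the hypothesis of Theorem~\ref{main_th_standard}. Crucially, the training program \eqref{svr} depends on the adversarial set only through $\widehat{A}$; therefore the optimizer $\theta^\ast_{\widehat{A}}$ is the same whether we associate it with the original $A$ or with $A'$. Only the risk and complexity quantities are affected by this enlargement.

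Next, I would verify that the complexity $s^\ast_{A',\widehat{A}}$, computed through the original Definition~\ref{def:adv_complex} (which is legitimate since $\widehat{A} \subseteq A'$), coincides with $s^\ast_{A,\widehat{A}}$ computed through the general Definition~\ref{def:adv_complex_outer}. Since
$$A'_{(u,y)} \;=\; (u,y) + (A \cup \widehat{A}) \;=\; A_{(u,y)} \cup \widehat{A}_{(u,y)},$$
condition (i) of Definition~\ref{def:adv_complex} applied to $A'$ says that a misprediction occurs at some point of $A_{(u_i,y_i)} \cup \widehat{A}_{(u_i,y_i)}$, which is precisely the disjunction of conditions (i) and (iii) of Definition~\ref{def:adv_complex_outer}. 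Condition (ii) involves only $\widehat{A}$ and is identical in both definitions. Hence the two counts match exactly.

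Applying Theorem~\ref{main_th_standard} with $A'$ in place of $A$ then gives, with $\prob^N$-probability at least $1-\beta$,
$$\textnormal{Risk}_{A'}(\theta^\ast_{\widehat{A}}) \;\leq\; \overline{\eps}(s^\ast_{A',\widehat{A}}).$$
Because $A \subseteq A'$, every adversarial misprediction over $A$ is also an adversarial misprediction over $A'$, so $\textnormal{Risk}_A(\theta^\ast_{\widehat{A}}) \leq \textnormal{Risk}_{A'}(\theta^\ast_{\widehat{A}})$; combining this with the equality of complexities yields the claim \eqref{eq:concentration-VS-outer}.

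The main conceptual step is identifying Definition~\ref{def:adv_complex_outer} as being tailored precisely so that it equals the Definition~\ref{def:adv_complex}-complexity of the enlarged set $A'$; once this is seen, no new probabilistic machinery beyond Theorem~\ref{main_th_standard} is needed, and Assumption~\ref{non-acc} carries over unchanged. I also note that the symmetric lower bound cannot be recovered by this route: Theorem~\ref{main_th_standard} applied to $A'$ produces $\underline{\eps}(s^\ast_{A',\widehat{A}}) \leq \textnormal{Risk}_{A'}(\theta^\ast_{\widehat{A}})$, and the inequality $\textnormal{Risk}_A \leq \textnormal{Risk}_{A'}$ then points the wrong way to transfer this bound to $\textnormal{Risk}_A$, consistent with the asymmetry announced in the statement.
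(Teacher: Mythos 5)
Your reduction is correct, and it is in substance the same observation that drives the paper's proof, just packaged differently. The paper proves Theorems \ref{main_th_standard} and \ref{main_th_outer} simultaneously by casting the problem in the abstract framework of the reference on risk and complexity: the ``violation'' event for a decision is defined there as $A_\delta \cup \widehat{A}_\delta \not\subseteq \mathcal{P}(\theta)$, so the probability of violation $V(z^\ast_N)$ that the abstract theorem bounds is exactly your $\textnormal{Risk}_{A'}(\theta^\ast_{\widehat{A}})$ with $A' = A \cup \widehat{A}$; the paper then concludes via the same monotonicity $\textnormal{Risk}_A \leq \textnormal{Risk}_{A'}$ that you use, and the support elements it counts are precisely the points singled out by Definition \ref{def:adv_complex_outer}. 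What your route buys is modularity: you treat Theorem \ref{main_th_standard} as a black box and derive Theorem \ref{main_th_outer} as a corollary, with no need to re-run the consistency/non-degeneracy verification for the enlarged set. For this to be fully airtight you should note two small points, both of which hold: first, Theorem \ref{main_th_standard} is stated for an arbitrary translation-invariant region $A$ with $\widehat{A} \subseteq A$, and $A' = A \cup \widehat{A}$ is such a region; second, Assumption \ref{non-acc} enters the proof only through Proposition \ref{prop:non-acc}, whose statement involves $\widehat{A}$ alone, so enlarging $A$ to $A'$ does not disturb the hypothesis. Your identification of Definition \ref{def:adv_complex_outer} as the Definition \ref{def:adv_complex}-complexity of $A'$ is exact (condition (i) for $A'$ is the disjunction of (i) and (iii), and (ii) is untouched), and your closing remark about why the lower bound cannot be recovered by this argument matches the asymmetry the paper itself announces.
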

	\begin{proof}
		See Section \ref{sec:derivations}.
	\end{proof}
	It is worth noticing that  
	Theorem \ref{main_th_outer} implies a bound for the (non-adversarial) risk of  ${\cal P}(\theta^\ast_{\widehat{A}})$. In fact, recalling that $\textnormal{Risk}(\theta) = \textnormal{Risk}_{\{0\}}(\theta)$, applying Theorem \ref{main_th_outer} with $A=\{0\}$ yields 
	
	\begin{corollary}[Bound for the non-adversarial risk]
		Under Assumption \ref{non-acc}, we have that 
		\begin{equation} 
			\prob^N \{\,{\cal D} \,:\,  \textnormal{Risk}(\theta^\ast_{\widehat{A}})\leq \overline{\eps}(s_{\{0\},\widehat{A}}^\ast)\,\,\}\,\geq\,1-\beta,
		\end{equation}
		where ${\cal P}(\theta^\ast_{\widehat{A}})$ is the SVR predictor obtained from \eqref{svr} and $s_{\{0\},\widehat{A}}^\ast$ is computed
		using the general Definition \ref{def:adv_complex_outer}. \hfill$\star$ 
	\end{corollary}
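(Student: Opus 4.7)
The plan is to obtain this corollary as an immediate specialization of Theorem \ref{main_th_outer} to the case $A = \{0\}$. The reason Theorem \ref{main_th_outer} (rather than Theorem \ref{main_th_standard}) is the right tool here is precisely that $\widehat{A}$ may contain points outside $\{0\}$: the relaxed condition $\widehat{A} \not\subseteq A$ allowed in the statement of Theorem \ref{main_th_outer} is exactly what is needed, since a user who robustifies against some non-trivial adversarial region and then wishes to audit the resulting model's non-adversarial risk will typically have $\widehat{A} \neq \{0\} = A$.

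First I would verify that the reduction $\textnormal{Risk}_{\{0\}}(\theta) = \textnormal{Risk}(\theta)$ holds pointwise in $\theta$. With $A = \{0\}$ one has $A_{(u,y)} = (u,y) + \{0\} = \{(u,y)\}$, so the adversarial misprediction condition in Definition \ref{def:AdvMispredictionAndAdvRisk} --- ``$\exists (\tilde u, \tilde y) \in A_{(u,y)}$ such that $|\tilde y - w^\top \tilde u - b| > \gamma$'' --- simplifies to $|y - w^\top u - b| > \gamma$, which is exactly the plain misprediction of Definition \ref{def:MispredictionAndRisk}. Consequently $\textnormal{Risk}_{\{0\}}(\theta^\ast_{\widehat{A}}) = \textnormal{Risk}(\theta^\ast_{\widehat{A}})$ on every sample ${\cal D}$, and the event appearing in \eqref{eq:concentration-VS-outer} with $A = \{0\}$ coincides with the event appearing in the corollary.

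Next, I would invoke Theorem \ref{main_th_outer} directly. Its hypothesis is Assumption \ref{non-acc}, which is also the hypothesis of the corollary, and no constraint of the form $\widehat{A} \subseteq A$ is required. The adversarial complexity $s^\ast_{\{0\},\widehat{A}}$ must be evaluated through the general Definition \ref{def:adv_complex_outer}: with $A = \{0\}$, condition (i) collapses to the ordinary in-sample misprediction $|y_i - {w^\ast_{\widehat{A}}}^\top u_i - b^\ast_{\widehat{A}}| > \gamma^\ast_{\widehat{A}}$, while conditions (ii) and (iii) still involve the perturbed points in $\widehat{A}_{(u_i,y_i)}$ and cannot in general be dropped (this is precisely why Definition \ref{def:adv_complex} would be inadequate when $\widehat{A} \not\subseteq \{0\}$). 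Substituting into \eqref{eq:concentration-VS-outer} and using the identity established in the previous paragraph delivers the claim.

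There will be essentially no technical obstacle: the corollary is a direct specialization whose content amounts to the observation that the upper-bound machinery of Theorem \ref{main_th_outer} applies even to the non-adversarial risk of a robustly-trained model. The one point worth emphasizing is why the lower bound of Theorem \ref{main_th_standard} does not transfer to this setting: for $\widehat{A} \neq \{0\}$ condition (iii) of Definition \ref{def:adv_complex_outer} can contribute data points that do not correspond to any violation of the original non-adversarial constraint, so $s^\ast_{\{0\},\widehat{A}}$ may overestimate the complexity relevant to the non-adversarial risk, leaving only the upper bound meaningful.
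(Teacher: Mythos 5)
Your proposal is correct and follows exactly the paper's route: the corollary is obtained by instantiating Theorem \ref{main_th_outer} with $A=\{0\}$ and observing that $\textnormal{Risk}_{\{0\}}(\theta)=\textnormal{Risk}(\theta)$, with the complexity evaluated via the general Definition \ref{def:adv_complex_outer}. Your added remarks on why conditions (ii)--(iii) cannot be dropped and why no lower bound survives are accurate and consistent with the paper's discussion.
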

	
	\subsection{Set containment condition} \label{sec:set_con}
	
	While conditions (ii) and (iii) in the definitions of adversarial complexity (Definitions \ref{def:adv_complex} and \ref{def:adv_complex_outer}) consist in a simple verification of an inequality for a finite number of cases, condition~(i) entails determining whether the predictor ${\cal P}(\theta^\ast_{\widehat{A}})$ contains a given adversarial region, which might be computationally nontrivial. An optimization-based strategy to make this determination is presented here for the significant case of ball-shaped adversarial regions of the type
	\begin{equation}
		A(c,r) = \{(u,y): \|(u,y)-c\|\leq r\},\label{set}
	\end{equation}
	where: $\|\cdot\|$ is any norm, $c=(c_u, c_y)$, with $c_u\in \Real{d}$ and $c_y\in {\mathbb R}$, is the center, and $r \geq 0$ is the radius. This section has only a significance for the practical implementation of the method, and it can be skipped without any loss of continuity in the conceptual contents of the paper. \\ 
	
	As is clear, one needs first to verify whether $c \notin {\cal P}(\theta^\ast_{\widehat{A}})$: if this is the case one can immediately conclude that $A(c,r)$ is not all contained in ${\cal P}(\theta^\ast_{\widehat{A}})$. If instead $c \in {\cal P}(\theta^\ast_{\widehat{A}})$, then one can proceed by computing
	\begin{equation}
		(\overline{u},\overline{y}) = \argmin_{(u,y) \in \Real{d} \times \Real{}} \{ \|(u,y)-c\| : y - (w^\ast_{\widehat{A}})^\top u - b^\ast_{\widehat{A}} \geq \gamma^\ast_{\widehat{A}} \}, \label{eqa}
	\end{equation}
	and
	\begin{equation}
		(\underline{u},\underline{y}) = \argmin_{(u,y) \in \Real{d} \times \Real{}} \{ \|(u,y)-c\| : y -(w^\ast_{\widehat{A}})^\top u -b^\ast_{\widehat{A}} \leq -\gamma^\ast_{\widehat{A}} \}, \label{eqb}
	\end{equation}
	which are the points on the upper and lower boundaries of ${\cal P}(\theta^\ast_{\widehat{A}})$ closest to $c$ according to the distance induced by $\|\cdot\|$. Letting
	\begin{align*}
		(u^\star,y^\star) =
		\begin{cases}
			(\overline{u},\overline{y}) & \text{if } \| (\overline{u},\overline{y}) - c\|<\| (\underline{u},\underline{y}) - c \| ,\\
			(\underline{u},\underline{y}) & \text{otherwise},
		\end{cases}
	\end{align*}
	we obtain the point on the boundary closest to $c$, which is also called Critical Point (CP), while the set $A(c,r^\star)$ corresponding to $r^\star=\|(u^\star,y^\star)-c\|$ is called the {\em Maximal Set}. Plainly, the adversarial region $A(c,r)$ in \eqref{set} is fully contained in $\mathcal{P}(\theta^\ast_{\widehat{A}})$ if and only if $c\in \mathcal{P}(\theta^\ast_{\widehat{A}})$ and $r \leq r^\star$. The computation of $(\overline{u},\overline{y})$ and $(\underline{u},\underline{y})$ in \eqref{eqa} and \eqref{eqb} amounts to solve convex programs with linear constraints, which is computationally affordable for many standard norms like, e.g., the 2-norm. Moreover, an approach to efficiently compute the CP for either hyper-rectangular or hyper-elliptical regions is available, see \cite{Crespo08}. \\
	
	When, instead, a lifting in a feature space is adopted, 
	the programs corresponding to \eqref{eqa} and \eqref{eqb}, with the obvious adjustments induced by the lifting, may become non-convex, and computing the global minimum may be more difficult. Exceptions are found when the boundaries of the predictor assume particular forms. For instance, this is the case of lifting corresponding to Bernstein polynomials for which sum of squares optimization can be used, see \cite{Lacerda17}. 
	\\
	
	Finally, it is perhaps worth mentioning that, when analytical methods fail, one can always resort to a gridding of $A(c,r)$ and approximately verify whether $A(c,r)$ is contained in ${\cal P}(\theta^\ast_{\widehat{A}})$ by checking whether all the grid points are contained in ${\cal P}(\theta^\ast_{\widehat{A}})$. As is clear, the finer the gridding the better the approximation. Since verifying whether a point is contained in ${\cal P}(\theta^\ast_{\widehat{A}})$ is computationally inexpensive, this approach is often effective. 
	
	\section{Application examples}
	\label{sec:examples}
	
	In this section, the theoretical results so far achieved are illustrated first by means of a synthetic example (Section \ref{sec:toy_ex}) and then by an engineering problem utilizing real data (Section \ref{sec:real_ex}). The two examples serve different purposes: the synthetic example aims at illustrating the utilization of the adversarial generalization theory, while the engineering application example provides experimental validation. The present section is complemented by a discussion of general interest on more comprehensive assessments of the predictor against adversarial actions of varying strength (Section \ref{sec:varying_strength}). 
	
	
	\subsection{Synthetic example} \label{sec:toy_ex}
	
	Consider the data set ${\mathcal D}=\left \{ (u_i,y_i) \right \}_{i=1}^{N}$ of $N=500$ input-output pairs shown in Figure \ref{fig:dataset}. 
	\begin{figure}[t!]
		\centering
		\begin{subfigure}{0.49\textwidth}
			\includegraphics[width=\textwidth]{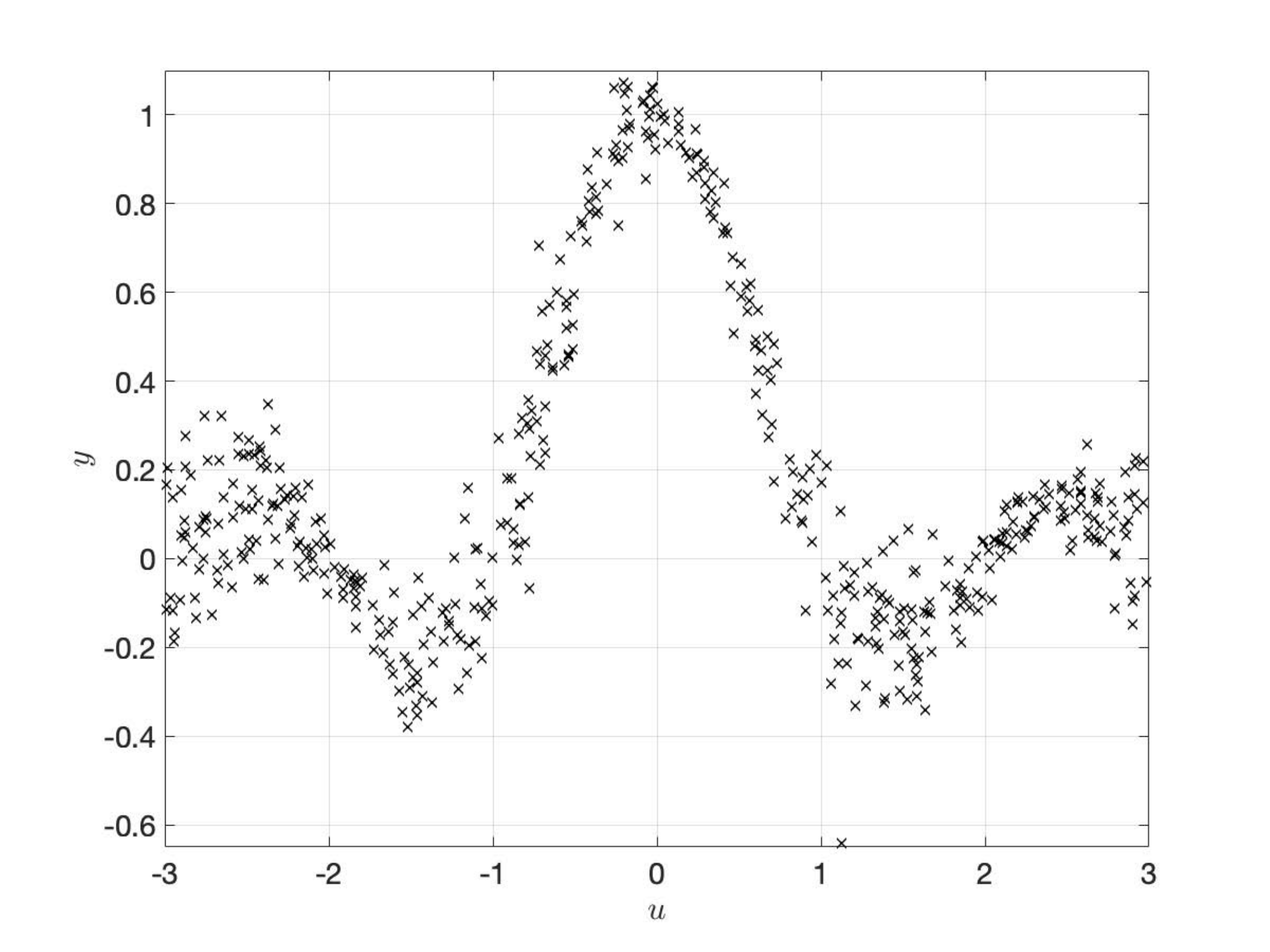}
			\caption{}
			\label{fig:dataset}
		\end{subfigure}
		\hfill
		\begin{subfigure}{0.49\textwidth}
			\includegraphics[width=\textwidth]{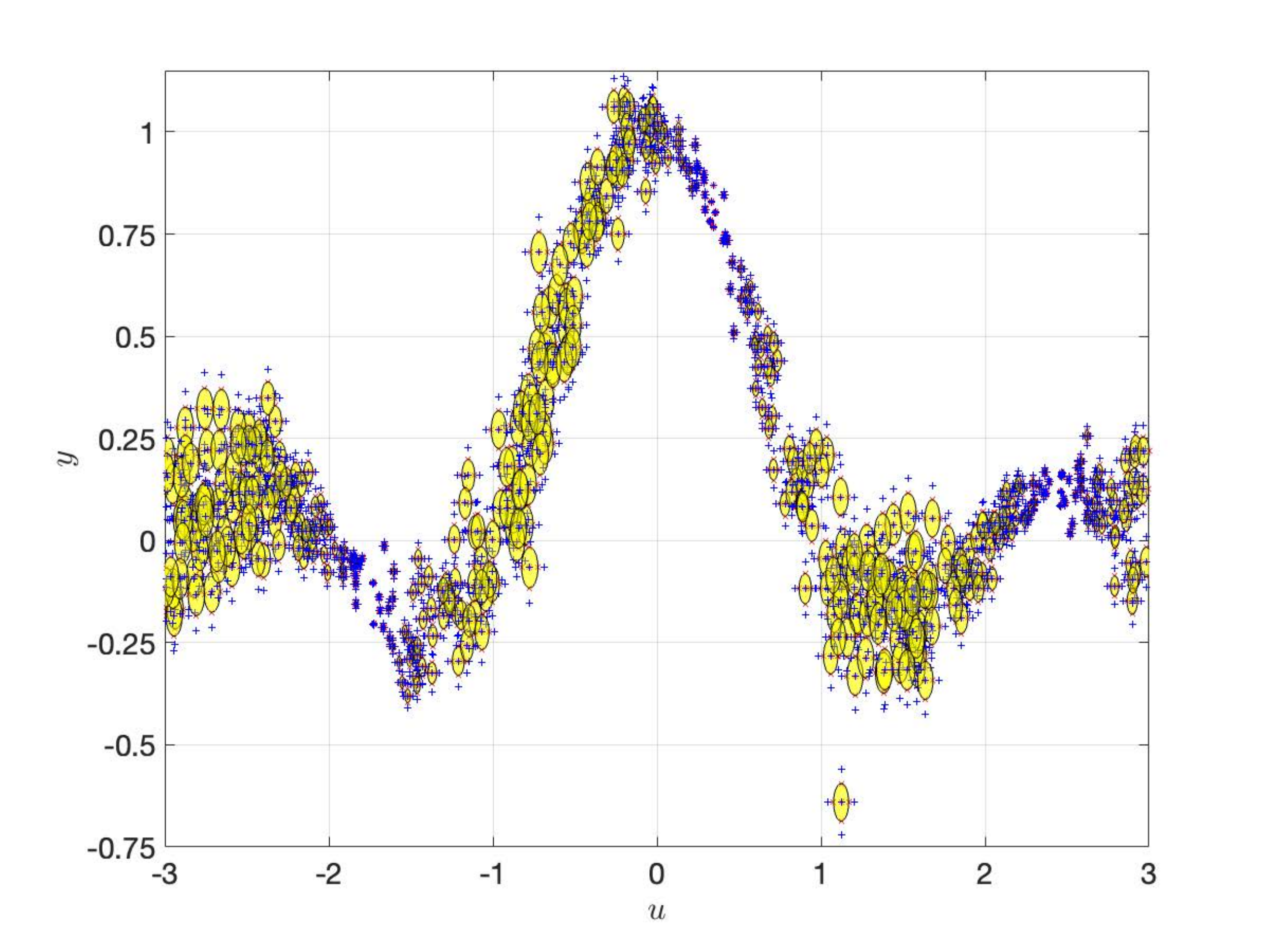}
			\caption{}
			\label{fig:advset}
		\end{subfigure}
		\caption{(a) The data set  ${\mathcal D}=\left \{ (u_i,y_i) \right \}_{i=1}^{N}$. (b) The adversarial regions $A_{(u_i,y_i)}$ (yellow disks) along with the finite sets $\widehat{A}_{(u_i,y_i)}$ when $\widehat{A}_{(u_i,y_i)} \subseteq A_{(u_i,y_i)}$ (red crosses {\color{red} $\times$}) and $\widehat{A}_{(u_i,y_i)} \not\subseteq A_{(u_i,y_i)}$ (blue pluses {\color{blue} $+$}).}
	\end{figure}
	These data were created synthetically by adding input-dependent noise to the $\sinc$ function. In this example, we use program \eqref{svr2} with $\tau=10^{-3}$ and the Gaussian kernel $K(u,u')=\exp(-(u-u')^2/\sigma^2)$ with $\sigma=2$. Two selections of the hyper-parameter $\rho$ are considered, along with various $\widehat{A}_{(u_i,y_i)}$ as specified later. For each computed predictor, the theory of this paper is applied to provide an evaluation of the non-adversarial risk corresponding to $A_{(u,y)} = \{ (u,y) \}$ and of the adversarial risk with regions $A_{(u,y)} = (u,y) + r C$, where $C$ is the unit disk and $r$ depends on $(u,y)$ according to formula $r = |\cos(3u/2 + \pi/3)/20|$ (i.e., the adversarial regions are disks of varying radii-- see Figure \ref{fig:advset} for a representation of $A_{(u_i,y_i)}$, $i=1,\ldots,N$).\footnote{This example is outside the coverage of Section \ref{sec:SVR} because here the adversarial regions do depend on their location. On the other hand, as previously mentioned, the theory of Section \ref{sec:SVR} continues to hold when $A_{(u,y)}$ is not just a translated version of a region $A$, so that the results in Section \ref{sec:SVR} can also be applied in the present context. A precise justification of this fact is provided in Section \ref{sec:opt-relax} as part of a much broader framework in relaxed optimization applicable to many additional problems beyond SVR. 
	} \\
	
	The predictors we compute are: 
	\begin{itemize}
		\item[a.] two non-robust predictors $\mathcal{P}(\theta^\ast_1)$ and $\mathcal{P}(\theta^\ast_2)$ obtained by setting $\widehat{A}_{(u_i,y_i)} = \{ (u_i,y_i)\}$, with $\rho = 4$ and $\rho = 0.05$ respectively;
		\item[b.] two robust predictors $\mathcal{P}(\theta^\ast_3)$ and $\mathcal{P}(\theta^\ast_4)$, corresponding to $\rho = 4$ and $\rho = 0.05$, respectively, and $\widehat{A}_{(u_i,y_i)}$ formed by $M=5$ points, which are:  $(u_i,y_i)$ (the center of $A_{(u_i,y_i)}$) and the top-, bottom-, left-, right-most points on the boundary of $A_{(u_i,y_i)}$. These $\widehat{A}_{(u_i,y_i)}$, $i=1,\ldots,N$, are depicted in Figure \ref{fig:advset} as red crosses {\color{red} $\times$}. Note that $\widehat{A}_{(u_i,y_i)} \subseteq A_{(u_i,y_i)}$ in this case;
		\item[c.] a robust predictor $\mathcal{P}(\theta^\ast_5)$ obtained by setting $\rho = 4$ and $\widehat{A}_{(u_i,y_i)}$ formed by $M=5$ points, which are:  $(u_i,y_i)$ (the center of $A_{(u_i,y_i)}$) and the top-, bottom-, left-, and right-most points on the boundary of $(u_i,y_i) + \frac{3}{2} \cdot r C$; this is an inflated version of the $\widehat{A}_{(u_i,y_i)}$ in point b., providing an outer approximation of $A_{(u_i,y_i)}$. These $\widehat{A}_{(u_i,y_i)}$, $i=1,\ldots,N$, are depicted in Figure \ref{fig:advset} as blue pluses {\color{blue} $+$}. In this case, $\widehat{A}_{(u_i,y_i)} \not\subseteq A_{(u_i,y_i)}$.\footnote{Following up on the previous footnote, we further note that $\widehat{A}_{(u_i,y_i)}$ as defined in points b and c depends on the value of $u_i$ through the parameter $r$, whereas in the treatment of SVR in Section \ref{sec:SVR} such a dependence was not contemplated. However, also this circumstance does not prevent our results from being applied. As a matter of fact, in Section \ref{sec:SVR} the only point where the invariance of $\widehat{A}$ was used was the proof of Proposition~\ref{prop:non-acc}, which established the validity of relation \eqref{relation non accumulation}. Now, Theorems \ref{th:general1} and \ref{th:general2} in Section \ref{sec:opt-relax} can be used to address the present case where $\widehat{A}_{(u_i,y_i)}$ depends on $u_i$; however, in Theorems \ref{th:general1} and \ref{th:general2} relation \eqref{relation non accumulation} is taken as an assumption (Assumption \ref{non-acc-general}) and, hence, one can rightly ask why this assumption is satisfied in the present context. The answer is found in an easy inspection of Proposition~\ref{prop:non-acc}: its thesis, relation \eqref{relation non accumulation}, remains valid even when $\widehat{A}_{(u_i,y_i)}$ exhibits a dependence on $u_i$.} 
	\end{itemize}
	A summary of the results obtained for the five predictors is found in Table \ref{table:nomresults}. The table gives the optimal width $\gamma^\ast$ of the band predictor and the evaluations of the non-adversarial risk $[\,\underline{\epsilon}(s^\ast),\overline{\epsilon}(s^\ast)\,]$ ($s^\ast$ denotes the complexity when $A_{(u,y)} = \{(u,y)\}$, i.e. in the non-adversarial case) and of the adversarial risk $[\,\underline{\epsilon}(s^\ast_{A,\widehat{A}}),\overline{\epsilon}(s^\ast_{A,\widehat{A}})\,]$ obtained by setting $\beta = 10^{-4}$. 
	\begin{table} [t]
		\begin{center}
			\begin{tabular}{|rl|c|c|c|c|c|c|c|}  
				\hline  
				& & $\gamma^\ast$ & $\eta$ & $s^\ast$ & $[\,\underline{\epsilon}(s^\ast),\overline{\epsilon}(s^\ast)\,]$ & $\kappa$ & $s^\ast_{A,\widehat{A}}$ &  $[\,\underline{\epsilon}(s^\ast_{A,\widehat{A}}),\overline{\epsilon}(s^\ast_{A,\widehat{A}})\,]$ \\ [1.5mm]
				\hline 
				{\color{red} $\theta_1^\ast$} &  ${\scriptscriptstyle \left(\rho=4, \ \widehat{A}_{(u_i,y_i)} = \{ (u_i,y_i)\} \right)}$ & {0.3765} & {0} & {7} & ${{[\, 0 ,\, 0.055 \,]}}$ & {24} & {24} & ${{[ \, 0.016,\, 0.11 \, ]}}$  \\[1mm]
				\hline 
				{\color{blue} $\theta_2^\ast$} & ${\scriptscriptstyle \left( \rho=0.05, \ \widehat{A}_{(u_i,y_i)} = \{ (u_i,y_i)\} \right) }$ & {0.1998} & {19} & {21} & ${{[ \, 0.013,\, 0.099 \, ]}}$ & {67} & {67} & ${{[ \, 0.072, \, 0.22 \, ]}}$  \\[1mm]
				\hline 
				$\theta^\ast_3$ & ${\scriptscriptstyle \left( \rho=4, \ \widehat{A}_{(u_i,y_i)} \subseteq A_{(u_i,y_i)} \right)}$ & {0.4256} & {0} & {3} & ${{[\, \text{---}, \, 0.039 \,]}}$ & {7} & {7} & ${{[ \, 0 , \, 0.055 \,]}}$  \\[1mm]
				\hline 
				{\color{cyan} $\theta_4^\ast$} & ${\scriptscriptstyle \left( \rho=0.05, \ \widehat{A}_{(u_i,y_i)} \subseteq A_{(u_i,y_i)} \right)}$ & {0.2472} & {19} & {25} & ${{[\, \text{---},\,0.11 \, ]}}$ & {32} & {32} & ${{[ \, 0.025,\, 0.13 \, ]}}$  \\[1mm]
				\hline 
				{\color{magenta} $\theta_5^\ast$} & ${\scriptscriptstyle \left( \rho=4\, \ \widehat{A}_{(u_i,y_i)} \not\subseteq A_{(u_i,y_i)} \right)}$ & {0.4596} & {0} & {4} & ${{[ \, \text{---} , \, 0.044 \, ]}}$ & {0} & {4} & ${{[ \, \text{---},\,0.044 \, ]}}$  \\[1mm]
				\hline
			\end{tabular}
		\end{center}
		\caption{Performance and risk metrics for the computed predictors. $\gamma^\ast$ is the width of the band predictor, $\eta$ the number of $\widehat{A}_{(u_i,y_i)}$ that are not fully contained in the prediction band, $s^\ast$ the non-adversarial complexity, $[\,\underline{\epsilon}(s^\ast),\overline{\epsilon}(s^\ast)\,]$ the non-adversarial risk bounds, $\kappa$ the number of adversarial regions constructed around the points in the data set that are not fully contained in the predictor, $s^\ast_{A,\widehat{A}}$ the adversarial complexity, and $[\,\underline{\epsilon}(s^\ast_{A,\widehat{A}}),\overline{\epsilon}(s^\ast_{A,\widehat{A}})\,]$ the adversarial risk bounds (in the bounds, the graphic symbol ``$\text{---}$'' indicates that the theory is unable to provide results for the case at hand, this is due to the absence of a lower bound in equation \eqref{eq:concentration-VS-outer}).}
		\label{table:nomresults}
	\end{table}
	The non-adversarial complexity $s^\ast$ and the adversarial complexity $s^\ast_{A,\widehat{A}}$, which are needed to obtain the risk bounds, are also given in Table \ref{table:nomresults}, along with $\eta$, the number of points in the data set outside the prediction band in case of non-robust constructions and the number of regions $\widehat{A}_{(u_i,y_i)}$ that are not fully contained in the prediction band in case of robust constructions, and the number $\kappa$ of adversarial regions $A_{(u_i,y_i)}$ constructed around the points in the data set that are not fully contained in the predictor. 
	The following comments are in order. \\ 
	
	Figure \ref{ex1_ipm12} shows the two non-robust predictors $\mathcal{P}(\theta^\ast_1)$ and $\mathcal{P}(\theta^\ast_2)$.
	\begin{figure}[h!]
		\centering  \includegraphics[clip,width=3.8in]{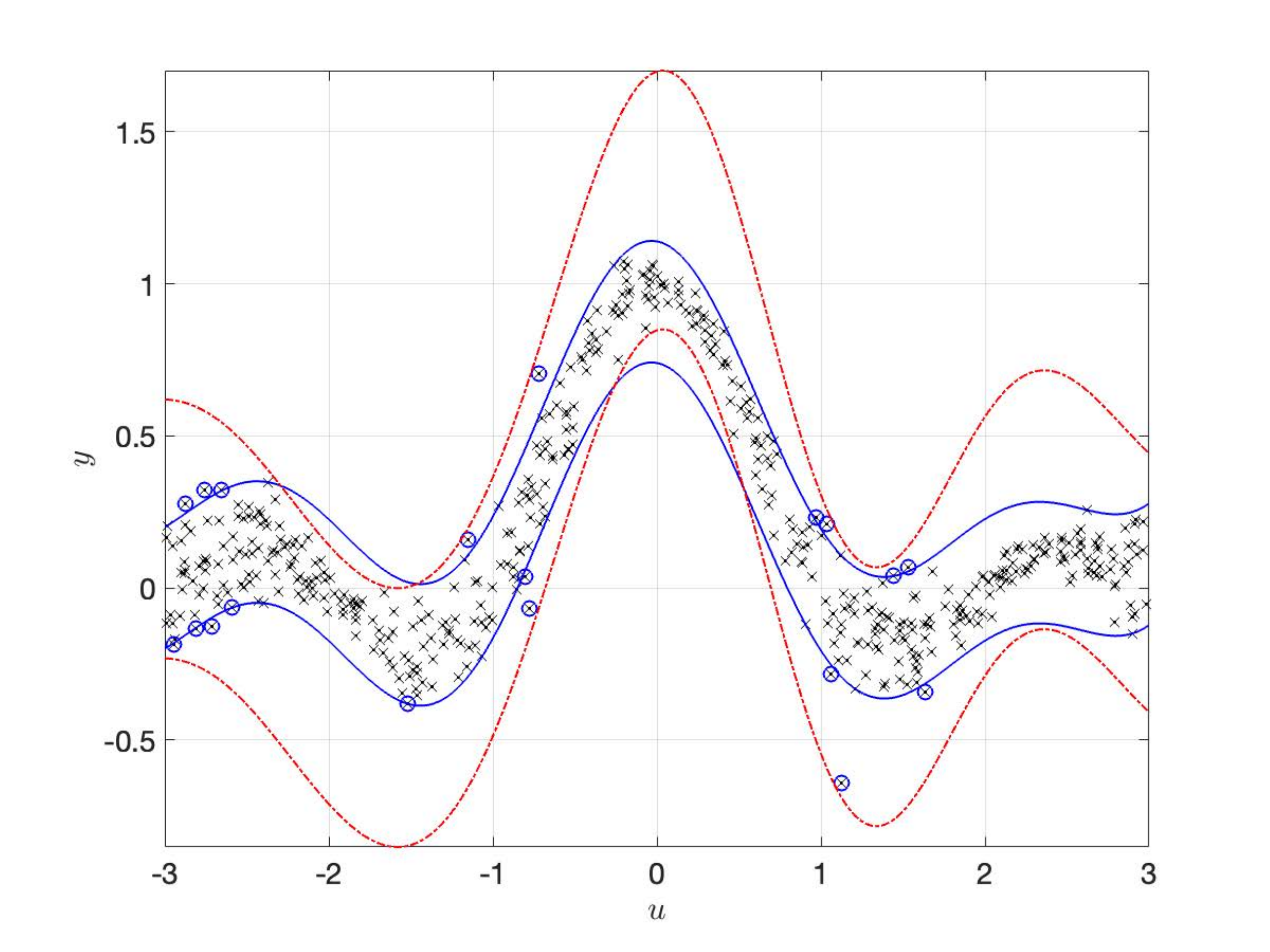}
		\caption{$\mathcal{P}({\color{red} \theta_1^\ast})$ (dashed-dotted-red line) and $\mathcal{P}({\color{blue} \theta_2^\ast})$ (solid-blue line). The data points outside $\mathcal{P}({\color{blue} \theta_2^\ast})$ are marked with a blue circle ({\color{blue} $\circ$}), no data points are outside {$\mathcal{P}({\color{red} \theta_1^\ast})$.}}
		\label{ex1_ipm12}
	\end{figure}
	As expected, when $\rho$ takes a smaller value the predictor width  $\gamma^\ast$ decreases and $\eta$ increases. As a matter of fact, while predictor $\mathcal{P}(\theta^\ast_1)$ encloses the entire data set, $\mathcal{P}(\theta^\ast_2)$ excludes $\eta = 19$ points 
	resulting in $s^\ast \geq 19$ by the very definition of complexity. This generates a higher upper bound on the non-adversarial and adversarial risks (note that functions $\overline{\eps}(k)$ and $\underline{\eps}(k)$ are increasing with $k$), approximately doubling when moving from $\theta_1^\ast$ to $\theta_2^\ast$. \\
	
	As for the adversarial risks of $\mathcal{P}(\theta^\ast_1)$ and $\mathcal{P}(\theta^\ast_2)$, their assessment requires the computation of $\kappa$, the number of adversarial regions that are not fully contained in the predictors. This computation has been performed by means of the procedure described in Section \ref{sec:set_con} and, for the sake of illustration, the resulting maximal sets for $\mathcal{P}(\theta^\ast_1)$ are shown in Figure~\ref{ex1_ms}. \\ 
	\begin{figure}[h!]
		\centering \includegraphics[trim={2cm 1cm 3.5cm 2cm},clip,width=3.5in]{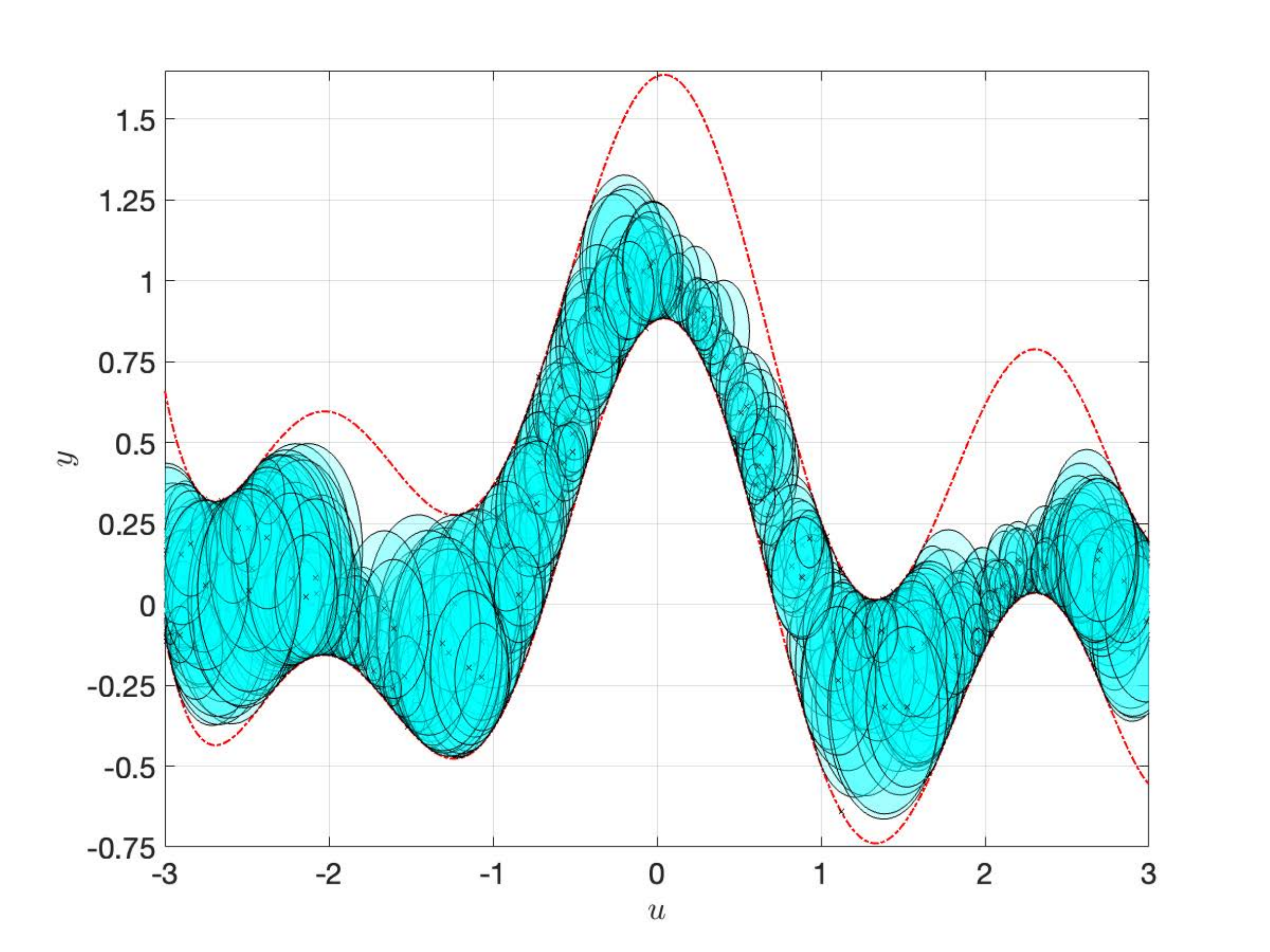}
		\caption{Maximal sets corresponding to $\mathcal{P}(${\color{red} $\theta_1^\ast$}) and all the $(u_i,y_i)$, $i=1,\ldots,N$.}
		\label{ex1_ms}
	\end{figure}
	
	
	The robust predictors $\mathcal{P}(\theta_3^\ast)$ and $\mathcal{P}(\theta_4^\ast)$ are shown in Figure \ref{ex1_ipm345}.
	\begin{figure}
		\centering  \includegraphics[trim={1.5cm 1cm 3cm 2cm},clip,width=3.5in]{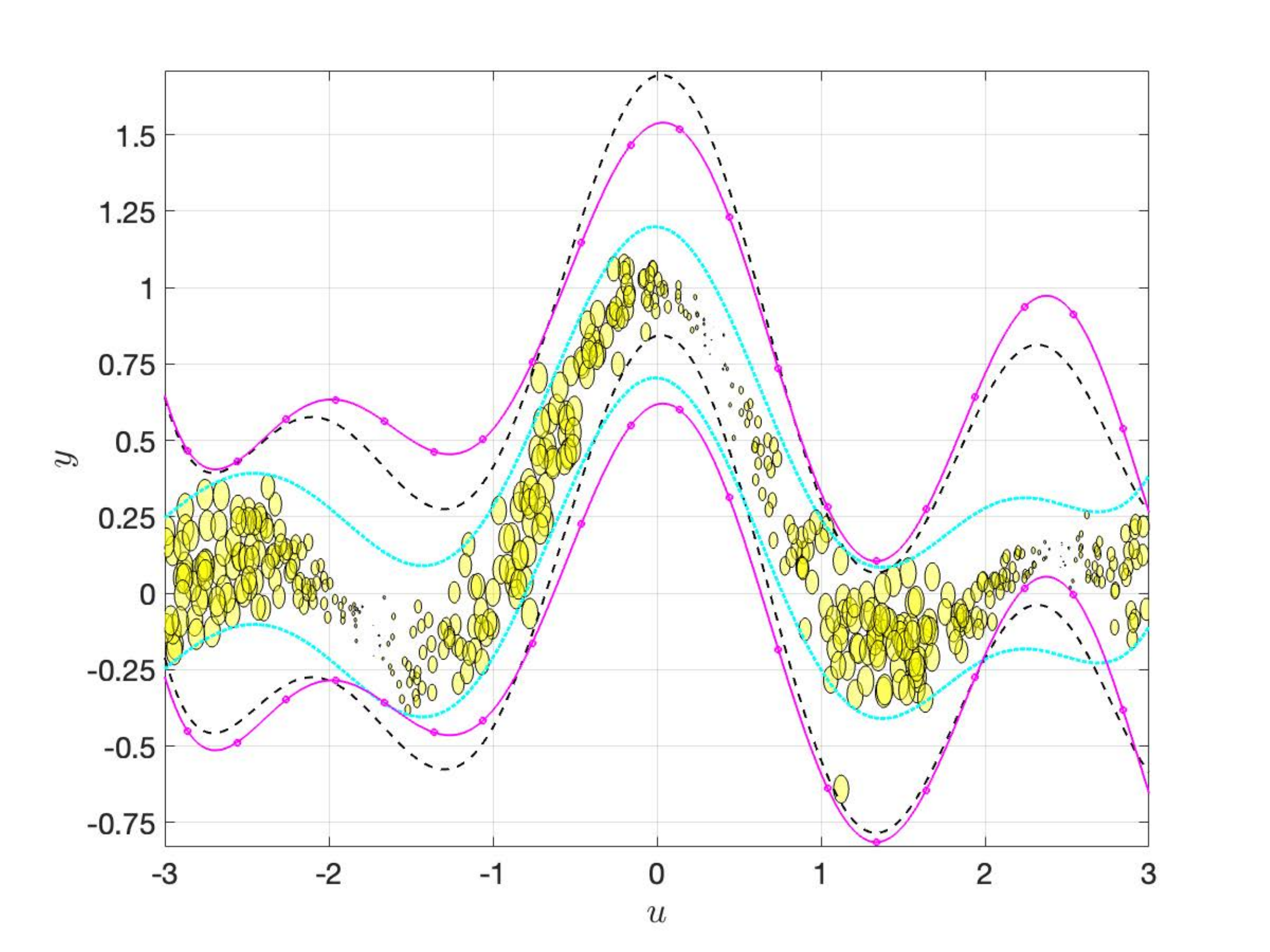}
		\caption{Robust predictors $\mathcal{P}(\theta^\ast_3)$ (dashed-black line), $\mathcal{P}(${\color{cyan} $\theta_4^\ast$}$)$ (dotted-cyan line) and $\mathcal{P}(${\color{magenta} $\theta_5^\ast$}$)$ (solid-circled-magenta line) along with the adversarial regions $A_{(u_i,y_i)}$, $i=1,\ldots,N$.}
		\label{ex1_ipm345}
	\end{figure}
	$\mathcal{P}(\theta_3^\ast)$ encloses all the $\widehat{A}_{(u_i,y_i)}$ (yet, not all the $A_{(u_i,y_i)}$) whereas $\mathcal{P}(\theta_4^\ast)$ fails to enclose $\eta = 19$ of the $\widehat{A}_{(u_i,y_i)}$'s. As before, comparing $\mathcal{P}(\theta_3^\ast)$ and $\mathcal{P}(\theta_4^\ast)$ shows the typical trade-off between performance and risk achieved by the modulation of $\rho$: for instance, focusing on adversarial quantities, selecting $\rho = 0.05$, which lets an additional $5\%$ of the adversarial regions outside the interval ($\kappa$ increases from $7$ to $32$ in a total number of regions equal to $500$), reduces the width by about $58\%$ while increasing the bound on the risk by about $2.36$ times. \\
	
	As is clear, the introduction of an $\widehat{A}_{(u_i,y_i)}$ that includes additional points besides $(u_i,y_i)$ robustifies the design against adversarial actions and, indeed, the robust predictors $\mathcal{P}(\theta_3^\ast)$ and $\mathcal{P}(\theta_4^\ast)$ exhibit adversarial risk bounds lower than those for the corresponding non-robust versions. While Table \ref{table:nomresults} shows that this is not always the case for the non-adversarial risk bounds, it is fair to notice that the use in this case of Theorem \ref{main_th_outer} does not furnish a lower bound, and the upper bound can be somehow conservative. Further, it is important to note that utilizing $\widehat{A}_{(u_i,y_i)}$ adapted to $A_{(u_i,y_i)}$ may help obtain a better trade-off between performance and adversarial risk. This is clear from a comparison between $\mathcal{P}(\theta_1^\ast)$ and $\mathcal{P}(\theta_3^\ast)$, and between $\mathcal{P}(\theta_2^\ast)$ and $\mathcal{P}(\theta_4^\ast)$: a substantial reduction in the adversarial risk bound is obtained while paying a moderate increase of the interval width. \\
	
	Finally, Figure \ref{ex1_ipm345} also depicts the robust predictor $\mathcal{P}(\theta_5^\ast)$, which encloses not only all the $\widehat{A}_{(u_i,y_i)}$ but also all the $A_{(u_i,y_i)}$ (indeed, $\eta = \kappa = 0$). Being $\widehat{A}_{(u_i,y_i)}$ an outer approximation of $A_{(u_i,y_i)}$, $\mathcal{P}(\theta_5^\ast)$ is designed to safeguard the most against adversarial actions, and the entries in Table \ref{table:nomresults} indicate that $\mathcal{P}(\theta_5^\ast)$ has the greatest width and the lowest adversarial risk bound. The comparison between $\mathcal{P}(\theta_3^\ast)$ and $\mathcal{P}(\theta_5^\ast)$, the two most robust predictors, indicates that incrementing the interval width by $8\%$ resulted in an adversarial risk upper bound reduced by about $20\%$. \\
	
	In conclusion, this example shows that \eqref{svr} and \eqref{svr2} are effective and flexible frameworks to obtain competing predictors having different levels of robustness against foreseen adversarial actions. Interestingly, the provided theory allows the user to precisely assess the predictor quality by complementing the predictor width $\gamma^\ast$, which is directly observable, with guaranteed evaluations of the ensuing adversarial risk. The provided characterization ultimately is key to select the predictor that achieves the best overall compromise between contrasting objectives. Importantly, the powerful generalization theory presented in this paper allows the user to achieve this result without resorting to additional data points beyond those used for design. 
	This is paramount in applications where data are valuable and saving data for testing would result in a significant waste of resources.
	
	\subsection{Risk against adversarial actions of various strength} \label{sec:varying_strength}
	
	After determining a suitable predictor for the expected adversarial actions based on the methodology discussed in the previous section, one may also want to further investigate its robustness against adversarial actions of various strength. This involves keeping $\theta^\ast_{\widehat{A}}$ fixed, while computing the complexity $s^\ast_{A,\widehat{A}}$, and the ensuing risk bounds $[\underline{\epsilon}(s^\ast_{A,\widehat{A}}),\overline{\epsilon}(s^\ast_{A,\widehat{A}})]$, for adversarial regions $A^\lambda_{(u,y)} = (u,y) + \lambda \big( A_{(u,y)} - (u,y) \big)$ with $\lambda$ varying over the interval $[0,\lambda_{\max}]$.\footnote{We assume the standard situation in which $A_{(u,y)}$ is star-shaped with respect to $(u,y)$, that is, increasing the inflating parameter $\lambda$ results in an enlarged region that contains regions obtained for smaller values of~$\lambda$.
	} The baseline adversarial case corresponds to $\lambda=1$. Values of $\lambda$ greater than one correspond to expansions (greater adversarial strength) whereas values smaller than one correspond to contractions (smaller adversarial strength) and $\lambda = 0$ is the non-adversarial case. 
	\begin{figure}[t]
		\centering \includegraphics[clip,width=3.5in]{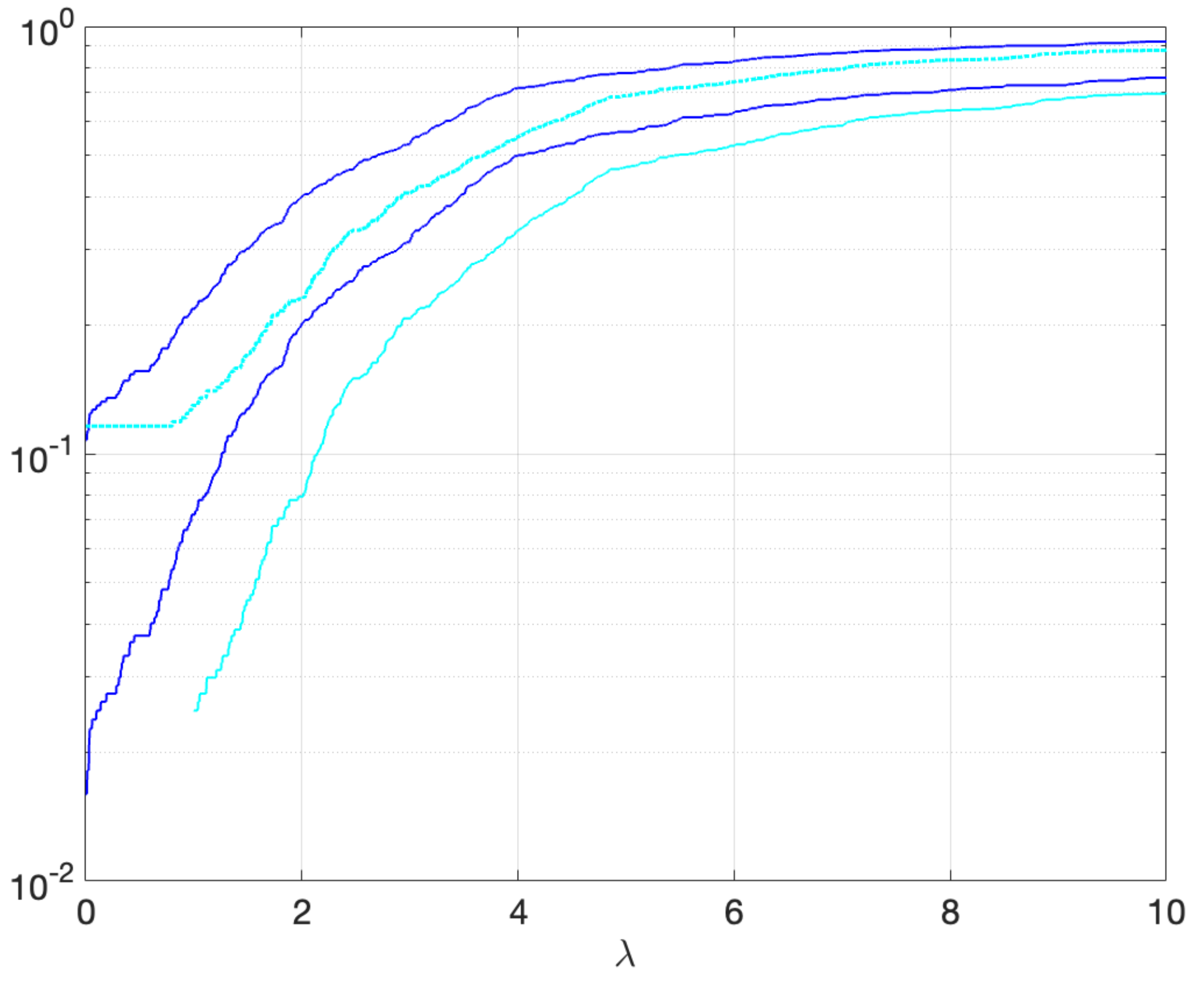}
		\caption{Risk vs. $\lambda$ plot: adversarial risk upper bounds as a function of $\lambda$ for $\mathcal{P}({\color{blue}\theta_2^\ast})$ (solid-blue line) and $\mathcal{P}({\color{cyan}\theta_4^\ast})$ (dotted-cyan line). Note that no lower bound is provided by the theory for the risk of $\mathcal{P}({\color{blue}\theta_2^\ast})$ when $\lambda < 1$.
		}
		\label{risk_curve}
	\end{figure}
	For the sake of illustration, Figure \ref{risk_curve} depicts a plot of $[\underline{\epsilon}(s^\ast_{A,\widehat{A}}),\overline{\epsilon}(s^\ast_{A,\widehat{A}})]$ as a function of $\lambda$ for the predictor $\mathcal{P}({\color{blue}\theta_2^\ast})$ and $\mathcal{P}({\color{cyan}\theta_4^\ast})$. \\
	
	Since $A^{\lambda_1}_{(u,y)} \subset A^{\lambda_2}_{(u,y)}$ for $\lambda_1 < \lambda_2$ (i.e., increasing $\lambda$ yields a family of nested sets), by the very definition of adversarial complexity,\footnote{Note that only condition (i) is affected by $\lambda$, while (ii) and (iii) do not.} we have that $s^\ast_{A,\widehat{A}}$ increases with $\lambda$, and, correspondingly, the risk upper bound increases with $\lambda$ as well. This adheres to obvious qualitative expectations, while a risk vs. $\lambda$ plot like the one in Figure \ref{risk_curve} provides a quantitative determination of this dependency. As examples of use, the plot enables the user to determine the tolerable strength of an adversarial action while maintaining the risk below a given threshold, or to check whether there are critical strength levels at which the risk manifests sudden jumps. As is obvious, the user's preference for a given predictor can also be based on this whole wealth of information.

	\subsection{Engineering application} \label{sec:real_ex}
	
	In-flight loss of control (LOC) is the largest fatal accident category for commercial jet airplane accidents worldwide, see e.g. $  $\cite{Belcastro11}. Aircraft LOC can be described as motion that occurs outside the normal operating flight envelope, not predictably altered by pilot commands, driven by nonlinear effects and coupling, and characterized by disproportionately large responses to small changes in the vehicle's state or oscillatory/divergent behavior, \cite{Belcastro11,Crespo12}. The uncommanded angular rates characterizing these responses seriously compromise the ability to maintain heading, altitude, and wings-level flight. \\
	
	NASA's Langley research center conducted flight experiments to study this phenomenon using the Generic Transport Model (GTM), a 5.5\% dynamically scaled, remotely piloted, twin-turbine aircraft. Some of the experimental data are shown in Figure \ref{ex3_data}, where the output variable is the \emph{lift coefficient} whereas the two input variables are the \emph{angle of attack} and the \emph{sideslip angle}. These data correspond to $16$ flights in a critically upset condition in which the angle of attack is increased progressively until the aircraft stalls followed by a recovering maneuver. The variability in the responses is significant, despite the flights being nominally identical. The goal is to construct an SVR predictor using these data, and quantify its risk. NASA's interest in this experiment is that the resulting predictor, along with uncertainty evaluations, can be used to assess and improve the effectiveness of flight controllers and autopilots during flight upsets, as well as to make flight simulations more realistic. \\
	\begin{figure}[ht!]
		\centering \includegraphics[trim={2cm 0.5cm 1.5cm 0.5cm},clip,width=6in]{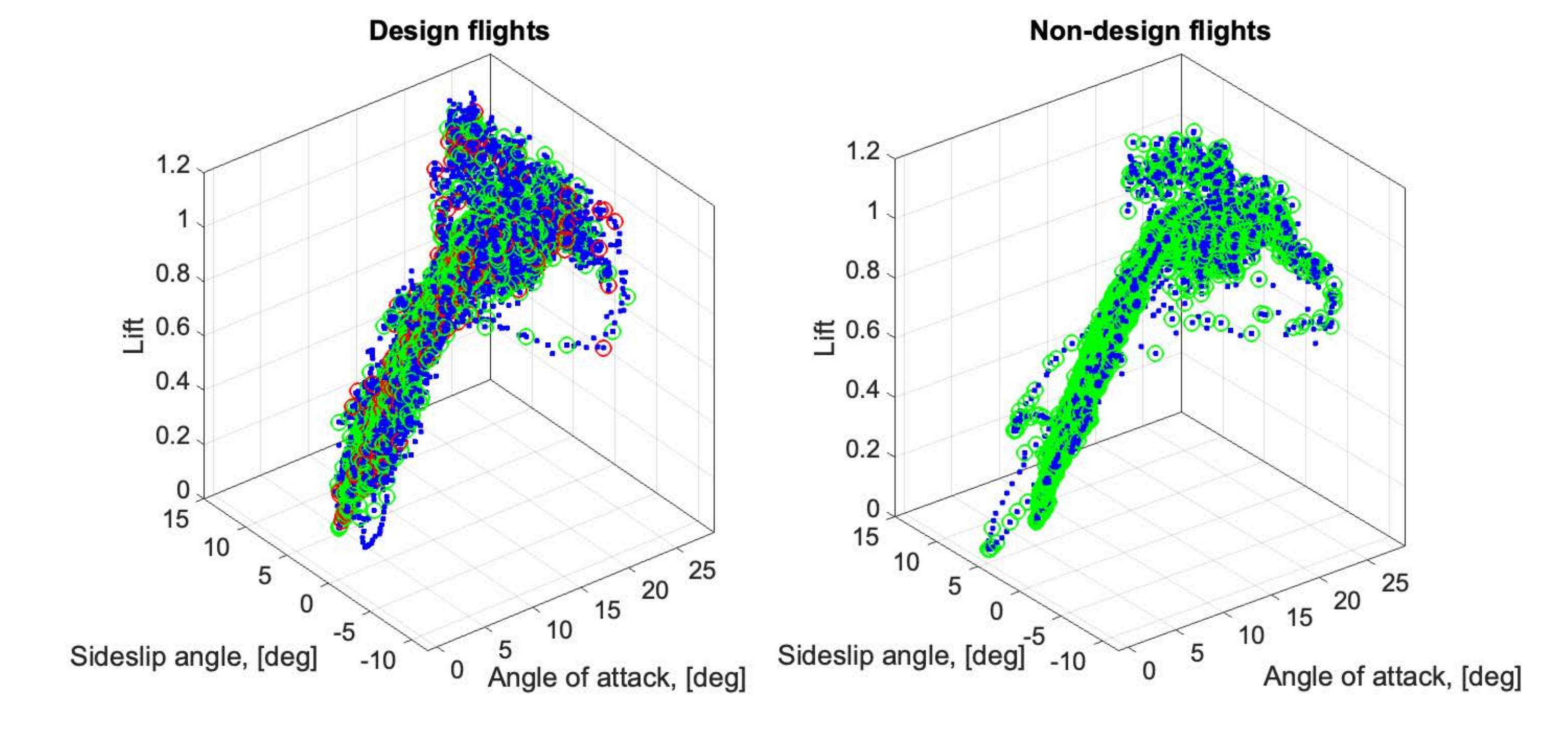}
		\caption{The left subplot shows the dataset for all $12$ design flights (blue dots), the training dataset ({\color{red} $\circ$}) and the test dataset $T_1$  ({\color{green} $\circ$}). The right subplot shows the dataset for all $4$ non-design flights (blue dots) and the test dataset $T_2$  ({\color{green} $\circ$}).}
		\label{ex3_data}
	\end{figure}
	
	Only the data corresponding to the first $12$ flights are used for design purposes, resulting in $36006$ data points. The training set is obtained by randomly selecting $N=1350$ input-output data points from these $36006$ data points, providing a sample that can be considered approximately independent. To empirically validate the risk assessments resulting from the theory, two test datasets were constructed, $T_1$ and $T_2$. Specifically, $T_1$ comprises $5000$ randomly selected data points from the remaining data points in the first $12$ flights, while $T_2$ consists of $5000$ randomly selected data points from the $12002$ data points in the remaining $4$ non-design flights. Thus,  $T_1$ incorporates out-of-sample data points from the same mechanism generating the training set. On the other hand, $T_2$ comes from different flights and, given the considerable variability from flight to flight, $T_2$ can be thought of as containing data points generated from the same mechanism as the training set, but corrupted by some (non-malicious) adversarial action. As for the description of the adversarial action, specific studies revealed that the deviation of data points among flights is typically contained within an ellipsoid $A_\psi$ with axes aligned with the inputs and output, having semi-axes of length $\ell=[0.2,0.5+0.5|\psi|,0.02]$, where $\psi$ is the sideslip angle. Therefore, we considered the adversarial regions $A_{(u,y)} = (u, y) + A_\psi$. \\ 
	
	Two SVR predictors, denoted as $\mathcal{P}(\theta^*_{nr})$ and $\mathcal{P}(\theta^*_{r})$, were computed using (\ref{svr2}) with a feature map $\varphi(\cdot)$ containing fourth-order polynomials. A large value of $\rho$ was used in both cases, which led to data-enclosing predictors (i.e., no data points were left outside the predictors). The non-robust predictor $\mathcal{P}(\theta^*_{nr})$ was obtained by setting $\hat{A}_{(u_i,y_i)}={(u_i,y_i)}$, while the robust predictor $\mathcal{P}(\theta^*_{r})$ was obtained by the choice $\hat{A}_{(u_i,y_i)}={(u_i,y_i)}+\hat{A}_{\psi_i}$, where $\hat{A}_{\psi_i}$ is formed by $M = 15$ points on the surface of $A_{\psi_i}$. \\ 
	
	\begin{figure}[t!]
		\begin{subfigure}{0.2\textwidth}
			\centering
			\includegraphics[trim={2.5cm 1cm 3cm 0cm},width=2.7in]{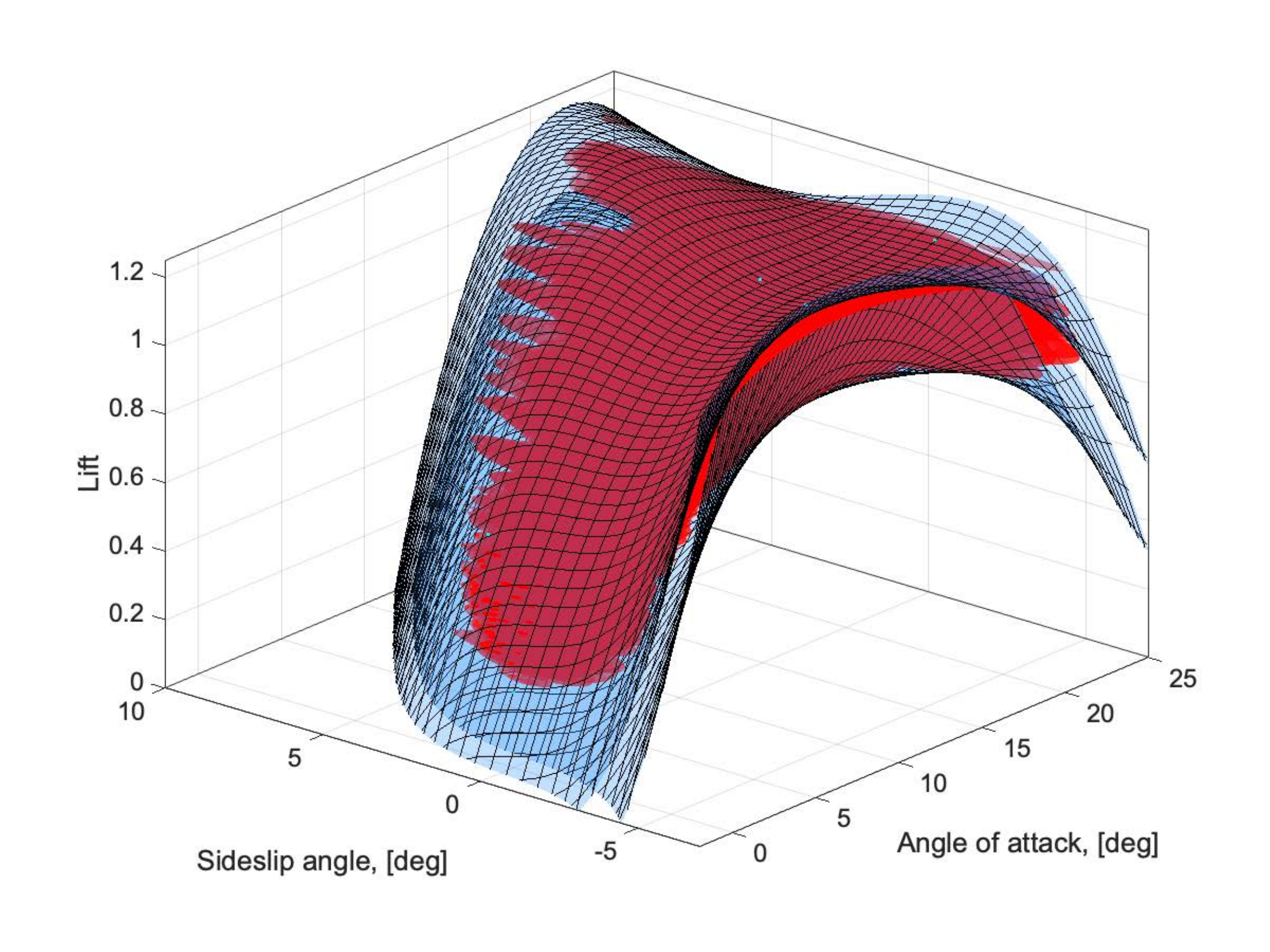}
		\end{subfigure}\hfill
		\begin{subfigure}{0.2\textwidth}
			\centering
			\includegraphics[trim={2.5cm 1cm 3cm 0cm},width=2.7in]{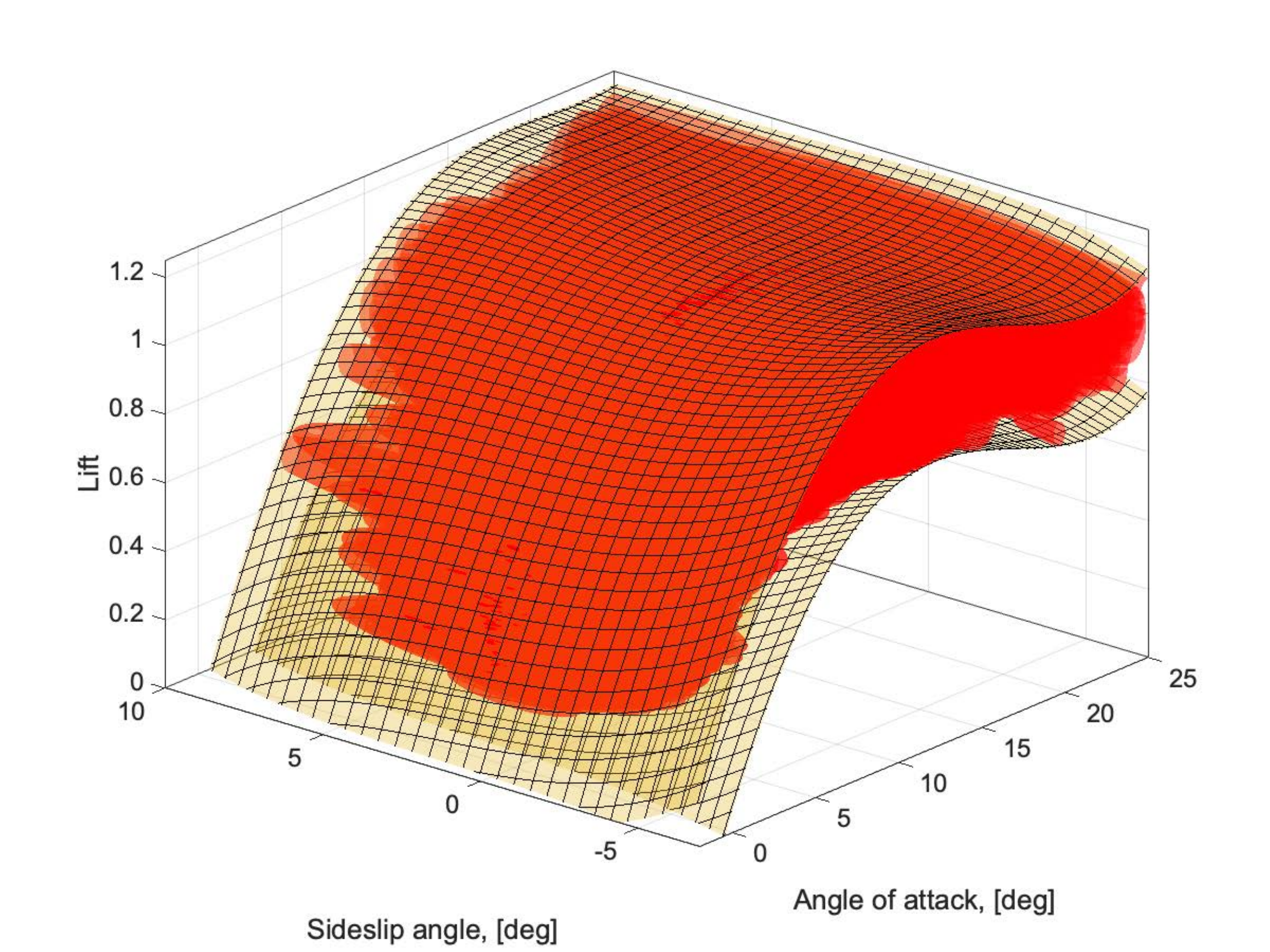}
		\end{subfigure}\hfill
		\label{svrs_aero}
		\caption{ $\mathcal{P}(\theta^*_{nr})$ (left) and $\mathcal{P}(\theta^*_{r})$ (right) with the maximal sets.}
		\label{svrs_aero}
	\end{figure}
	
	Figure \ref{svrs_aero} displays $\mathcal{P}(\theta^*_{nr})$ and $\mathcal{P}(\theta^*_{r})$. For each predictor, the non-adversarial and adversarial complexities $s^*$ and $s^*_{A,\hat{A}}$ were evaluated, and Theorems \ref{main_th_standard} and \ref{main_th_outer} were used to provide risk bounds (for $\beta = 10^{-4}$).\footnote{The polynomial structure of the predictor's boundaries allowed us to find global minima of (\ref{eqa}) and (\ref{eqb}) and, thus, the true maximal sets that were used to compute  $s^*$ and $s^*_{A,\hat{A}}$.} In addition, the out-of-sample empirical frequency of misprediction for the two test datasets $T_1$ and $T_2$ were computed and indicated as $R_{T_1}$ and $R_{T_2}$. 
	Table \ref{ex3table} presents the results. 
	\begin{table} [h!]
		\scriptsize
		\begin{center}
			\begin{tabular}{|c|c|c|c|c|c|c|c|c|c|}  
				\hline 
				& {\footnotesize $\gamma^*$}  & {\footnotesize $s^*$}  &{\footnotesize $[\,\underline{\epsilon}(s^*),\overline{\epsilon}(s^*)\,]$} & $R_{T_1}$ & {\footnotesize \bf{$s^*_{A,\hat{A}}$}} & {\footnotesize $[\,\underline{\epsilon}(s^*_{A,\hat{A}}),\overline{\epsilon}(s^*_{A,\hat{A})}\,]$}  & $R_{T_2}$ & $AR_{T_2}$ \\ [2mm]
				\hline 
				{\color{blue} $\theta_{nr}^*$} & 0.127   & 16 & $[0.27,\,3.17]\times 10^{-2}$ &  $1.36\times 10^{-2}$ & 108 & $[4.8,\,12.0]\times 10^{-2}$   &  $1.92\times 10^{-2}$ &7.6$\times 10^{-2}$ \\[1mm]
				\hline 
				{\color{orange} $\theta_{r}^*$} & 0.150   & 14 & $[-,\,2.93]\times 10^{-2}$ & $0.20\times 10^{-2}$ & 14 &  $[0.20,\,2.93]\times 10^{-2}$   &   $0.34\times 10^{-2}$ & 1.36$\times 10^{-2}$\\[1mm]
				\hline
			\end{tabular}
		\end{center}
		\caption{Performance and risk metrics for $\mathcal{P}(\theta^*_{nr})$ and $\mathcal{P}(\theta^*_{r})$. $\gamma^*$ is the width of the band predictor, $s^*$ the non-adversarial complexity, $[\,\underline{\epsilon}(s^*),\overline{\epsilon}(s^*)\,]$ the non-adversarial risk bounds,  $s^*_{A,\hat{A}}$ the adversarial complexity, $[\,\underline{\epsilon}(s^*_{A,\hat{A}}),\overline{\epsilon}(s^*_{A,\hat{A}})\,]$ the adversarial risk bounds, $R_{T_1}$ and $R_{T_2}$ the empirical frequencies of misprediction on $T_1$ and $T_2$, and $AR_{T_2}$ is obtained as $4R_{T_2}$. 
		}
		\label{ex3table}
	\end{table}
	The table also gives quantity $AR_{T_2}$, a quantity that has an interpretation as explained in the following. A natural way to validate the adversarial bounds in a synthetic example entails drawing additional data points from the underlying data-generating mechanism, computing the adversarial sets corresponding to all these data points, and determining the fraction of them having at least one point outside the predicted interval. 
	In contrast, in an example with real data, one only has empirical data points, in our case belonging to the $4$ non-design flights. In the attempt to realign the empirical results with the theory, one may consider $4$ points from the $4$ non-design flights as draws from an adversarial set, and declare adversarial misprediction if one of them lies outside the predictor. However, this approach comes with a challenge: clustering points in groups of four, so that they can be interpreted as coming from the same adversarial set, is practically unviable. Therefore, we more simply used $AR_{T_2} = 4 R_{T2}$ as an empirical estimate of the adversarial risk. In a sense, this is an overestimate of the risk since the points belonging to the same cluster will independently contribute to the tally, e.g., if $2$ points out of the $4$ are outside the predictor, these two should count as a single misprediction but we are counting them as $2$ mispredictions. On the other hand, we are also underestimating the risk because we are only using $4$ draws out of the infinitely many within the adversarial set. While this is only the best we managed to do, the hope was that the overestimation and underestimation somehow compensated each other, thereby leading to a meaningful estimate. \\
	
	To analyze the results, let us consider first the non-robust predictor $\mathcal{P}(\theta^*_{nr})$. The non-adversarial complexity is quite small relative to the cardinality of the training set, resulting in an accurate evaluation of a moderate non-adversarial risk. The empirical frequency of misprediction $R_{T_1}$ falls within the predicted bounds. 
	Turning to adversarial actions, we see that the adversarial complexity $s^*_{A,\hat{A}}$ is much greater than $s^*$, a sign that many of the adversarial sets $A_{(u_i,y_i)}$ fall outside $\mathcal{P}(\theta^*_{nr})$. 
	The empirical estimate of the risk $AR_{T_2}$ is within the bounds. 
	The robust predictor $\mathcal{P}(\theta^*_{r})$ has a $18\%$ larger width but a much smaller adversarial complexity $s^*_{A,\hat{A}}$. 
	Correspondingly, the adversarial risk upper bound drops by a factor of $4$ compared to the non-robust design.    
	Also in this case $AR_{T_2}$ is within the predicted bounds.

	\section{Learning through optimization
	} 
	\label{sec:opt-relax} 
	
	The optimization program \eqref{svr} serves as a \emph{learning scheme} for constructing band predictors. An important feature is its flexibility, which comes from allowing some data points to lie outside the predictor band via the introduction of the relaxation variables $\xi_i$. In this section, we move to consider general learning schemes based on relaxed optimization of which \eqref{svr} is just a particular instance. Our goal is to demonstrate that the theoretical results we have presented before carry over to this general setup, with significant implications across various fields, including modeling, prediction and classification, as well as broader decision-making contexts such as control design and data-driven actuarial and financial applications. The interested reader is referred to \cite{campi2021Annual} for a comprehensive presentation of the use of relaxed optimization techniques in multiple applied domains, and to \cite{Campi21ml} for results that apply in a non-adversarial context. The presentation of this section is organized as follows. In Section \ref{sec:opt-relax-theory}, we introduce the precise mathematical setup and state the ensuing theoretical results; in turn, Section \ref{sec:opt-relax-apps} provides a brief discussion of specific contexts to which the theory of Section \ref{sec:opt-relax-theory} can be applied. 
	
	\subsection{Adversarial risk generalization results
	} \label{sec:opt-relax-theory}
	
	In this section, data points are indicated with the symbol $\delta$, and are elements of a generic space $\Delta$. For instance, in SVR, $\delta = (u,y)$ and $\Delta = \R^d \times \R$. More generally, a $\delta$ can be an element of a Euclidean space, representing for example the rate-of-return of an investment or, even, it can be an infinite dimensional object, as it happens in classification problems using a waveform as input, for example an ECG (electrocardiogram) tracing to classify a patient. Regardless, at a mathematical level $\Delta$ is just a generic set endowed with a $\sigma$-algebra $\mathcal{G}$ and a probability measure $\prob$, so that $(\Delta,\mathcal{G},\prob)$ is the probability space that models the data generating mechanism. Importantly, nowhere in our treatment it is required that this probability space is known to the user, who only has access to a set of data points drawn from it: ${\cal D} = \{\delta_i\}_{i=1}^N$, where $\delta_i \in \Delta$, $i=1,\ldots,N$, is an i.i.d. sample from $(\Delta,\mathcal{G},\prob)$. \\
	
	For any $\delta \in \Delta$, the corresponding adversarial region 
	is denoted by $A_\delta$, where $A_\delta \subseteq \Delta$. A generic element of $A_\delta$ will be denoted by $\tilde{\delta}$. As in the previous section, we aim at enforcing some level of robustness against adversarial actions by utilizing approximations of finite cardinality of the adversarial regions. Thus, for any $\delta$ we also introduce $\widehat{A}_\delta$, which is a finite set formed by $M$ points of $\Delta$ (i.e., $\widehat{A}_\delta = \{ \tilde{\delta}^{(j)}, \; j=1,\ldots,M \}$ where $\tilde{\delta}^{(j)} \in \Delta$ for all $j$). No constraint on $\widehat{A}_\delta$ relative to $A_\delta$ is enforced and, similarly to Section \ref{sec:SVR}, two results will be obtained, depending on whether $\widehat{A}_\delta \subseteq A_\delta$ or not.
	
	\begin{remark} \label{rmk:generic_adv_set}
		In Section \ref{sec:SVR} we made explicit reference to the case in which $A_\delta$ was obtained as a translated version of a set $A$ (and also $\widehat{A}_\delta$ as a translated version of $\widehat{A}$). This choice was made for simplicity. In the present section we abandon this limitation and allow $A_\delta$ (and $\widehat{A}_\delta$) to change shape and size with $\delta$, which accommodates situations in which an adversary acts selectively depending on the value of $\delta$. The more general results of this section can also be applied to SVR, which is a particular case of the general theory presented herein. 
		\hfill $\star$
	\end{remark}
	Based on the dataset ${\cal D}$, one is asked to select a hypothesis from a set $\Theta$, which is assumed to be a convex set belonging to a linear vector space. $\Theta$ takes manifold interpretations depending on the problem at hand: a $\theta \in \Theta$ may represent the parameter vector of a predictor (as it happens for SVR), or the parametrization of a classifier, or that of a decision in a control problem, \emph{et cetera}. \\
	
	We are interested in hypotheses $\theta^\ast_{\widehat{A}}$ constructed from $\cal D$ by solving the following optimization program (compare with \eqref{svr}) 
	\begin{align} 
		\label{opt-relax}
		\min_{\theta \in \Theta \atop  \xi_i \geq 0, i=1,\ldots,N} & \quad  c(\theta) + \rho \sum_{i=1}^{N} \xi_i \\
		\textrm{\rm subject to:} & \quad f(\theta,\tilde{\delta}_i^{(j)}) \leq \xi_i, \ \,\,j=1,\ldots, M; \;  i = 1, \ldots N, \nonumber
	\end{align}
	where $c(\theta):\Theta \to \R$ is a convex cost functional, $f(\theta,\delta):\Theta\times\Delta \to \R$ is convex in $\theta$ for any $\delta$, and $\{ \tilde{\delta}_i^{(1)},\ldots, \tilde{\delta}_i^{(M)} \} = \widehat{A}_{\delta_i}$ for all $i$. Owing to the $\xi_i$'s, problem \eqref{opt-relax} is always feasible, and it is assumed that it admits at least one minimizer for every $N$ and every ${\cal D}$ in its feasibility domain. When the minimizer is not unique, $\theta^\ast_{\widehat{A}}$ is singled out by selecting among the minimizers the one that further minimizes a tie-breaking convex functional $t_1(\theta)$ and, possibly, other convex functionals $t_2(\theta), t_3(\theta),\ldots$ in succession if the tie still occurs. Functional $f(\theta,\delta)$ is meant to quantify the \emph{level of appropriateness} of hypothesis $\theta$ for a given $\delta$ (refer to the SVR example where $f(\theta,\delta)$ is the vertical displacement between $y$ and the value of the linear model corresponding to $u$, to which the value $\gamma$ is subtracted). We say that a hypothesis $\theta$ is \emph{inappropriate} for $\delta$ if $f(\theta,\delta) > 0$ (in SVR, this corresponds to $y$ being away from $(w)^\top u+b$ more than $\gamma$). Variables $\xi_i$ are used to relax the constraint that the selected $\theta$ is appropriate for all $\tilde{\delta}_i^{(j)}$'s, and $\rho \cdot \xi_i$ is a penalty paid for inappropriateness. The hyper-parameter $\rho$ is used to tune the penalty so as to express more or less regret in case of constraint violation. In the special case in which $\widehat{A}_{\delta} = \{ \delta \}$ for all $\delta \in \Delta$, one goes back to a standard (non-adversarial) learning scheme. See also the next Section \ref{sec:opt-relax-apps} for more discussion on the interpretation of~\eqref{opt-relax}. \\
	
	In the present context the notion of adversarial risk becomes as follows.
	\begin{definition}[Adversarial inappropriateness and Adversarial risk] 
		A hypothesis $\theta$ is adversarially inappropriate for $\delta$ if there exists a $\tilde{\delta} \in A_{\delta}$ such that $f(\theta,\tilde{\delta}) > 0$. \\
		The {\em adversarial risk} of a hypothesis $\theta$, denoted $\textnormal{Risk}_A(\theta)$, is the probability of adversarial inappropriateness, i.e.,
		$$
		\textnormal{Risk}_A(\theta):=\prob \{ \delta:  \exists \tilde{\delta} \in A_{\delta} \mbox{ such that } f(\theta,\tilde{\delta}) > 0 \}.
		$$
		\hfill$\star$
	\end{definition}
	\noindent
	When $A_\delta = \{ \delta \}$, we simply speak of ``inappropriateness'' and the adversarial risk becomes the ``risk'' according to the following definition: $\textnormal{Risk}(\theta) := \prob \{ \delta:  \; f(\theta,\delta) > 0 \}$. \\
	
	The main thrust of this section is that the adversarial risk of hypothesis $\theta^\ast_{\widehat{A}}$ obtained by solving \eqref{opt-relax} can be accurately estimated from an observable quantity, which we again call ``adversarial complexity'' since it generalizes the same notion given in Section \ref{sec:SVR}, Definition \ref{def:adv_complex_outer}, for SVR.
	\begin{definition}[Adversarial complexity -- relaxed optimization schemes] \label{def:adv-cmplx-general}
		The adversarial complexity of $\theta^\ast_{\widehat{A}}$, denoted $s_{{A},\widehat{A}}^\ast$, is the number of data points $\delta_i$ that satisfy at least one of the following three conditions  
		\begin{itemize}
			\item[(i)] $f(\theta^\ast_{\widehat{A}},\tilde{\delta}_i) > 0$ for at least one $\tilde{\delta}_i \in A_{\delta_i}$
			\item[(ii)] $f(\theta^\ast_{\widehat{A}},\tilde{\delta}_i^{(j)}) = 0$ for at least one $\tilde{\delta}_i^{(j)} \in \widehat{A}_{\delta_i}$ 
			\item[(iii)]  $f(\theta^\ast_{\widehat{A}},\tilde{\delta}_i^{(j)}) > 0$ for at least one $\tilde{\delta}_i^{(j)} \in \widehat{A}_{\delta_i}$.
		\end{itemize}\hfill$\star$
	\end{definition}
	The only assumption we need to prove our results is the following mild condition of non-accumulation of $f(\theta,\delta)$ (this assumption replaces Assumption \ref{non-acc} for SVR. Indeed, the reader can verify that in the proof of the result for SVR -- more specifically, in the proof of Proposition \ref{prop:non-acc} in Section \ref{sec:derivations} -- Assumption \ref{non-acc} serves the only purpose of ensuring that Assumption \ref{non-acc-general} holds true). 
	\begin{assumption} \label{non-acc-general}
		For every $\theta$, it holds that 
		$$
		\prob \left\{\delta : \; \exists \, \tilde{\delta}^{(j)} \in \widehat{A}_\delta \text{ such that } f(\theta,\tilde{\delta}^{(j)}) = 0 \right\} = 0.
		$$ 	\hfill $\star$
	\end{assumption}
	We are now ready to state the main results of this section, Theorems \ref{th:general1} and \ref{th:general2}. These theorems are the counterparts within the current general setup of Theorems \ref{main_th_standard} and \ref{main_th_outer}. For more explanation and interpretation of the results, the reader is referred to Section \ref{sec:SVR}, as the discussion provided there can be easily adapted to Theorems \ref{th:general1} and \ref{th:general2}. 
	\begin{theorem} \label{th:general1}
		Under Assumption \ref{non-acc-general} and the condition that $\widehat{A}_\delta \subseteq A_\delta$ for all $\delta \in \Delta$, it holds that 
		\begin{equation} 
			\prob^N \{\,{\cal D} \,:\, \underline{\eps}(s_{A,\widehat{A}}^\ast)\leq \textnormal{Risk}_A(\theta^\ast_{\widehat{A}})\leq \overline{\eps}(s_{A,\widehat{A}}^\ast)\,\,\}\,\geq\,1-\beta,
		\end{equation}
		where $\theta^\ast_{\widehat{A}}$ is the hypothesis obtained from \eqref{opt-relax} and $s_{A,\widehat{A}}^\ast$ is its adversarial complexity according to Definition \ref{def:adv-cmplx-general}. \hfill$\star$
	\end{theorem}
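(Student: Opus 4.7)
The plan is to deduce Theorem~\ref{th:general1} from the compression-based concentration inequality for relaxed convex scenario programs underlying \cite{Garatti22,CompressionGeneralizationandLearning2023}, whose quantiles are precisely $\overline{\eps}$ and $\underline{\eps}$ as defined via \eqref{pol_eq-for-eps(k)-relax}--\eqref{pol_eq-for-eps(N)-relax}. The key step is to identify $s^\ast_{A,\widehat A}$ with the cardinality of an \emph{adversarial compression set} of the training sample that, at once, leaves $\theta^\ast_{\widehat A}$ unchanged when its complement is removed, and captures adversarial misprediction on fresh samples.

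First I would cast \eqref{opt-relax} in canonical relaxed-scenario form by consolidating, for each $i$, the $M$ sub-constraints into the single convex inequality $\max_j f(\theta,\tilde\delta_i^{(j)})\leq\xi_i$, yielding one scenario per data point. I would then introduce the \emph{adversarial compression set} $\mathcal C(\mathcal D):=\{i:\text{condition (i), (ii), or (iii) of Definition~\ref{def:adv-cmplx-general} holds for }\delta_i\}$. Because $\widehat A_\delta\subseteq A_\delta$, condition~(iii) is subsumed by~(i), so $|\mathcal C(\mathcal D)|=s^\ast_{A,\widehat A}$. To confirm that $\mathcal C$ is a bona fide compression, observe that if $i\notin\mathcal C$ then NOT~(ii) and NOT~(iii) hold at $\delta_i$; hence all constraints stemming from $\widehat A_{\delta_i}$ are strictly slack at $(\theta^\ast_{\widehat A},\xi_i^\ast)$, and removing $\delta_i$ does not perturb the optimum of \eqref{opt-relax}.

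Next I would invoke the concentration inequality of \cite{CompressionGeneralizationandLearning2023}, which, applied to the compression $\mathcal C$, asserts that with confidence at least $1-\beta$ the probability of the event $E:=\{\delta:\delta\in\mathcal C(\mathcal D\cup\{\delta\})\}$ is sandwiched between $\underline\eps(|\mathcal C(\mathcal D)|)$ and $\overline\eps(|\mathcal C(\mathcal D)|)$. The crux is then to prove that $\prob(E)=\textnormal{Risk}_A(\theta^\ast_{\widehat A})$ almost surely. I would do this by a case analysis on whether condition~(iii) fires for $\delta$ at the \emph{original} solution $\theta^\ast_{\widehat A}$. If it does not, then $(\theta^\ast_{\widehat A},\xi_i^\ast,\xi_{\text{new}}=0)$ is feasible for the enlarged program and attains its optimum, so by the tie-breaking rule $\theta^\ast_{\widehat A\cup\widehat A_\delta}=\theta^\ast_{\widehat A}$; hence $\delta\in\mathcal C(\mathcal D\cup\{\delta\})$ iff condition~(i) or (ii) fires for $\delta$ at $\theta^\ast_{\widehat A}$, which by Assumption~\ref{non-acc-general} is a.s.\ equivalent to adversarial misprediction. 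If instead (iii) fires for $\delta$ at $\theta^\ast_{\widehat A}$, then $\widehat A_\delta\subseteq A_\delta$ forces (i) to fire as well, so adversarial misprediction holds; moreover the enlarged solution either differs from $\theta^\ast_{\widehat A}$ (and then $\delta$ is active or relaxed at it, i.e., $\delta\in\mathcal C(\mathcal D\cup\{\delta\})$) or equals it (and then (iii) still fires, again giving $\delta\in\mathcal C(\mathcal D\cup\{\delta\})$). Either way $E$ coincides with adversarial misprediction up to a $\prob$-null set.

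The main obstacle is exactly the equivalence between $E$ and adversarial misprediction just outlined, which is the only—but essential—place where the hypothesis $\widehat A_\delta\subseteq A_\delta$ intervenes: it forces condition~(iii) at the original solution to coexist with condition~(i), and prevents the enlarged solution from spuriously restoring $A$-constraints that were violated by $\theta^\ast_{\widehat A}$. Once this equivalence is secured, the concentration inequality of \cite{CompressionGeneralizationandLearning2023}—which provides matching bounds for the probability of the compression event in terms of the observed compression cardinality—delivers $\underline\eps(s^\ast_{A,\widehat A})\leq\textnormal{Risk}_A(\theta^\ast_{\widehat A})\leq\overline\eps(s^\ast_{A,\widehat A})$ with confidence at least $1-\beta$, completing the proof.
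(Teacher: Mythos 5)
Your proposal is correct in substance and arrives at the theorem by essentially the same mechanism as the paper, but packaged in the sibling \emph{compression-function} framework of \cite{CompressionGeneralizationandLearning2023} rather than the \emph{consistent-maps/support-elements} framework of \cite{Garatti22} that the paper actually invokes. The conceptual core is identical in both routes: the quantity $s^\ast_{A,\widehat A}$ must count not only the points that are active or violated for the \emph{finite} sets $\widehat A_{\delta_i}$ seen by the optimizer, but also the points whose full region $A_{\delta_i}$ is violated even though all their $\widehat A_{\delta_i}$-constraints are strictly slack; and the inclusion $\widehat A_\delta\subseteq A_\delta$ is what makes the resulting violation event coincide exactly with adversarial inappropriateness, yielding the two-sided bound. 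Where you differ is in \emph{how} those condition-(i)-only points are forced to count: you put them directly into the compression set $\mathcal C(\mathcal D)$, whereas the paper augments the decision to a pair $z^\ast_N=(\theta^\ast_{\widehat A},\calL^\ast)$, with $\calL^\ast$ the multiset of violated data points, precisely so that removing such a point changes the decision and hence makes it a support element (the paper is explicit that the naive identification $z^\ast_N=\theta^\ast_{\widehat A}$ fails the assumptions of \cite{Garatti22}). These are two faces of the same device, and your case analysis on whether condition (iii) fires at $\theta^\ast_{\widehat A}$ correctly reproduces, for the augmented sample $\mathcal D\cup\{\delta\}$, the content of the paper's ``stability/responsiveness'' and non-degeneracy lemmas, including the essential use of Assumption \ref{non-acc-general} to kill the measure-zero ``touching'' event (condition (ii)) — though when you apply that assumption to the data-dependent $\theta^\ast_{\widehat A}$ you should say explicitly that this is licensed by the independence of the fresh $\delta$ from $\mathcal D$, as the paper does in a footnote.

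The one place where your write-up is genuinely thinner than the paper is the verification that $\mathcal C$ qualifies for the \emph{two-sided} concentration inequality. You check only that non-compression points can be removed without perturbing the optimum (the analogue of the preference property), but the lower bound $\underline{\eps}(s^\ast_{A,\widehat A})\leq\textnormal{Risk}_A(\theta^\ast_{\widehat A})$ also needs the non-degeneracy side — that, with probability one, the decision reconstructed from the compression set alone coincides with the decision from the full sample and that the compression does not overcount. The paper discharges this through Lemma \ref{main-prop:consistency} (permutation invariance, stability under confirmation, responsiveness to contradiction) and Lemma \ref{main-prop:nondegeneracy} (a.s.\ identification of the support elements with the points satisfying (i)--(iii)); you should state and check the corresponding axioms of the compression theorem you invoke, rather than citing it as a black box. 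With that verification added, your argument is complete.
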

	
	\begin{proof}
		See Section \ref{sec:derivations-extension}.
	\end{proof}
	
	\begin{theorem}	\label{th:general2}
		Under the sole Assumption \ref{non-acc-general} (without the requirement that $\widehat{A}_\delta \subseteq A_\delta$), it holds that 
		\begin{equation} 
			\prob^N \{\,{\cal D} \,:\,  \textnormal{Risk}_A(\theta^\ast_{\widehat{A}})\leq \overline{\eps}(s_{A,\widehat{A}}^\ast)\,\,\}\,\geq\,1-\beta,
		\end{equation}
		where $\theta^\ast_{\widehat{A}}$ is the hypothesis obtained from \eqref{opt-relax} and $s_{A,\widehat{A}}^\ast$ is its adversarial complexity according to Definition \ref{def:adv-cmplx-general}. \hfill$\star$
	\end{theorem}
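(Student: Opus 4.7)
The plan is to reduce Theorem \ref{th:general2} to the already-established Theorem \ref{th:general1} through an augmentation argument. The key idea is that, even when $\widehat{A}_\delta \not\subseteq A_\delta$, one can enlarge the adversarial region by setting $A'_\delta := A_\delta \cup \widehat{A}_\delta$, so that the containment hypothesis $\widehat{A}_\delta \subseteq A'_\delta$ required by Theorem \ref{th:general1} is automatically satisfied. I would then apply Theorem \ref{th:general1} to the pair $(A',\widehat{A})$ and finally transfer the resulting bound back to $A$.

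First I would note that the program \eqref{opt-relax} depends only on $\widehat{A}$, not on the adversarial region $A$; therefore the minimizer $\theta^\ast_{\widehat{A}}$ is unchanged if we replace $A$ with $A'$. Likewise Assumption \ref{non-acc-general} involves only $\widehat{A}$ and is untouched. Consequently Theorem \ref{th:general1}, applied to $(A',\widehat{A})$, yields
$$
\prob^N \{\, {\cal D} : \textnormal{Risk}_{A'}(\theta^\ast_{\widehat{A}}) \leq \overline{\eps}(s^\ast_{A',\widehat{A}}) \,\} \geq 1-\beta.
$$

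Next I would argue the two required identifications. Since $A_\delta \subseteq A'_\delta$ for every $\delta$, by the very definition of adversarial risk we have $\textnormal{Risk}_A(\theta^\ast_{\widehat{A}}) \leq \textnormal{Risk}_{A'}(\theta^\ast_{\widehat{A}})$. On the combinatorial side, inspecting Definition \ref{def:adv-cmplx-general} for the pair $(A',\widehat{A})$: condition (i) becomes ``$f(\theta^\ast_{\widehat{A}},\tilde{\delta}_i)>0$ for some $\tilde{\delta}_i\in A_{\delta_i}\cup \widehat{A}_{\delta_i}$'', which is exactly the disjunction of conditions (i) and (iii) of Definition \ref{def:adv-cmplx-general} relative to $A$; condition (ii) is unchanged because it refers only to $\widehat{A}$; and condition (iii) relative to $A'$ is redundant since $\widehat{A}_\delta \subseteq A'_\delta$ makes it a special case of (i) for $A'$. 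Thus the count is preserved, $s^\ast_{A',\widehat{A}} = s^\ast_{A,\widehat{A}}$, where the right-hand side is evaluated via the general Definition \ref{def:adv-cmplx-general} with the original $A$.

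Combining the two observations gives $\textnormal{Risk}_A(\theta^\ast_{\widehat{A}}) \leq \textnormal{Risk}_{A'}(\theta^\ast_{\widehat{A}}) \leq \overline{\eps}(s^\ast_{A',\widehat{A}}) = \overline{\eps}(s^\ast_{A,\widehat{A}})$ on the event of probability at least $1-\beta$ furnished by Theorem \ref{th:general1}, which is the desired conclusion. The main obstacle, though quite mild, is the careful bookkeeping in the combinatorial identification $s^\ast_{A',\widehat{A}} = s^\ast_{A,\widehat{A}}$: one must verify case-by-case that condition (iii) of the general definition, which was introduced precisely to handle $\widehat{A}\not\subseteq A$, gets absorbed into condition (i) upon augmentation. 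No new probabilistic machinery beyond Theorem \ref{th:general1} is required, which also transparently explains why only the upper bound survives when the containment hypothesis is dropped: the lower bound in Theorem \ref{th:general1} refers to $\textnormal{Risk}_{A'}$, which can be strictly larger than $\textnormal{Risk}_A$, and hence does not transfer.
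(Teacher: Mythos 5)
Your proof is correct. It is worth noting how it relates to the paper's own argument: the paper proves Theorems \ref{th:general1} and \ref{th:general2} simultaneously by running the scenario-theoretic machinery of \cite{Garatti22} once, where the violation probability of the decision $z^\ast_N$ is, by construction, $V(z^\ast_N)=\prob\{\delta: A_\delta\cup\widehat{A}_\delta\not\subseteq\{\delta': f(\theta^\ast_{\widehat{A}},\delta')\le 0\}\}$ --- i.e., exactly the adversarial risk of your augmented region $A'_\delta=A_\delta\cup\widehat{A}_\delta$ --- and then observes that $\textnormal{Risk}_A(\theta)\le V(z)$ in general, with equality when $\widehat{A}_\delta\subseteq A_\delta$. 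Your reduction packages the same underlying mechanism differently: you invoke Theorem \ref{th:general1} as a black box on the pair $(A',\widehat{A})$ instead of re-deriving the bound from \cite{Garatti22}, which is legitimate since Theorem \ref{th:general1} is established independently, and your combinatorial check that $s^\ast_{A',\widehat{A}}=s^\ast_{A,\widehat{A}}$ (condition (iii) of Definition \ref{def:adv-cmplx-general} being absorbed into condition (i) for $A'$, and condition (i) for $A'$ being the disjunction of (i) and (iii) for $A$) is exactly right. What your route buys is modularity --- no need to revisit the consistency and non-degeneracy lemmas --- and a transparent explanation of why only the upper bound survives; what the paper's route buys is that both theorems come out of a single application of the support-element analysis, with the equality $\textnormal{Risk}_A=V$ in the nested case yielding the two-sided bound directly. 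The only mild caveat, which does not affect validity here, is that your argument presupposes Theorem \ref{th:general1} holds for arbitrary adversarial regions including the augmented $A'_\delta$; this is indeed the case since the paper places no structural restrictions on $A_\delta$ beyond those needed for $\textnormal{Risk}_{A'}$ to be well defined, and $A'_\delta$ differs from $A_\delta$ only by a finite set.
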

	
	\begin{proof}
		See Section \ref{sec:derivations-extension}.
	\end{proof}
	
	\subsection{Some domains of application} 
	\label{sec:opt-relax-apps}
	
	Our goal in this paper was to present and discuss our results for SVR, followed by a formal proof that they extend to the general setup of relaxed optimization, while leaving the details of this extension's utilization to future contributions (since the present paper is already long and dense in its current form). Nevertheless, we find it advisable to at least briefly touch upon here some potential directions for its use. \\
	
	Optimization with constraint relaxation lies at the very core of all Support Vector (SV) methods. This includes:
	\begin{itemize}
		\item[-] all variants of SVR, namely SVR with fixed size, \cite{SmolaScholkopf2004}, and SVR with width depending on $u$ (these regression models are also known as Interval Predictor Models (IPM)
		and have been studied in  \cite{CaCaGa:09,CreKenGie2015,crespo2016interval,GarCamCare2019}). In these prediction schemes, the satisfaction of Assumption \ref{non-acc-general} can be secured by conditions akin to Assumption \ref{non-acc}.
		\item[-] SV methods for novelty/outlier detection, like e.g. one-class SVM, \cite{ScWiSmSTPl:99}, Support Vector Data-Description (SVDD), \cite{TaxDuin2004,WangChungWang2011}, and methods based on sliced-normal distributions, \cite{CreColKenGie2019}. In this setup, data points are vectors that contain features describing the members of a given population and the objective is to construct a descriptive region (e.g., in SVDD, this is a ball in a lifted feature space) that covers a high portion of the population distribution. 
		In this case, the satisfaction of Assumption \ref{non-acc-general} follows from requiring that the distribution of the population does not accumulate anomalously, e.g. that it admits a density. 
		\item[-] the framework of optimization with constraint relaxation is also in use in Support Vector Machines (SVM) for classification problems, \cite{CortesVapnik1995,VapnikBOOK}. It is fair to notice, however, that the circumstance that $y$ is a label taking value from a finite alphabet (e.g., from $\{0,1\}$ in binary classification) makes it more difficult to secure the satisfaction of Assumption \ref{non-acc-general} in this context. The reasons of this fact are discussed in \cite{Campi21ml} in a non-adversarial setup. We envisage that 
		the discussion in \cite{Campi21ml} can be carried over to the present adversarial setting, and in particular that the argument used in \cite{Campi21ml} to circumvent the problem in a non-adversarial setup can also be adopted in the adversarial setting. 
	\end{itemize}   
	
	Interestingly, theoretical results similar to those valid for \eqref{opt-relax} are expected to be usable in classification based on \emph{empirical error minimization}. To this end, consider any family of classifiers $Y_\theta(\cdot)$, where each value of $\theta$ defines a map from the instance space of a variable $u$ to a label, say, $y \in \{0,1\}$. Setting $c(\theta) = 0$, $\rho = 1$ and $f(\theta,u) = \One{Y_\theta(u) \neq y}$ (where $\One{\cdot}$ denotes the indicator function), problem \eqref{opt-relax} becomes 
	$$
	\min_{\theta \in \Theta} \sum_{i=1}^{N} \One{Y_\theta(\tilde{u}_i^{(j)}) \neq \tilde{y}_i^{(j)} \text{ for at least a } j\in\{1,\ldots,M\} },
	$$
	which corresponds to an adversarial empirical error minimization over the training set. The difficulty with this setup rests in the fact that it is not convex, as required in \eqref{opt-relax}. However, scenario results underpinning the achievements of this paper have been recently extended to a non-convex setup in the foundational work \cite{GarCam2024}, and we expect these new results to be carried over so as to cover classification via empirical error minimization. \\
	
	In addition, we would like to point out that optimization with constraint relaxation has been also used within the context of the so-called \emph{scenario approach}, \cite{CamGaBOOK2018}, a flexible scheme for data-driven decision-making, \cite{campi2021Annual,Garatti22}. In this context, $\theta$ represents a decision (e.g., the parameter of a controller, or a portfolio in an investment problem) rather than a model, and parameter $\delta$ indicates a realization of the environment to which the decision is applied (e.g., the transfer function of the system to be controlled, or the evolution of the market in an investment problem). 
	While the results of this section open new perspectives toward establishing a new adversarial theory for scenario data-driven decision-making, a complete discussion is beyond the scope of the present paper.

	\section{Risk evaluations for out-of-distribution observations} 
	\label{sec:ambiguous} 
	
	\newcommand{\probq}{{\mathbb Q}}
	
	The findings of Section \ref{sec:opt-relax} carry significant implications for addressing (non-adversarial) risk evaluations in problems where the training set ${\cal D} = \{\delta_i\}_{i=1}^N$, $i=1,\ldots,N$, is an i.i.d.\ sample drawn according to a probability $\prob$ and one wants to provide risk evaluations for new observations coming from a  different probability $\prob'$. As an example, the training set ${\cal D}$ may come from a laboratory environment (i.e., a {\em simulator}), and the $\delta'$ against which the hypothesis is used comes from the real world. 
	This problem falls within the field of \emph{out-of-distribution} generalization theory, a topic of growing importance in the machine learning community, \cite{ErdIyen2026,Domain_Generalization_2023,NEURIPS2021_c5c1cb0b,Wang_etal_2023,liu2023outofdistribution}. Our results share similarities with the recent work \cite{YanPariseBitar2022}; however, by leveraging the new adversarial results presented in Section \ref{sec:opt-relax}, we can adopt a more general perspective than \cite{YanPariseBitar2022}, as discussed following the statement of Theorem \ref{th:ambiguous}. 
	\\ 
	
	In the following, we assume that both $\prob$ and $\prob'$ are unknown. On the other hand, as is clear, keeping control on the risk associated with observations coming from $\prob'$, while only having access to a dataset drawn from $\prob$, requires introducing some information on the mismatch between the two; to this end, we use the well-known \emph{Wasserstein metric}.\footnote{The Wasserstein metric is a flexible tool, popular in many fields, also largely adopted in the emerging area of Distributionally Robust Optimization (DRO), \cite{PeymanKuhn2018,Kuhnetal2019,GaoChenKleywegt2022}.} Start by assuming that $\Delta$ is a metric space with distance $d(\cdot, \cdot)$. No restrictions on $\Delta$ and $d(\cdot,\cdot)$ are introduced. The Wasserstein distance of $\prob$ and $\prob'$ is defined as 
	\begin{displaymath}
		{\mathcal W}(\prob, \prob') := \inf_{\probq}\ \E_\probq \left[ d(\delta, \delta') \right],
	\end{displaymath}
	where $(\delta, \delta')$ is a random element from $(\Delta\times\Delta, \mathcal{G}\otimes\mathcal{G}, \probq)$,
	and the infimum is taken over all probabilities $\probq$ on $(\Delta\times\Delta, \mathcal{G}\otimes\mathcal{G})$
	whose first and second marginals are, respectively, $\prob$ and $\prob'$.\footnote{In more explicit terms: for all $G \in \mathcal{G}$, $\probq\{(\delta, \delta') :\ \delta\in G\} = \prob\{G\}$ and $\probq\{(\delta, \delta') :\ \delta'\in G\} = \prob'\{G\}$.} 
	%
	\\
	
	The following assumption coincides with that in \cite{YanPariseBitar2022}.
	\begin{assumption} \label{assumption-ambiguous}
		$ $
		\begin{displaymath}
			{\mathcal W}(\prob, \prob') \leq \mu, \quad\text{\rm for some $\mu >0$, known to the user}.
		\end{displaymath}
		\hfill $\star$
	\end{assumption}
	
	\noindent
	
	We mean to study the out-of-distribution risk of $\theta^\ast_{\widehat{A}}$, where the out-of-distribution risk for a $\theta \in \Theta$ is defined as follows. 
	\begin{definition}[Out-of-distribution risk] 
		$$
		\textnormal{Risk}'(\theta) := \prob' \{ \delta':  f(\theta,\delta') > 0 \}.
		$$
		\hfill$\star$
	\end{definition}
	
	To conduct this study by resorting to the adversarial results of Section \ref{sec:opt-relax} as a tool of investigation,\footnote{We repeat that the evaluations we want to carry out in this section refer to the standard setup with non-adversarial actions; in this endeavor, adversarial results are used as an enabling tool of investigation.} consider adversarial regions $A_\delta$ that are {\em closed balls} in $\Delta$: $A_\delta = \{\tilde{\delta}\ :\ d(\tilde{\delta}, \delta) \leq R\}$ for some $R \geq 0$ (think of $R$ as a free parameter that can be tuned when pursuing the evaluation of the out-of-distribution risk). 
	$\widehat{A}_\delta$ instead is completely free, it can be any finite set of points in $\Delta$ that varies with $\delta$, and it may well be that $\widehat{A}_\delta = \{ \delta \}$.
	$\theta^\ast_{\widehat{A}}$ is the hypothesis obtained from \eqref{opt-relax}, and $s_{A,\widehat{A}}^\ast$ its adversarial complexity. 
	Note that only the adversarial complexity $s_{A,\widehat{A}}^\ast$ depends on $A_{\delta}$, and hence on $R$, while the hypothesis $\theta^\ast_{\widehat{A}}$ does not. 
	%
	The main result of this section, Theorem \ref{th:ambiguous}, shows that the adversarial complexity $s_{A,\widehat{A}}^\ast$, along with the knowledge of $\mu$, allows one to evaluate the out-of-distribution risk. 
	
	\begin{theorem}
		\label{th:ambiguous}
		Under Assumptions \ref{non-acc-general} and \ref{assumption-ambiguous}, it holds that
		\begin{equation} 
			\label{result out-of-distribution}
			\prob^N \left\lbrace {\cal D} \ :\  \mathrm{Risk}'(\theta^\ast_{\widehat{A}})\ \leq\ \overline{\eps}(s_{A,\widehat{A}}^\ast) + \frac{\mu}{R} \right\rbrace \geq 1-\beta.
		\end{equation}
		\hfill $\star$
	\end{theorem}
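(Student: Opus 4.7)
The plan is to reduce the out-of-distribution statement to the already-established adversarial guarantee of Theorem \ref{th:general2}. The key deterministic fact I would prove first is the pointwise inequality
\begin{equation}
\mathrm{Risk}'(\theta) \;\leq\; \mathrm{Risk}_A(\theta) + \frac{\mu}{R},
\label{eq:pointwise-ood-plan}
\end{equation}
valid for every $\theta \in \Theta$ whenever $A_\delta$ is the closed ball of radius $R$ around $\delta$. Once \eqref{eq:pointwise-ood-plan} is in hand, the theorem follows by instantiating it at $\theta = \theta^\ast_{\widehat{A}}$ and applying Theorem \ref{th:general2}: on the event $\{\mathrm{Risk}_A(\theta^\ast_{\widehat{A}}) \leq \overline{\eps}(s^\ast_{A,\widehat{A}})\}$, which has $\prob^N$-probability at least $1-\beta$, combining the two bounds gives exactly \eqref{result out-of-distribution}. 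Notice that it is crucial here that the adversarial inequality in Theorem \ref{th:general2} holds without requiring $\widehat{A}_\delta \subseteq A_\delta$, since in the present construction $\widehat{A}_\delta$ is fixed by the user and has no a priori relation to the balls $A_\delta$ introduced purely as an analytic device.

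To establish \eqref{eq:pointwise-ood-plan}, I would fix an arbitrary $\epsilon > 0$ and pick, via Assumption \ref{assumption-ambiguous} and the definition of the Wasserstein distance, a coupling $\probq$ of $\prob$ and $\prob'$ with $\E_\probq[d(\delta,\delta')] \leq \mu + \epsilon$. Writing $\mathrm{Risk}'(\theta) = \probq\{(\delta,\delta') : f(\theta,\delta') > 0\}$ using the second marginal and splitting according to whether $d(\delta,\delta') \leq R$, I bound the ``near'' contribution by $\mathrm{Risk}_A(\theta)$: on the event $d(\delta,\delta') \leq R$ the point $\delta'$ itself belongs to $A_\delta$ and therefore serves as a witness of adversarial inappropriateness whenever $f(\theta,\delta') > 0$, so this contribution is at most $\probq\{\exists \tilde{\delta} \in A_\delta : f(\theta,\tilde{\delta}) > 0\} = \prob\{\exists \tilde{\delta} \in A_\delta : f(\theta,\tilde{\delta}) > 0\} = \mathrm{Risk}_A(\theta)$, the equality between the $\probq$-probability and the $\prob$-probability following because the event depends on $\delta$ only. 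The ``far'' contribution is $\probq\{d(\delta,\delta') > R\}$, which Markov's inequality bounds by $(\mu+\epsilon)/R$. Sending $\epsilon \downarrow 0$ yields \eqref{eq:pointwise-ood-plan}.

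I expect the only genuinely delicate point to be the measurability bookkeeping around the adversarial event $\{\exists \tilde{\delta} \in A_\delta : f(\theta,\tilde{\delta}) > 0\}$, which must be a measurable subset of $\Delta$ (so that $\mathrm{Risk}_A(\theta)$ is well defined and equals the corresponding $\probq$-probability on the product space). This is a standard issue: under continuity of $f(\theta,\cdot)$ and separability of the metric space $\Delta$, the event is open in $\delta$-slices and measurability follows from projection arguments; otherwise, inner/outer regularity of $\prob$ and $\probq$ suffices. Beyond this, the remaining steps are one line of Markov, one invocation of Theorem \ref{th:general2}, and a union of events, none of which introduces any further obstacle.
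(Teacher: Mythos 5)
Your proposal is correct and follows essentially the same route as the paper: the same coupling from the Wasserstein bound, the same near/far decomposition with $\delta'$ serving as the adversarial witness when $d(\delta,\delta') \leq R$, the same Markov bound on the far event, the limit $\epsilon \downarrow 0$, and the final invocation of Theorem \ref{th:general2} (whose validity without $\widehat{A}_\delta \subseteq A_\delta$ you rightly flag as essential). The only difference is your explicit attention to measurability of the adversarial event, which the paper leaves implicit.
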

	
	\begin{proof}
		By the Wasserstein bound in Assumption \ref{assumption-ambiguous}, and by the definition of infimum, for all $\eta>0$ there exists a probability
		$\probq$, with marginals $\prob$ and $\prob'$, such that
		$\E_\probq \left[ d(\delta, \delta') \right] \leq \mu + \eta$.
		By Markov's inequality, for this $\probq$ it holds that 
		\begin{equation}
			\label{eq:amb1}
			\probq \{(\delta, \delta'):\  d(\delta, \delta') > R \}
			\leq 
			\frac{\E_\probq \left[ d(\delta, \delta') \right]}{R} \leq \frac{\mu + \eta}{R}.
		\end{equation}
		Recall that $A_\delta = \{\tilde{\delta} :\ d(\tilde{\delta}, \delta) \leq R\}$. For any $\theta$, we have 
		\begin{align*}
			&\{(\delta, \delta') :\ f(\theta,\delta') > 0 \} \\
			&\quad = \{(\delta, \delta') :\ d(\delta, \delta')\leq R \ \land\ f(\theta,\delta') > 0 \} \ \cup \ \{(\delta, \delta') :\ d(\delta, \delta') > R \ \land\ f(\theta,\delta')>0 \} \\
			&\quad \subseteq \{(\delta, \delta') :\ \exists \tilde{\delta}\ \text{with} \ d(\delta, \tilde{\delta}) \leq R \ \land\ f(\theta,\tilde{\delta}) > 0 \} \ \cup \ \{(\delta, \delta') :\ d(\delta, \delta') > R \} \\
			&\quad = \{(\delta, \delta') :\ \exists \tilde{\delta}\in A_\delta \ \text{\rm s.t.}\ f(\theta,\tilde{\delta}) > 0 \} \ \cup \ \{(\delta, \delta') :\ d(\delta, \delta') > R \}, 
		\end{align*}
		from which, by sub-additivity and \eqref{eq:amb1}, we obtain 
		\begin{align*}
			\mathrm{Risk}'(\theta) 
			&= \prob'\{\delta' :\ f(\theta,\delta') > 0 \} \\
			&= \probq \{(\delta, \delta') :\ f(\theta,\delta') > 0 \} \\
			& \leq \probq\{(\delta, \delta') :\ \exists \tilde{\delta}\in A_\delta \ \text{\rm s.t.}\ f(\theta,\tilde{\delta}) > 0 \} \ + \ \probq\{(\delta, \delta')\ :\ d(\delta, \delta') > R \} \\
			& \leq \prob\{\delta :\ \exists \tilde{\delta}\in A_\delta \ \text{\rm s.t.}\ f(\theta,\tilde{\delta}) > 0 \} \ + \ \frac{\mu + \eta}{R} \\
			& = \textnormal{Risk}_A(\theta) \ + \ \frac{\mu + \eta}{R}.
		\end{align*}
		Since this result is true for every $\eta > 0$, it follows that
		\begin{equation}
			\label{Risk'<=}
			\mathrm{Risk}'(\theta) 
			\leq \mathrm{Risk}_A(\theta) \ + \ \frac{\mu}{R}. 
		\end{equation} 
		This implies that 
		\begin{align*}
			\left\lbrace {\cal D}:\  \mathrm{Risk}_A(\theta^\ast_{\widehat{A}})\ \leq\ \overline{\eps}(s_{A,\widehat{A}}^\ast) \right\rbrace
			\subseteq
			\left\lbrace {\cal D}:\  \mathrm{Risk}'(\theta^\ast_{\widehat{A}})\ \leq\ \overline{\eps}(s_{A,\widehat{A}}^\ast) + \frac{\mu}{R} \right\rbrace,
		\end{align*}
		which gives 
		\begin{eqnarray*}
			\prob^N \left\lbrace {\cal D}:\  \mathrm{Risk}'(\theta^\ast_{\widehat{A}})\ \leq\ \overline{\eps}(s_{A,\widehat{A}}^\ast) + \frac{\mu}{R} \right \rbrace & \geq &
			\prob^N \left\lbrace {\cal D}:\  \mathrm{Risk}_A(\theta^\ast_{\widehat{A}})\ \leq\ \overline{\eps}(s_{A,\widehat{A}}^\ast) \right\rbrace \\
			& \geq & 1-\beta,
		\end{eqnarray*}
		where the last inequality follows from Theorem \ref{th:general2}.
	\end{proof}
	
	The first part of the proof of Theorem \ref{th:ambiguous} closely follows an argument used in \cite[Lemma 1]{YanPariseBitar2022}, which was also used in \cite{Luedtke_Ahmed_2008} and \cite{FALSONE2019108537} in a different context. The main contribution compared to \cite{YanPariseBitar2022} lies in utilizing Theorem \ref{th:general2} to bound $\mathrm{Risk}_A(\theta^\ast_{\widehat{A}})$ in the last part of the proof, leading to a significantly stronger result than that in \cite[Theorem 3]{YanPariseBitar2022} in two respects:
	\begin{itemize}
		\item[i.] differently from \cite{YanPariseBitar2022}, our bound on $\mathrm{Risk}'(\theta^\ast_{\widehat{A}})$ is adapted to the complexity $s^\ast_{A,\widehat{A}}$, a statistic of the data, which enables tracking the actual value of $\mathrm{Risk}'(\theta^\ast_{\widehat{A}})$ from one experiment to another without introducing over-conservatism;
		\item[ii.] thanks to the introduction of the advanced notion of adversarial complexity, our result can be applied to solutions that are decoupled from the assumed Wasserstein distance between $\prob$ and $\prob'$. In particular, Theorem \ref{th:ambiguous} applies when $\widehat{A}_\delta = \{ \delta \}$, i.e., when $\theta^\ast_{\widehat{A}}$ is just a standard, non-robust, solution. This is different from \cite{YanPariseBitar2022}, whose main result is only applicable to solutions satisfying the infinitely many constraints $f(\theta,\tilde{\delta}) \leq 0$, $\forall \tilde{\delta} \in A_{\delta_i}$, $i=1,\ldots,N$, where $A_{\delta_i}$ is tuned to the Wasserstein bound. 
	\end{itemize}
	As previously noted, $R$ plays the role of a tunable parameter, and the result in Theorem~\ref{th:ambiguous} holds for any choice of the value of $R$. As a consequence, the user can play with $R$ to optimize the bound on $\mathrm{Risk}'(\theta^\ast_{\widehat{A}})$ given in Theorem \ref{th:ambiguous}. As $R$ increases, $s^\ast_{A,\widehat{A}}$ (and, thereby, $\overline{\eps}(s^\ast_{A,\widehat{A}})$) tends to increase while $\mu/R$ diminishes. While the best compromise is difficult to foresee, one can experimentally try various choices $R_1 < R_2 < \cdots < R_i <  \cdots R_h$ and select the one giving the best result. The corresponding confidence level can be bounded as follows: 
	%
	\begin{align*}
		& \prob^N \left\lbrace {\cal D}:\  \textnormal{Risk}'(\theta^\ast_{\widehat{A}})\ >\ \overline{\eps}(s_{A,\widehat{A},i}^\ast) + \frac{\mu}{R_i} \ \ \text{\rm for at least one}\ i \in \{1,\ldots h\} \right\rbrace  \\
		& \quad\quad \leq \sum_{i=1}^h \prob^N \left\lbrace {\cal D}:\  \textnormal{Risk}'(\theta^\ast_{\widehat{A}})\ >\ \overline{\eps}(s_{A,\widehat{A},i}^\ast) + \frac{\mu}{R_i} \right\rbrace \\
		& \quad\quad \leq \sum_{i=1}^h \beta = h\beta,
	\end{align*}
	from which 
	\begin{equation}
		\label{confidence-h-cases}
		\prob^N \left\lbrace {\cal D}:\  \textnormal{Risk}'(\theta^\ast_{\widehat{A}})\ \leq\ \overline{\eps}(s_{A,\widehat{A},i}^\ast) + \frac{\mu}{R_i} \ \ \text{\rm for all}\ i=1,\ldots h \right\rbrace \geq 1 - h\beta.
	\end{equation}
	%
	Therefore, the user can claim the result obtained by minimizing over the tested choices of $R$ with confidence $1- h\beta$. The presence of $h$ in front of $\beta$ has quite a minor impact because the dependence of $\overline{\eps}$ on $1/\beta$ is logarithmic (see Section \ref{sec:bounding_adv_risk}), which implies that $\beta$ can be made quite small without significantly affecting the upper bound on the risk. 
	%
	
	\begin{remark}[about ``$\sup_{\prob'}$'']\label{remark sup P'} Theorem \ref{th:ambiguous} states the result \eqref{result out-of-distribution}, which holds for any $\prob'$ belonging to a Wasserstein ball of radius $\mu$ centered in $\prob$. Therefore, equation \eqref{result out-of-distribution} might also be expressed by adding a ``$\sup_{\prob'}$'' in front of its left-hand side (in notation $\sup_{\prob'}$ we have omitted for brevity the specification that $\prob'$ belongs to the Wasserstein ball). Interestingly, we may show that the result also holds in a somewhat stronger sense. Start by considering equation \eqref{Risk'<=}; it can be re-written as $\sup_{\prob'} \mathrm{Risk}'(\theta) \leq \sup_{\prob'} \left[ \mathrm{Risk}_A(\theta) \ + \ \frac{\mu}{R} \right] = \mathrm{Risk}_A(\theta) \ + \ \frac{\mu}{R}$, where $\sup_{\prob'}$ has been suppressed in the last step because the right-hand side does not depend on $\prob'$. Consequently, the two equations in displaymath that follow \eqref{Risk'<=} can also be re-written as $\left\lbrace {\cal D}:\  \mathrm{Risk}_A(\theta^\ast_{\widehat{A}})\ \leq\ \overline{\eps}(s_{A,\widehat{A}}^\ast) \right\rbrace \subseteq \left\lbrace {\cal D}:\  \sup_{\prob'}\mathrm{Risk}'(\theta^\ast_{\widehat{A}})\ \leq\ \overline{\eps}(s_{A,\widehat{A}}^\ast) + \frac{\mu}{R} \right\rbrace$ and $ \prob^N \left\lbrace {\cal D}:\ \sup_{\prob'}\mathrm{Risk}'(\theta^\ast_{\widehat{A}})\ \leq\ \overline{\eps}(s_{A,\widehat{A}}^\ast) + \frac{\mu}{R} \right \rbrace \geq \prob^N \left\lbrace {\cal D}:\  \mathrm{Risk}_A(\theta^\ast_{\widehat{A}})\ \leq\ \overline{\eps}(s_{A,\widehat{A}}^\ast) \right\rbrace \geq1-\beta$. In this last result, ``$\sup_{\prob'}$'' appears in front of $\mathrm{Risk}'(\theta^\ast_{\widehat{A}})$, showing that the bound on the risk continues to hold when the choice of $\prob'$ is ``adapted'' to the construction of $\theta^\ast_{\widehat{A}}$ based on $\cal D$: the probability of drawing a sample $\cal D$ for which there exists an out-of-sample distribution $\prob'$ leading to a $\mathrm{Risk}'(\theta^\ast_{\widehat{A}})$ exceeding $\overline{\eps}(s_{A,\widehat{A}}^\ast) + \frac{\mu}{R}$ is no more than $\beta$. \hfill$\star$ 	
	\end{remark} 
	
	\begin{example}[convex hull] 
		\label{convex hull-example}
		\begin{figure}[h!]
			\centering  \includegraphics[width=7cm]{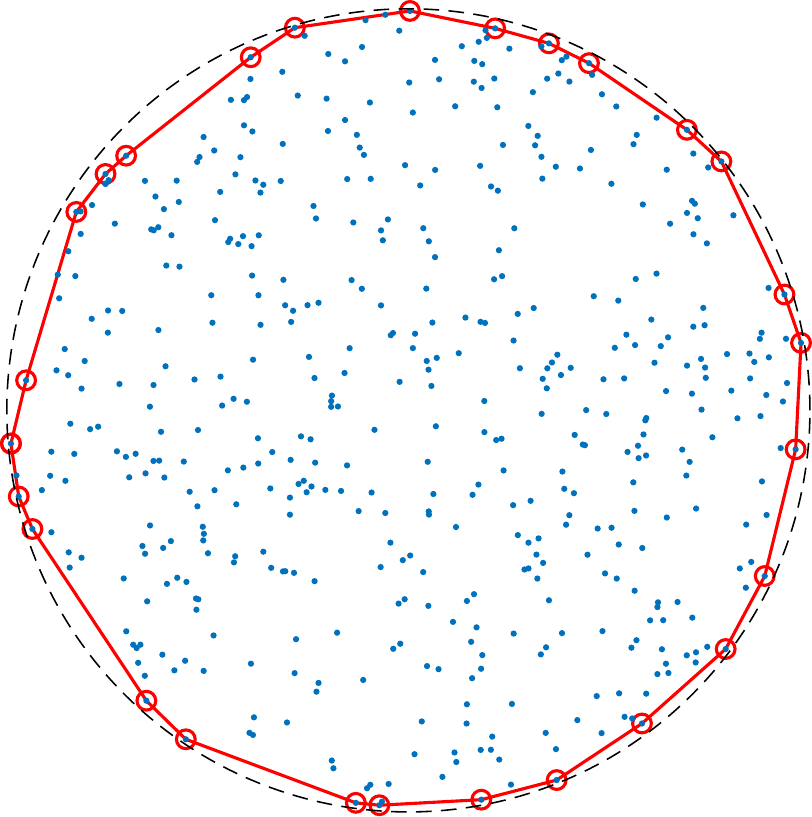}
			\caption{Convex hull of $500$ points in ${\mathbb R}^{2}$.}
			\label{convex hull}
		\end{figure}
		We provide a numerical example to better illustrate the results of this section. $N = 500$ points are drawn i.i.d. from a unitary-radius disk in ${\mathbb R}^{2}$ with a uniform distribution $\prob$, and their \emph{convex-hull}, i.e., the smallest convex set that contains all points, is constructed (see Figure \ref{convex hull}). In this context, we identify the $\delta_i$'s with the points, while a $\theta$ represents a closed convex set in ${\mathbb R}^{2}$. Constructing the convex-hull amounts to solve a problem in the form \eqref{opt-relax}, with $\rho$ large enough, where $\hat{A}_{\delta_i} = \{ \delta_i \}$, and function $f(\theta,\delta)$ is zero when the point $\delta$ is in the set $\theta$ and takes a value that grows linearly with the distance between the point and the convex set when the point is outside.\footnote{See Appendix \ref{appendix convex hull} for a complete formalization of how this problem is framed within the framework of~\eqref{opt-relax}.} \\
		
		Theorem \ref{th:ambiguous} is used to upper bound the out-of-distribution risk of the convex hull (i.e., the probability that a new point lies outside the convex-hull) when the Wasserstein budget is $\mu = 10^{-3}$. We consider $30$ possible choices of $R$, namely $R_i = \mu+2\mu(i-1)$, $i = 1, \ldots, 30$, and set $\beta$ to the value $10^{-3} /30$, which, according to \eqref{confidence-h-cases}, corresponds to a confidence value of $1 - 30\beta = 1 - 10^{-3}$. Figure \ref{out-of-distribution bound} shows the result. 
		\begin{figure}[h!]
			\centering  \includegraphics[width=9cm]{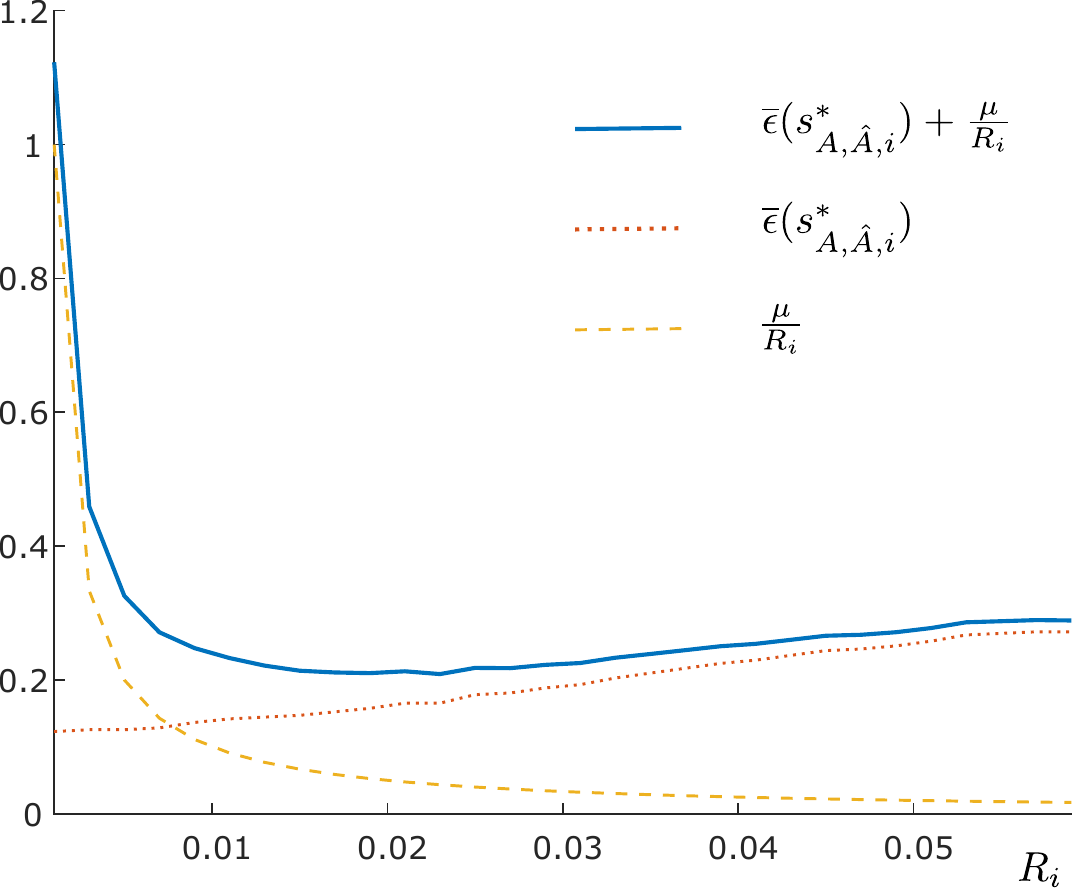}
			\caption{Upper bound (blue solid profile) to the out-of-distribution risk for $30$ values of $R$ as shown in the abscissa ($N = 500$, confidence $= 1 - 10^{-3}$). The bound is formed by two components having opposite trend as $R_i$ increases.}
			\label{out-of-distribution bound}
		\end{figure}
		The minimum is attained for $R_{12}$ with value of the bound equal to $0.2088$. We further compare this bound with the actual out-of-distribution risk obtained in two cases. \\
		
		The first case corresponds to constructing $\prob'$ by moving to the boundary of the disk the mass of $\prob$ that lies within the annulus whose outer boundary is the boundary of the disk and inner boundary selected so as to spend all Wasserstein budget. In other words, the annulus is emptied, and all the probabilistic mass within it is moved to the boundary of the disk. Figure \ref{emptied region} shows the emptied annulus. 
		\begin{figure}[h!]
			\centering  \includegraphics[width=7cm]{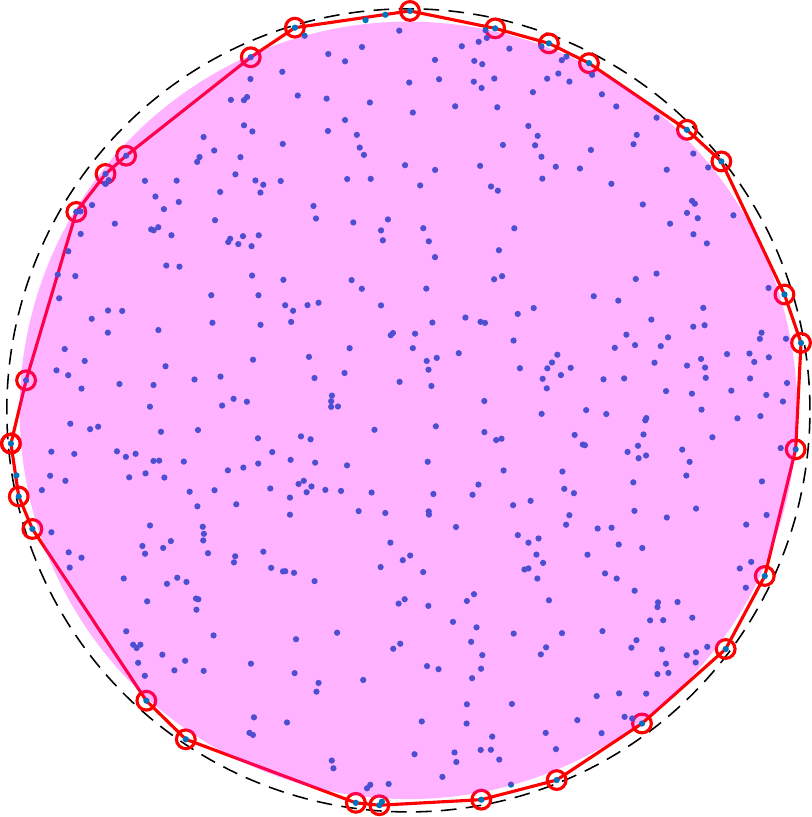}
			\caption{Convex hull and original disk (dashed line). The emptied annulus corresponds to the white peripheral portion of the disk.}
			\label{emptied region}
		\end{figure}
		Since, after this shift, all the moved mass certainly lies outside the convex hull, this $\prob'$ corresponds to a ``bad'' case (though not the worst, which is difficult to precisely envisage). The ensuing risk was calculated to be $0.0719$. \\
		
		An inspection of Figure \ref{emptied region} shows that a large quantity of the shifted mass corresponds to portions of the disk that already lied outside the convex hull, so the corresponding budget is spent fruitlessly. In the attempt to get closer to the worst case, we therefore conceive to only move (along a radial direction) the probabilistic mass in the peripheral part of the convex hull to a location just outside its boundary. This leads to the emptied region shown in Figure \ref{emptied region opt}. 
		\begin{figure}[h!]
			\centering  \includegraphics[width=7cm]{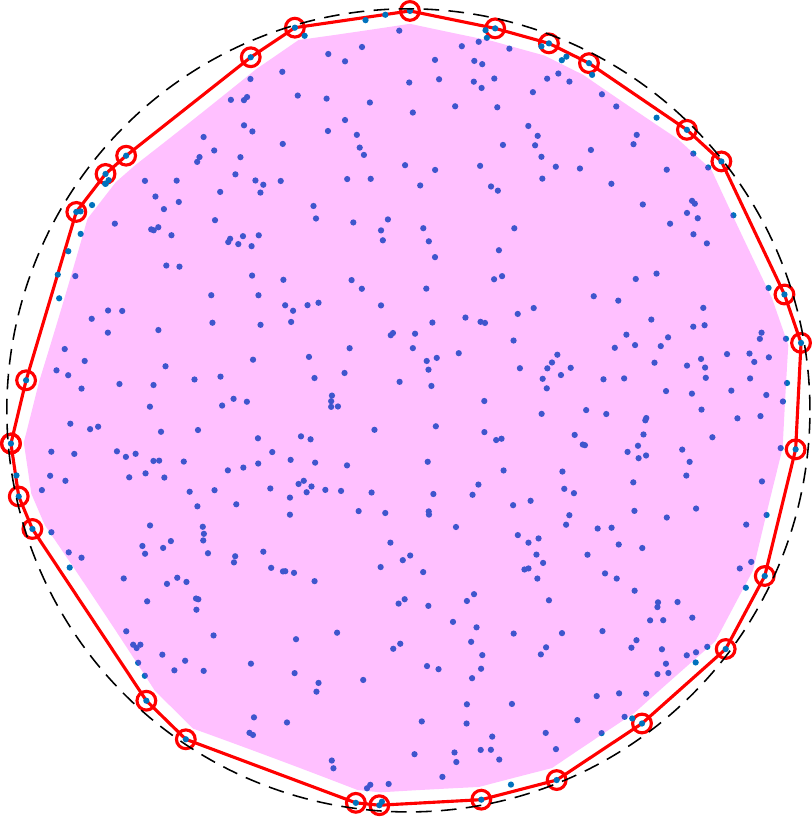}
			\caption{Emptied region obtained by moving radially only the probabilistic mass close to the boundary of the convex hull.}
			\label{emptied region opt}
		\end{figure}
		Note that this case corresponds to selecting a $\prob'$ adapted to the constructed convex hull, which is a valid choice as explained in Remark \ref{remark sup P'}. The ensuing risk turns out to be $0.1178$. In this case, the ratio of the bound of $0.2088$ to the actual risk is below the value of $2$. To appreciate the quality of this result, one should recall that the bound must hold for any $\prob$, while here we have just considered one $\prob$, i.e., the uniform probability distribution, and, moreover, the bound is enforced to hold with high confidence $1 - 10^{-3}$, while here we have just considered one single realization of the $500$ points. \\
		
		In a second experiment, we consider the same setup as described above but change the number of points, which is now $N = 2000$, as well as the Wasserstein budget, which is set to value $10^{-4}$. Both changes lead to a lowering of the risk. Figure \ref{out-of-distribution bound2} shows the profile of the upper bound on the out-of-distribution risk for $50$ possible choices of $R$, namely $R_i = \mu+5\mu(i-1)$, $i = 1, \ldots, 50$, and $\beta = 10^{-4}/50$ (corresponding to a confidence value of $1 - 10^{-4}$). 
		\begin{figure}[h!]
			\centering  \includegraphics[width=9cm]{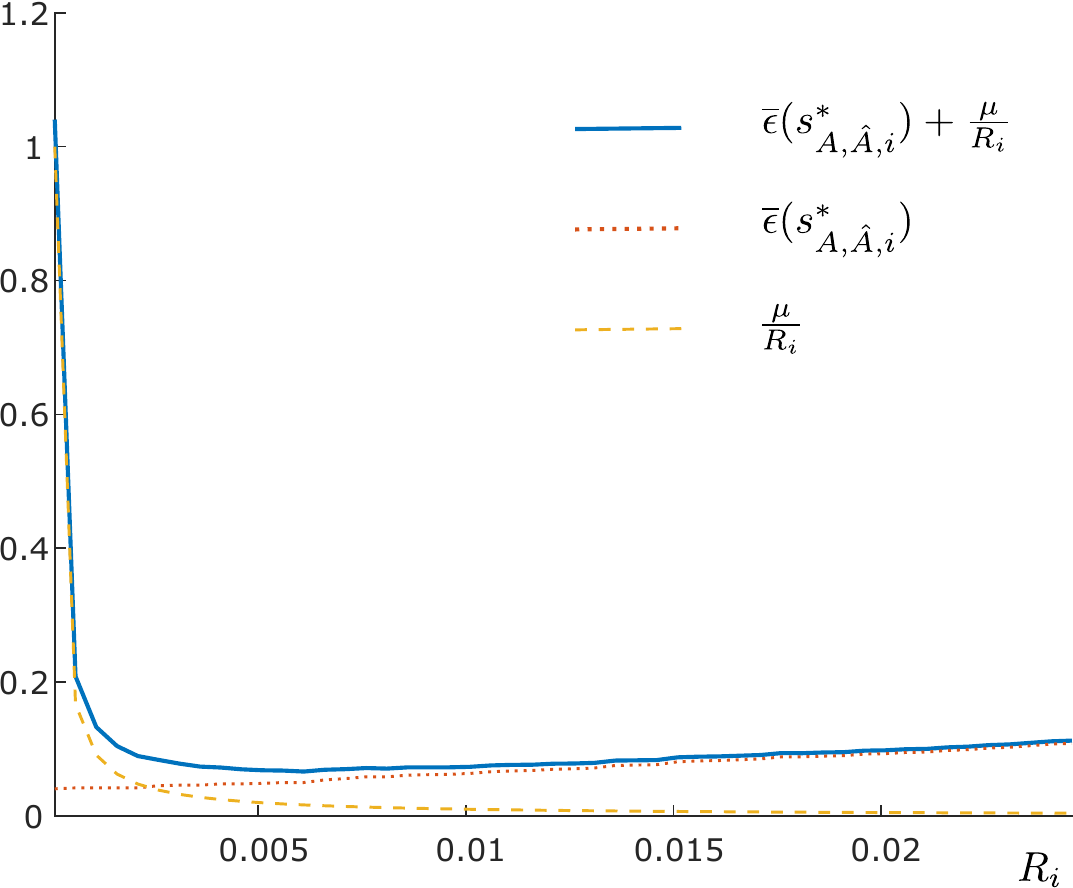}
			\caption{Upper bound (blue solid profile) to the out-of-distribution risk for $50$ values of $R$ as shown in the abscissa ($N = 2000$, confidence $= 1 - 10^{-4}$).}
			\label{out-of-distribution bound2}
		\end{figure} 
		In this case, the minimum is attained for $R_{13}$, with value of the bound equal to $0.0662$. The out-of-distribution risk obtained by moving the mass in the external annulus as described before is $0.0259$, while shifting the mass that lies in the proximity of the boundary of the convex hull gives an out-of-distribution risk of $0.0374$.  \hfill$\star$ 
	\end{example}
	
	\section{Proof of Theorems \ref{main_th_standard} and \ref{main_th_outer}} 
	\label{sec:derivations}
	
	\subsection{Some preliminary facts}
	
	{\bf Notations for SVR.} To make notations consistent with the section of extensions (Section \ref{sec:opt-relax}), also in the case of SVR we shall write $\delta$ in place of $(u,y)$, and $\Delta$ in place of $\mathbb{R}^d\times\mathbb{R}$. Similarly, $\delta_i$ stands for $(u_i,y_i)$, $\tilde{\delta}^{(j)}$ for $(\tilde{u}^{(j)},\tilde{y}^{(j)})
	$ and $\tilde{\delta}_i^{(j)}$ for $(\tilde{u}_i^{(j)},\tilde{y}_i^{(j)})$.
	Moreover, we let 
	$$
	c(\theta) = \gamma + \tau \| w \|^2 
	$$
	and also 
	\begin{equation} \label{def_of_f}
		f(\theta,\delta) = |y - w^\top u - b|-\gamma,
	\end{equation}
	so that the constraints in \eqref{svr}  
	$$
	|\tilde{y}_i^{(j)} - w^\top \tilde{u}_i^{(j)}  - b | - \gamma \leq \xi_i, \ \,\,j=1,\ldots, M; \;  i = 1, \ldots N.
	$$
	are re-written as 
	$$
	f(\theta,\tilde{\delta}_i^{(j)}) \leq \xi_i, \ \,\,j=1,\ldots, M; \;  i = 1, \ldots N. 
	$$
	\\
	{\bf Rationale behind the proof.} The main tool we shall use to establish Theorems \ref{main_th_standard} and \ref{main_th_outer} of this paper is Theorem 2 in reference \cite{Garatti22}. The study in \cite{Garatti22} is concerned with the characterization of the {\em probability of violation}, denoted $V(z^\ast_N)$,\footnote{In \cite{Garatti22}, the term ``risk'' is used to indicate $V(z^\ast_N)$; here, we shall call $V(z^\ast_N)$ the ``probability of violation'' because ``risk'' is used to indicate the risk of a SVR predictor.} of an abstract decision $z^\ast_N$ that is constructed according to an ``umbrella framework'' that encompasses various schemes as special cases. We shall see that the {\em adversarial risk} of the predictor ${\cal P}(\theta^\ast_{\widehat{A}})$ considered in this paper can be exactly related to a specific instance of $V(z^\ast_N)$. Nonetheless, tracing back the setup of the present paper to that of \cite{Garatti22} is nontrivial, and indeed the naive approach of simply identifying $z^\ast_N$ with $\theta^\ast_{\widehat{A}}$ neglects facts that play a central role in the analysis, and does not lead to any meaningful conclusions (this is because in \cite{Garatti22} the decision $z^\ast_N$ is required to satisfy certain conditions -- Assumptions 3 and 4 of \cite{Garatti22} -- that are not satisfied by $\theta^\ast_{\widehat{A}}$). As a consequence, we shall have to carefully introduce a more articulated definition of $z^\ast_N$. 
	\\
	
	A final notice goes to the fact that, to avoid annoying repetitions, the proofs of Theorems 1 and 2 will be carried out simultaneously, with just a quick distinction at the very end. 
	Correspondingly, 
	we refer to complexity as per Definition \ref{def:adv_complex_outer}, which preserves its validity both when $\widehat{A} \subseteq A$ and $\widehat{A} \not\subseteq A$. 
	
	\subsection{The proof}
	
	To set the stage, let $\calZ$ (called the space of decisions) be the set of pairs $(\theta,\calL)$, where $\theta \in \Theta$ and $\calL \in \mathcal{MS}$, where $\mathcal{MS}$ is the collection of all finite multisets of elements of $\Delta$.\footnote{A multiset is an unordered collections of elements that admits repetitions. Thus, for multisets we have e.g. that $\{a,a,b\} = \{a,b,a\} \neq \{a,b\}$.} To any $\delta \in \Delta$, we associate a subset of $\calZ$ defined as follows:
	\begin{equation} \label{eq:Z_delta}
		\calZ_\delta = \left\{ z = (\theta,\calL) \in \calZ: \; f(\theta,\tilde{\delta}) \leq 0, \forall \tilde{\delta} \in A_{\delta} \text{ and }  f(\theta,\tilde{\delta}^{(j)}) \leq 0, \forall \tilde{\delta}^{(j)} \in \widehat{A}_{\delta} \right\}
	\end{equation}
	(in more compact form, $\calZ_\delta = \{ z = (\theta,\calL) \in \calZ: \; A_\delta \cup \widehat{A}_\delta \subseteq \calP(\theta) \}$ -- note that the condition defining $\calZ_\delta$ does not involve the $\calL$ part of $z$). We have the following definition borrowed from  \cite{Garatti22}, Section 5. 
	
	\begin{definition}[Violation and Probability of violation] \label{def:violation}
		A decision $z \in \calZ$ is said to violate a $\delta \in \Delta$ when $z \notin \calZ_\delta$. The probability of violation of $z$ is defined as $V(z) := \prob \left\{\delta\in\Delta : \; z \notin \calZ_\delta \right\}$. \hfill$\star$
	\end{definition}
	\noindent Given the very definition of $\calZ_\delta$ in \eqref{eq:Z_delta}, the probability of violation of $z$ can be written more explicitly as
	$$
	V(z) = \prob \left\{\delta \in \Delta : \; A_{\delta} \cup \widehat{A}_{\delta} \not \subseteq \calP(\theta) \right\}. 
	$$
	The fact the $\calL$ component of $z$ plays no role in the concept of violation justifies expressions like ``$\theta$ violates $\delta$''. Moreover, this fact is key to maintain a connection with $\textnormal{Risk}_A(\theta) = \prob \left\{\delta \in \Delta : \; A_{\delta} \not \subseteq \calP(\theta) \right\}$, which is the quantity we are interested in. Since the following relation holds: $A_{\delta} \not \subseteq \calP(\theta) \Rightarrow  A_{\delta} \cup \widehat{A}_{\delta} \not \subseteq \calP(\theta)$, then we always have that $\textnormal{Risk}_A(\theta) \leq V(z)$. Moreover, when $\widehat{A} \subseteq A$, it holds that $A_{\delta} \cup \widehat{A}_{\delta} = A_{\delta}$, so that the stronger relation $\textnormal{Risk}_A(\theta) = V(z)$ holds. \\ 
	
	For any given $N$ and any sample of elements ${\cal D}=(\delta_1,\ldots,\delta_N)$ from $\Delta^N$, the data-driven decision $z^\ast_N$ is defined as the pair $(\theta^\ast_{\widehat{A}},\calL^\ast)$, where $\theta^\ast_{\widehat{A}}$ is the solution to \eqref{svr} (possibly singled out by a tie-break rule in case of multiple minimizers, as explained after \eqref{svr}), and $\calL^\ast$ is the multiset of the $\delta_i$, $i=1,\ldots,N$, that are violated by $\theta^\ast_{\widehat{A}}$, i.e., those for which 
	$A_{\delta_i} \cup \widehat{A}_{\delta_i} \not \subseteq \calP(\theta^\ast_{\widehat{A}})$
	. When $N = 0$, $z^\ast_0$ is formed by the unconstrained solution to \eqref{svr} and the empty multiset. The map from $\delta_1,\ldots,\delta_N$ to the decision $z^\ast_N$ is indicated by $M_N: \Delta^N \to \calZ$ and the notation $M_N(\delta_1,\ldots,\delta_N)$ is also in use to indicate $z^\ast_N$ when we want to specify the sample $\delta_1,\ldots,\delta_N$ that has generated the decision. \\  
	
	Before proceeding, we also need to recall  from \cite{Garatti22} the notion of {\em support element}: a $\delta_i$ in the sample $\delta_1,\ldots,\delta_N$ is called a {\em support element} if 
	$M_N(\delta_1,\ldots,\delta_N) \neq M_{N-1}(\delta_1,\ldots,\delta_{i-1}, \delta_{i+1},\ldots,\delta_N)$, i.e., removing $\delta_i$ from $\delta_1,\ldots,\delta_N$ changes the decision. \\
	
	The outline of the rest of the proof is as follows. We want to invoke Theorem 2 in \cite{Garatti22} to establish upper and lower bounds for $V(z^\ast_N)$ (from which, bounds for $\textnormal{Risk}_A(\theta)$ can be derived). To apply Theorem 2 in \cite{Garatti22}, we need to verify that the family of maps $M_N$, $N=0,1,\ldots$, satisfies the so-called consistency Assumption 3 of \cite{Garatti22} and the non-degeneracy Assumption 4 of \cite{Garatti22}. The satisfaction of these two assumptions is stated below as Lemma~\ref{main-prop:consistency} and Lemma~\ref{main-prop:nondegeneracy}, respectively. After proving these lemmas, the conclusion will be drawn by leveraging the connections between $\textnormal{Risk}_A(\theta^\ast_{\widehat{A}})$ and $V(z^\ast_N)$. 
	
	\begin{lemma}[Consistency of $M_N$]
		\label{main-prop:consistency} The family of maps $M_N$, $N=0,1,\ldots$, satisfies Assumption 3 of \cite{Garatti22}, namely, the following properties hold:
		\begin{itemize}
			\item[-] \textsc{Permutation invariance:} for every $N$ and every $(\delta_1,\ldots,\delta_N) \in \Delta^N$, given any permutation $(i_1,\ldots,i_N)$ of $(1,\ldots,N)$ it holds that $M_N(\delta_1,\ldots,\delta_N) = M_N(\delta_{i_1},\ldots,\delta_{i_N})$; 
			\item[-] \textsc{Stability in the case of confirmation:} for every integers $N_1$ and $N_2$, if $\delta_1, \ldots, \allowbreak \delta_{N_1}, \delta_{N_1+1}, \ldots, \delta_{N_1+N_2}$ are such that 
			$$
			M_{N_1}(\delta_1,\ldots,\delta_{N_1}) \in \calZ_{\delta_{N_1+i}}, \ \ \forall i \in\{1,\ldots,N_2\}, 
			$$
			then  
			$$
			M_{N_1+N_2}(\delta_1,\ldots,\delta_{N_1},\delta_{N_1+1},\ldots,\delta_{N_1+N_2}) = 	M_{N_1}(\delta_1,\ldots,\delta_{N_1}); 
			$$
			\item[-] \textsc{Responsiveness to contradiction:} for every integers $N_1$ and $N_2$, if $\delta_1,\ldots, \delta_{N_1},\allowbreak \delta_{N_1+1},\ldots,\allowbreak \delta_{N_1+N_2}$ are such that 
			$$
			\exists \, i\in\{1,\ldots,N_2\}: \;\; M_{N_1}(\delta_1,\ldots,\delta_{N_1}) \notin \calZ_{\delta_{N_1+i}},
			$$ 
			then
			$$
			M_{N_1+N_2}(\delta_1,\ldots,\delta_{N_1},\delta_{N_1+1},\ldots,\delta_{N_1+N_2}) 	\neq M_{N_1}(\delta_1,\ldots,\delta_{N_1}).
			$$	
			\hfill$\star$
		\end{itemize}
	\end{lemma}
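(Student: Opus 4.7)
The plan is to verify each of the three consistency properties in turn, exploiting the design of $z^\ast_N = (\theta^\ast_{\widehat{A}}, \calL^\ast)$ as an augmented decision whose second component records which data points are violated. \emph{Permutation invariance} is immediate: the cost and constraints of \eqref{svr} are symmetric in the data indices, and the tie-break rule uses convex functionals of $\theta$ alone, so the optimal $\theta^\ast_{\widehat{A}}$ is unchanged under any reordering of the sample; meanwhile the multiset $\calL^\ast$ is, by its very definition as an unordered collection with multiplicities, invariant under permutation.

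For \emph{stability under confirmation}, I read the hypothesis $M_{N_1}(\delta_1,\ldots,\delta_{N_1}) \in \calZ_{\delta_{N_1+i}}$ via \eqref{eq:Z_delta} as $\widehat{A}_{\delta_{N_1+i}} \cup A_{\delta_{N_1+i}} \subseteq \calP(\theta^\ast_{N_1})$, which yields $f(\theta^\ast_{N_1}, \tilde{\delta}_{N_1+i}^{(j)}) \leq 0$ for every $j$, so the additional constraints of the enlarged program are feasible at $\theta^\ast_{N_1}$ with $\xi_{N_1+i} = 0$. Extending the old optimum by null $\xi_{N_1+i}$'s gives an upper bound on the enlarged optimal value; conversely, any solution of the enlarged program, restricted to the first $N_1$ constraint blocks, is feasible for the smaller one at no greater cost. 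The two optimal values coincide and the sets of optimal $(w,b,\gamma)$ match, so the tie-break selects the same $\theta^\ast_{\widehat{A}}$; since by assumption none of the new points is violated, no copies are added to $\calL^\ast$, which is therefore unchanged.

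For \emph{responsiveness to contradiction}, suppose some $\delta_{N_1+i}$ is violated by $(\theta^\ast_{N_1}, \calL^\ast_{N_1})$, and denote $M_{N_1+N_2}(\ldots) = (\theta^\dagger, \calL^\dagger)$. If $\theta^\dagger \neq \theta^\ast_{N_1}$, there is nothing more to prove; otherwise $\theta^\dagger = \theta^\ast_{N_1}$, and then $\delta_{N_1+i}$ remains violated, contributing by construction one more copy to $\calL^\dagger$ than $\calL^\ast_{N_1}$ had (the latter iterating only over indices up to $N_1$), so that $\calL^\dagger \neq \calL^\ast_{N_1}$ and the decision changes. The main subtlety, which also motivates the augmentation of $z^\ast_N$ by the multiset $\calL^\ast$, lies exactly in this responsiveness step: a newly violated point may fail to move the geometric component $\theta^\ast_{\widehat{A}}$ (for instance when the violation is absorbed by a nonzero $\xi_{N_1+i}$ in the relaxed program), and without the multiset bookkeeping the property would fail; a secondary point of care in the stability step is checking that the tie-break functionals applied to the (matching) sets of optima really do select an identical $\theta^\ast_{\widehat{A}}$.
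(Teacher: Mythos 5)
Your proof is correct and follows essentially the same route as the paper's: permutation invariance is immediate from symmetry, stability is obtained by sandwiching the optimal values of the small and enlarged programs, and responsiveness is handled by the same two-case split on whether $\theta^\ast_{\widehat{A}}$ changes, with the multiset $\calL^\ast$ doing the work when it does not. One small imprecision in the stability step: the sets of optimal $(w,b,\gamma)$ need not \emph{match} — the enlarged program's optimal set is in general only a subset of the smaller one's (namely those old optimizers that also satisfy the new constraints with zero relaxation) — but since the tie-break winner of the smaller program belongs to that subset by hypothesis, the lexicographic tie-break still returns the same $\theta^\ast_{\widehat{A}}$, so the conclusion stands.
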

	\begin{proof}
		In this proof, we use the notation $(\theta^{\ast}_{\widehat{A},N_1},\calL^{\ast}_{N_1})$ to indicate $z^\ast_{N_1} = M_{N_1}(\delta_1,\ldots,\delta_{N_1})$ and $(\theta^{\ast}_{\widehat{A},N_1+N_2},\calL^{\ast}_{N_1+N_2})$ to indicate $z^\ast_{N_1+N_2} = M_{N_1+N_2}(\delta_1,\ldots,\delta_{N_1},\delta_{N_1+1},\ldots,\delta_{N_1+N_2})$. The three properties are proved in turn. \\
		\\
		\noindent \textsc{Permutation invariance:} this is obvious, because the order in which data points appear in the sample $\cal D$ does not affect $z^\ast_N$.\\
		\\
		\textsc{Stability in the case of confirmation:} consider the optimization program
		\begin{align} \label{pb:aux}
			\min_{\theta \; \atop \xi_i \geq 0, i=1,\ldots,N_1+N_2} & \quad  c(\theta) + \rho \sum_{i=1}^{N_1+N_2} \xi_i \\
			\textrm{\rm subject to:} & \quad f(\theta,\tilde{\delta}_i^{(j)}) \leq \xi_i, \ \  j=1,\ldots, M;\,\,i = 1, \ldots ,N_1, \nonumber
		\end{align}
		Problem \eqref{pb:aux} is the same as problem \eqref{svr} except that $N$ has been replaced by $N_1$ and that there are extra variables $\xi_{N_1+1},\ldots,\xi_{N_1+N_2}$, which however are ineffective since they only appear in the cost and are set to zero at optimum. Thus, the solution to \eqref{pb:aux} is $(\theta^{\ast}_{\widehat{A},N_1},\xi^{\ast}_{\widehat{A},N_1,1},\ldots,\xi^{\ast}_{\widehat{A},N_1,N_1},0,\ldots,0)$, i.e., it is the solution to \eqref{svr} with $N_1$ in place of $N$ complemented with extra variables $\xi_i$ that are zero for any $i = N_1+1, \ldots, N_1+N_2$. Now, if the premise formulated in ``Stability in the case of confirmation'' is true, then $(\theta^{\ast}_{\widehat{A},N_1},\xi^{\ast}_{\widehat{A},N_1,1},\ldots,\xi^{\ast}_{\widehat{A},N_1,N_1},0,\ldots,0)$ is also the solution to \eqref{svr} with $N_1+N_2$ in place of $N$ because \eqref{svr} with $N_1+N_2$ in place of $N$ is the same as program \eqref{pb:aux} with the addition of constraints that are already satisfied by the solution to \eqref{pb:aux} (indeed, condition $M_{N_1}(\delta_1,\ldots,\delta_{N_1}) \in \calZ_{\delta_{N_1+i}}$ yields $f(\theta^{\ast}_{\widehat{A},N_1},\tilde{\delta}^{(j)}_{N_1+i}) \leq 0$ for all $j=1,\ldots,M$). This implies that $\theta^{\ast}_{\widehat{A},N_1+N_2}$ and $\theta^{\ast}_{\widehat{A},N_1}$ coincide. Once this is recognized, then $\calL^{\ast}_{N_1+N_2} = \calL^{\ast}_{N_1}$ easily follows because none of the $\delta_{N_1+i}$ are violated by $\theta^{\ast}_{\widehat{A},N_1+N_2} = \theta^{\ast}_{\widehat{A},N_1}$ and, therefore, none of them have to be placed in  $\calL^{\ast}_{N_1+N_2}$. This shows that $z^\ast_{N_1+N_2} = z^\ast_{N_1}$ and closes this point. \\
		\\
		\textsc{Responsiveness to contradiction:} after adding $\delta_{N_1+1},\ldots,\delta_{N_1+N_2}$ to $\delta_1,\ldots,\delta_{N_1}$, two cases may arise: either $\theta^{\ast}_{\widehat{A},N_1+N_2} \neq \theta^{\ast}_{\widehat{A},N_1}$ or $\theta^{\ast}_{\widehat{A},N_1+N_2} = \theta^{\ast}_{\widehat{A},N_1}$. In the first case, $z^\ast_{N_1+N_2}$ is different from $z^\ast_{N_1}$ because the $\theta$ components are not the same. If instead $\theta^{\ast}_{\widehat{A},N_1+N_2} = \theta^{\ast}_{\widehat{A},N_1}$, then the $\delta_{N_1+i}$'s that are violated by $\theta^{\ast}_{\widehat{A},N_1}$ must enter the multiset $\calL^{\ast}_{N_1+N_2}$ because they are also violated by $\theta^{\ast}_{\widehat{A},N_1+N_2} = \theta^{\ast}_{\widehat{A},N_1}$. Under the premise formulated in ``Responsiveness to contradiction'', this happens for at least one of the $\delta_{N_1+i}$, and this implies that $\calL^{\ast}_{N_1+N_2} \neq \calL^{\ast,}_{N_1}$. Thus, in any case we have that $z^\ast_{N_1+N_2} \neq z^\ast_{N_1}$ and this concludes this last point. 
	\end{proof}
	
	Before moving to Lemma \ref{main-prop:nondegeneracy}, we state a simple proposition, which is instrumental to the proof of the lemma. 
	
	\begin{proposition} \label{prop:non-acc}
		Assumption \ref{non-acc} implies that: for every $\theta$, it holds that 
		\begin{equation}
			\label{relation non accumulation}
			\prob \left\{\delta : \; \exists \, \tilde{\delta}^{(j)} \in \widehat{A}_\delta \text{ such that } f(\theta,\tilde{\delta}^{(j)}) = 0 \right\} = 0.
		\end{equation} 	\hfill $\star$
	\end{proposition}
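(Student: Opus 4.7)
The plan is to reduce the problem to a statement about a one-dimensional event in $y$ given $u$, then invoke Assumption~\ref{non-acc} together with a union bound over the finitely many points in $\widehat{A}_\delta$. I would first fix $\theta = (w, b, \gamma)$ and a single index $j \in \{1,\ldots,M\}$. Recalling that $\widehat{A}_{(u,y)} = (u,y) + \widehat{A}$ with $\widehat{A} = \{(d_u^{(j)}, d_y^{(j)})\}_{j=1}^M$, the event $f(\theta,\tilde{\delta}^{(j)}) = 0$ reads
\begin{equation*}
|(y + d_y^{(j)}) - w^\top (u + d_u^{(j)}) - b| = \gamma,
\end{equation*}
i.e., $y \in \{\,w^\top u + w^\top d_u^{(j)} + b + \gamma - d_y^{(j)},\ \ w^\top u + w^\top d_u^{(j)} + b - \gamma - d_y^{(j)}\,\}$, which for every fixed $u$ is a set of at most two real numbers depending only on $u$ (and on the fixed $\theta$ and $j$).

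Next I would condition on $u$. By Assumption~\ref{non-acc}, the regular conditional distribution of $y$ given $u$ admits a density with probability one; consequently, for $\prob_u$-almost every $u$, the conditional probability that $y$ lies in any prescribed set of at most two points is zero. Hence
\begin{equation*}
\prob\{\delta :\ f(\theta,\tilde{\delta}^{(j)}) = 0\}
= \int \prob\{y \in \{c_1^{(j)}(u), c_2^{(j)}(u)\} \mid u\}\, d\prob_u(u) = 0,
\end{equation*}
where $c_1^{(j)}(u), c_2^{(j)}(u)$ are the two affine functions of $u$ identified above. A union bound over $j = 1, \ldots, M$ (a finite set) then yields
\begin{equation*}
\prob\{\delta :\ \exists\, \tilde{\delta}^{(j)} \in \widehat{A}_\delta \text{ s.t. } f(\theta,\tilde{\delta}^{(j)}) = 0\}
\leq \sum_{j=1}^M \prob\{\delta :\ f(\theta,\tilde{\delta}^{(j)}) = 0\} = 0,
\end{equation*}
which is the desired conclusion.

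The argument is essentially a routine measure-theoretic manipulation, and I do not anticipate any serious obstacle. The only point requiring a small amount of care is handling the ``with probability $1$'' qualifier in Assumption~\ref{non-acc}: the set of ``bad'' $u$'s for which the regular conditional distribution fails to admit a density has $\prob_u$-measure zero and therefore does not contribute to the integral. It is also worth noting (as foreshadowed in the footnote of Section~\ref{sec:toy_ex}) that nowhere in the argument does one use that $\widehat{A}$ is the same for all $\delta$; the proof goes through verbatim when $\widehat{A}_\delta$ depends on $u$, since for each fixed $u$ the set of forbidden $y$-values is still finite.
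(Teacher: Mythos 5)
Your proposal is correct and follows essentially the same route as the paper's own proof: a union bound over the finitely many $j$, followed by conditioning on $u$ so that the event $f(\theta,\tilde{\delta}^{(j)})=0$ becomes ``$y$ equals one of two values determined by $u$,'' which has zero conditional probability under Assumption~\ref{non-acc}. Your closing observation that the argument never uses the invariance of $\widehat{A}$ with $\delta$ matches exactly the point the paper makes in the footnote of Section~\ref{sec:toy_ex}.
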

	\begin{proof}
		The following chain of equalities holds true 
		\begin{eqnarray*}
			\lefteqn{\prob \left\{ \delta : \; \exists \, \tilde{\delta}^{(j)} \in \widehat{A}_\delta \text{ such that } f(\theta,\tilde{\delta}^{(j)}) = 0 \right\} } \\
			&\leq & \sum_{j=1}^M \prob \left\{ f(\theta,\tilde{\delta}^{(j)}) = 0 \right\} \\ 
			& = & \sum_{j=1}^M \prob \left\{ |\tilde{y}^{(j)} - w^\top \tilde{u}^{(j)}  -b|-\gamma = 0 \right\}  \\ 
			& = & \sum_{j=1}^M \mathbb{E} \left[ \prob \left\{ |\tilde{y}^{(j)} - w^\top \tilde{u}^{(j)}  -b|-\gamma = 0  \; \Big| \; u \right\} \right] \\
			& = & \sum_{j=1}^M \mathbb{E} \left[ \prob \left\{ y =    - \tilde{d}_y^{(j)} + w^\top u + w^\top \tilde{d}_u^{(j)}  + b \pm \gamma \; \Big| \; u \right\} \right]. 
		\end{eqnarray*}
		The last term is equal to zero because, for each $j$ and any fixed $u$, quantities $- \tilde{d}_y^{(j)} + w^\top u + w^\top \tilde{d}_u^{(j)}  + b - \gamma$ and $- \tilde{d}_y^{(j)} + w^\top u + w^\top \tilde{d}_u^{(j)}  + b + \gamma$ are constant, and, thanks to Assumption~\ref{non-acc}, the conditional probability that $y$ takes any predetermined value given $u$ is zero.
	\end{proof}
	
	\begin{lemma}[Non-degeneracy of $M_N$ and complexity evaluation] \label{main-prop:nondegeneracy}
		The family of maps $M_N$, $N=0,1,\ldots$, satisfies Assumption 4 in \cite{Garatti22}, namely: for every $N$, with probability $1$ the decision $M_N(\delta_1,\ldots,\delta_N)$ coincides with the decision $M_k(\delta_{i_1},\ldots,\delta_{i_k})$, where $\delta_{i_1},\ldots,\delta_{i_k}$ are the support elements of $M_N(\delta_1,\ldots,\delta_N)$. Moreover, with probability $1$ the number of support elements of $M_N(\delta_1,\ldots,\delta_N)$ is equal to the {\em adversarial complexity} $s^\ast_{A,\widehat{A}}$ (Definition \ref{def:adv_complex_outer}). \hfill$\star$
	\end{lemma}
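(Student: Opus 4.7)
The plan is to characterize the support elements of $M_N$ precisely and then derive both claims of the lemma from this characterization. Recall that $z^\ast_N = (\theta^\ast_{\widehat{A}}, \calL^\ast)$, where $\calL^\ast$ is the multiset of those $\delta_i$ with $A_{\delta_i}\cup\widehat{A}_{\delta_i}\nsubseteq \calP(\theta^\ast_{\widehat{A}})$. A first observation is that $\delta_i \in \calL^\ast$ iff condition (i) or (iii) of Definition \ref{def:adv_complex_outer} holds, and any such $\delta_i$ is automatically a support element because its removal deletes an entry from $\calL^\ast$.

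In the remaining case $\delta_i \notin \calL^\ast$, the only way $\delta_i$ can be a support element is by perturbing $\theta^\ast_{\widehat{A}}$ upon removal. Since \eqref{svr} is a convex QP, this is governed by the KKT conditions: the data point $\delta_i$ influences $\theta^\ast_{\widehat{A}}$ only if one of its constraints $f(\theta,\tilde{\delta}_i^{(j)})\leq \xi_i$ is active at the optimum with a strictly positive Lagrange multiplier. The assumption $\delta_i \notin \calL^\ast$ forces $\xi_i^\ast = 0$, so any such active constraint must satisfy $f(\theta^\ast_{\widehat{A}},\tilde{\delta}_i^{(j)}) = 0$, which is precisely condition (ii). To establish the converse, I would compare $\theta^\ast_{\widehat{A}}$ with $\theta^{\ast,-i}$, the optimum of \eqref{svr} computed after removing $\delta_i$. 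Since $\theta^{\ast,-i}$ depends only on the remaining $N-1$ i.i.d. points, it is independent of $\delta_i$, so by conditioning on the other sample points and applying Proposition \ref{prop:non-acc} we obtain $\prob\{\exists j:\ f(\theta^{\ast,-i},\tilde{\delta}_i^{(j)}) = 0\} = 0$. A union bound over $i = 1,\ldots,N$, together with the tie-breaking rule of \eqref{svr} (which pins down the optimizer uniquely as a deterministic function of the sample), shows that up to a null event, condition (ii) forces $\theta^{\ast,-i} \neq \theta^\ast_{\widehat{A}}$, hence $\delta_i$ is a support element.

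Both conclusions of the lemma then follow from this characterization. The count of support elements equals the number of points satisfying at least one of (i), (ii), (iii), which is precisely $s^\ast_{A,\widehat{A}}$. For the non-degeneracy claim, removing all non-support elements simultaneously leaves the KKT system of \eqref{svr} intact (the dropped constraints carry zero Lagrange multipliers and hence do not contribute to the stationarity condition, while the remaining KKT conditions are unchanged) and preserves $\calL^\ast$ (since every member of $\calL^\ast$ is a support element and is therefore retained), so $z^\ast_N$ agrees with the decision delivered by $M_k$ on the support elements alone.

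The main obstacle is the probabilistic step in the (ii)-only case: ruling out the degenerate situation in which a non-support data point happens by coincidence to lie on the boundary of $\calP(\theta^\ast_{\widehat{A}})$. This requires combining the leave-one-out independence with Proposition \ref{prop:non-acc} and carefully handling the tie-breaking convention so that $\theta^\ast_{\widehat{A}}$ is a well-defined measurable function of the data, a subtlety that parallels -- but, unlike the non-adversarial treatment of \cite{Campi21ml}, must now be run for each of the $M$ perturbed copies of every training point.
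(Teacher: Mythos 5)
Your proposal is correct and takes essentially the same route as the paper: the same three-way classification of data points (all constraints strictly inactive, violating as in (i)/(iii), and boundary-touching as in (ii)), the same leave-one-out independence argument combined with Proposition \ref{prop:non-acc} to show that the boundary-touching points are support elements with probability one, and the same wrap-up that dropping the inactive constraints of non-support points leaves both $\theta^\ast_{\widehat{A}}$ and $\calL^\ast$ unchanged. The only cosmetic differences are that you phrase the ``inactive constraints can be dropped'' step via KKT multipliers where the paper argues directly from convexity, and that for a point in $\calL^\ast$ whose removal alters $\theta^\ast_{\widehat{A}}$ the decision changes through its $\theta$ component rather than by deleting an entry of $\calL^\ast$ (the paper makes this case split explicit), neither of which affects correctness.
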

	\begin{proof}
		The proof is obvious when $N=0$ since, when there are no data points, there are no support elements either, and the statement of the lemma boils down to the tautology $M_0 = M_0$. \\
		\\
		Consider thus the case $N \geq 1$. We want to precisely characterize the support elements of $z^\ast_N$. \\
		\\
		Firstly, notice that, for all $(\delta_1,\ldots,\delta_N) \in \Delta^N$, it is always the case that all the $\delta_i$'s such that $f(\theta^\ast_{\widehat{A}},\tilde{\delta}_i) \leq 0$ for all $\tilde{\delta}_i \in A_{\delta_i}$ and $f(\theta^\ast_{\widehat{A}},\tilde{\delta}_i^{(j)}) < 0$ for all $\tilde{\delta}_i^{(j)} \in \widehat{A}_{\delta_i}$ are not support elements of $z^\ast_N$. The reason for this is that each of these $\delta_i$'s corresponds to $M$ constraints in \eqref{svr} that are non-active at optimum. 
		Thus, owing to convexity, if $\delta_i$ is removed, then $\theta^\ast_{\widehat{A}}$ does not change; consequently, $\calL^\ast$ does not change either because $\delta_i$ was not included in the $\calL^\ast$ constructed from $\delta_1,\ldots,\delta_N$. This shows that  $M_{N-1}(\delta_1,\ldots,\delta_{i-1},\delta_{i+1},\ldots,\delta_N) = M_{N}(\delta_1,\ldots,\delta_N)$.\\
		\\
		Secondly, all the $\delta_i$'s such that $f(\theta^\ast_{\widehat{A}},\tilde{\delta}_i) > 0$ for at least  one $\tilde{\delta}_i \in A_{\delta_i}$ or $f(\theta^\ast_{\widehat{A}},\tilde{\delta}_i^{(j)}) > 0$ for at least one $\tilde{\delta}_i^{(j)} \in \widehat{A}_{\delta_i}$ are always support elements of $z^\ast_N$. As a matter of fact, when one of these $\delta_i$'s is removed from $\delta_1,\ldots,\delta_N$, then either $\theta^\ast_{\widehat{A}}$ changes or, if $\theta^\ast_{\widehat{A}}$ does not change, then $\calL^\ast$ has to change because there is one less $\delta_i$ that was previously included in the $\calL^\ast$ constructed from $\delta_1,\ldots,\delta_N$. In both cases, $M_{N-1}(\delta_1,\ldots,\delta_{i-1},\delta_{i+1},\ldots,\delta_N) \neq M_{N}(\delta_1,\ldots,\delta_N)$. \\
		\\
		The only case left is when a $\delta_i$ is such that:  $f(\theta^\ast_{\widehat{A}},\tilde{\delta}_i) \leq 0$ for all $\tilde{\delta}_i \in A_{\delta_i}$ and $f(\theta^\ast_{\widehat{A}},\tilde{\delta}_i^{(j)}) \leq 0$ for all $\tilde{\delta}_i^{(j)} \in \widehat{A}_{\delta_i}$, but
		\begin{equation} \label{active-condition}
			f(\theta^\ast_{\widehat{A}},\tilde{\delta}_i^{(j)}) = 0 \text{ for at least one } \tilde{\delta}_i^{(j)} \in \widehat{A}_{\delta_i}.
		\end{equation}
		It is claimed that these $\delta_i$'s are support elements with probability $1$. This fact is proven by contradiction: suppose that one such $\delta_i$ is not a support element, i.e., $M_{N-1}(\delta_1,\ldots,\delta_{i-1},\allowbreak \delta_{i+1},  \ldots,\delta_N) = M_{N}(\delta_1,\ldots,\delta_N)$. This implies that $\theta^{\ast,(i)}_{\widehat{A}} = \theta^\ast_{\widehat{A}}$, where $\theta^{\ast,(i)}_{\widehat{A}}$ is the solution to the optimization program obtained from \eqref{svr} when the $\xi_i$ variable and the constraints corresponding to $\delta_i$ are discarded. In view of \eqref{active-condition}, we then have $f(\theta^{\ast,(i)}_{\widehat{A}},\tilde{\delta}_i^{(j)}) = 0 \text{ for at least one } \tilde{\delta}_i^{(j)} \in \widehat{A}_{\delta_i}$, which, however, only occurs with probability zero, owing to Proposition \ref{prop:non-acc} and the independence of $\delta_i$ from $\delta_1,\ldots,\delta_{i-1},\delta_{i+1},\ldots,\delta_N$.\footnote{The reason why independence is advocated is that $\theta^{\ast,(i)}_{\widehat{A}}$ is constructed from $\delta_1,\ldots,\delta_{i-1},\delta_{i+1},\ldots,\delta_N$ and, owing to independence, $\theta^{\ast,(i)}_{\widehat{A}}$ can be treated as deterministic (as is in Proposition \ref{prop:non-acc}) when considering the variability of $\delta_i$.} This shows that the $\delta_i$'s considered in this last,  third, case are all of support with probability $1$. \\
		\\
		Wrapping up, we have seen that, with probability $1$, the support elements $\delta_{i_1},\ldots,\delta_{i_k}$ of $\delta_1,\ldots,\delta_N$ are the $\delta_i$'s that satisfy conditions (i)-(iii) in Definition \ref{def:adv_complex_outer}; the number of these elements is $s^\ast_{A,\widehat{A}}$. Moreover, it holds that $M_N(\delta_1,\ldots,\delta_N) = M_k(\delta_{i_1},\ldots,\delta_{i_k})$ because removing from \eqref{svr} all constraints but those given by $\delta_{i_1},\ldots,\delta_{i_k}$ corresponds to dropping constraints that are non-active at the optimum: this leaves $\theta^\ast_{\widehat{A}}$ unaltered and  also $\calL^\ast$ does not change because in $\delta_{i_1},\ldots,\delta_{i_k}$ there are all the $\delta_i$'s that contribute to forming $\calL^\ast$ when all the $\delta_1,\ldots,\delta_N$ are in place. \\
		\\
		This concludes the proof of Lemma \ref{main-prop:nondegeneracy}. 
	\end{proof}
	
	We are now in the position to conclude the proof of Theorems \ref{main_th_standard} and \ref{main_th_outer}. Lemmas~\ref{main-prop:consistency} and~\ref{main-prop:nondegeneracy} show  that the assumptions of Theorem 2 of \cite{Garatti22} are verified, and an application of this theorem, along with the fact that the number of support elements of $z^\ast_N$ is equal with probability~$1$ to $s^\ast_{A,\widehat{A}}$, yields
	\begin{equation} \label{guaranteedviolation}
		\prob^N \left\{ \underline{\eps}(s^\ast_{A,\widehat{A}}) \leq V(z^\ast_N)\leq \overline{\eps}(s^\ast_{A,\widehat{A}}) \right\} \geq  1-\beta. 
	\end{equation}
	As we have already noticed, when $\widehat{A} \subseteq A$ it holds that $\textnormal{Risk}_A(\theta) = V(z)$ for every $z$, from which we have that $\textnormal{Risk}_A(\theta^\ast_{\widehat{A}}) = V(z^\ast_N)$ for every $\delta_1,\ldots,\delta_N$. This means that \eqref{guaranteedviolation} can be rewritten as
	$$
	\prob^N \left\{ \underline{\eps}(s^\ast_{A,\widehat{A}}) \leq \textnormal{Risk}_A(\theta^\ast_{\widehat{A}}) \leq \overline{\eps}(s^\ast_{A,\widehat{A}}) \right\} \geq  1-\beta,
	$$
	which proves Theorem \ref{main_th_standard}. \\
	
	When instead $\widehat{A} \not \subseteq A$, the weaker relation holds that $\textnormal{Risk}_A(\theta^\ast_{\widehat{A}}) \leq V(z^\ast_N)$ for every $\delta_1,\ldots,\delta_N$. 
	Hence, from \eqref{guaranteedviolation} we obtain 
	$$
	\prob^N \left\{ \textnormal{Risk}_A(\theta^\ast_{\widehat{A}}) \leq \overline{\eps}(s^\ast_{A,\widehat{A}}) \right\} \geq \prob^N \left\{ V(z^\ast_N) \leq \overline{\eps}(s^\ast_{A,\widehat{A}}) \right\} \geq  1-\beta,
	$$
	thus proving Theorem \ref{main_th_outer}. \qed
	
	\section{Proof of Theorems \ref{th:general1} and \ref{th:general2}} 
	\label{sec:derivations-extension} 
	
	At the beginning of the proof of Theorems \ref{main_th_standard} and \ref{main_th_outer}, we have been well-advised to introduce a notation that has general validity and applies to the case of Theorems~\ref{th:general1} and~\ref{th:general2} as well. As a consequence, the proof of Theorems \ref{main_th_standard} and \ref{main_th_outer} immediately extends to cover Theorems \ref{th:general1} and \ref{th:general2} under the notice that: (i) instead of $\calP(\theta)$ one writes $\{ \delta \in \Delta: \; f(\theta,\delta) \leq 0\}$; (ii) $\widehat{A} \subseteq A$ is replaced by $\widehat{A}_\delta \subseteq A_\delta$ for all $\delta \in \Delta$; (iii) Proposition \ref{prop:non-acc} is skipped and, whenever Proposition \ref{prop:non-acc} is invoked, Assumption \ref{non-acc-general} is used instead. \qed
	
	
	\section{Acknowledgments}
		
	The research presented in this article has been partly supported by FAIR (Future Artificial Intelligence Research) project, funded by the NextGenerationEU program within the PNRR-PE-AI scheme (M4C2, Investment 1.3, Line on Artificial Intelligence), by the PRIN 2022 project 2022RRNAEX ``The Scenario Approach for Control and Non-Convex Design'' (CUP: D53D23001440006), funded by the NextGeneration EU program (Mission 4, Component 2, Investment 1.1), and by the PRIN PNRR project  P2022NB77E “A data-driven cooperative framework for the management of distributed energy and water resources” (CUP: D53D23016100001), funded by the NextGeneration EU program (Mission 4, Component 2, Investment 1.1).

	\appendix
	
	\newpage
	
	\section{Framing the construction of the convex hull of points in ${\mathbb R}^{2}$ within the setup of \eqref{opt-relax}}\label{appendix convex hull} 
	Paper \cite{Ratdstrom1952} proves that the set of closed convex sets in ${\mathbb R}^{2}$ (with Minkowski sum, $K_1 + K_2 = \{k_1 + k_2 \mbox{ with } k_1 \in K_1, k_2 \in K_2\}$ and product by  a scalar defined as $\lambda K = \{\lambda k, \lambda \in {\mathbb R}, k \in K\}$) can be embedded as a convex cone in a real linear vector space. In the formulation \eqref{opt-relax}, this cone corresponds to the domain $\Theta$. In what follows, we show that function  $f(\theta,\delta) = \min_{x \in \theta} \mathrm{dist}(x,\delta)$, which coincides with the function that ``is zero when the point $\delta$ is in the convex set $\theta$, and takes a value that grows linearly with the distance between the point and the convex set when the point is outside'', is convex for any $\delta$, and so is the perimeter of the convex set $\theta$, which we take as cost functional $c(\theta)$ (this fully aligns the construction in Example \ref{convex hull-example} with the setup of \eqref{opt-relax}; the fact that these choices lead to constructing the convex hull is instead left as an exercise to the reader). Convexity of $c(\theta)$ follows from the fact that the perimeter is in fact linear in $\theta$, see, e.g., point 4-8 in \cite{YaglomBoltyanskii1961}. Instead, the convexity of $f(\theta,\delta)$ follows from this chain of equations: $f(\alpha\theta_1 + (1-\alpha)\theta_2,\delta) = \min_{x \in \alpha\theta_1 + (1-\alpha)\theta_2} \mbox{dist}(x,\delta) = \min_{x_1 \in \theta_1, x_2 \in \theta_2} \mbox{dist}(\alpha x_1 + (1-\alpha)x_2,\delta) \leq \min_{x_1 \in \theta_1, x_2 \in \theta_2} \left[ \alpha \mbox{dist}(x_1,\delta) + (1-\alpha)\mbox{dist}(x_2,\delta) \right] = \alpha \min_{x_1 \in \theta_1} \mbox{dist}(x_1,\delta) + (1-\alpha)\min_{x_2 \in \theta_2} \mbox{dist}(x_2,\delta) = \alpha f(\theta_1,\delta) + (1-\alpha)f(\theta_2,\delta)$.


\bibliographystyle{plain}
\bibliography{adversarial_arxiv}

\end{document}